\newtheorem{theorem}{Theorem}
\newtheorem{definition}[theorem]{Definition}
\newtheorem{proposition}[theorem]{Proposition}
\newtheorem{remark}[theorem]{Remark}
\newtheorem{lemma}[theorem]{Lemma}
\newtheorem{example}[theorem]{Example}
\newtheorem{approximation problem}[theorem]{Approximation problem}
\newtheorem{corollary}[theorem]{Corollary}
\newtheorem{claim}[theorem]{Claim}
\newcommand{\ip}[2]{\left\langle#1,#2\right\rangle}
\newcommand{\abs}[1]{\left|#1\right|}
\newcommand{\norm}[1]{\left\|#1\right\|}
\def\tf{\tilde{f}}
\def\d{\delta}
\def\e{\epsilon}
\def\l{\lambda}
\def\k{\kappa}
\def\a{\alpha}
\def\NN{\mathbb{N}}
\def\cF{\mathcal{F}}
\def\cL{\mathcal{L}}
\def\CC{\mathbb{C}}
\def\NN{\mathbb{N}}
\def\ZZ{\mathbb{Z}}
\def\RR{\mathbb{R}}
\def\DD{\boldsymbol{\Delta}}
\DeclareMathOperator*{\argmax}{argmax}
\title{Transferability of Spectral Graph Convolutional Neural Networks}
\author{%
Ron Levie$^*$, Wei Huang$^{\dag}$, Lorenzo Bucci$^{\dag}$, Michael Bronstein$^{\dag \S \ddag}$, Gitta Kutyniok$^{* \P}$ \vspace{2mm} \\ 
 $*$ Ludwig Maximilian University of Munich, $\dag$Università della Svizzera italiana,  \\
$\P$University of Troms{\o} , ${\ddag}$Imperial College London, $\S$Twitter
}
\date{\vspace{-5ex}}
\begin{document}

\maketitle

\begin{abstract}
This paper focuses on spectral graph convolutional neural networks (ConvNets), where filters are defined as elementwise multiplication in the frequency domain of a graph. In machine learning settings where the dataset consists of signals defined on many different graphs, the trained ConvNet should generalize to signals on graphs unseen in the training set. It is thus important to transfer ConvNets between graphs. Transferability, which is a certain type of generalization capability, can be loosely defined as follows: if two graphs describe the same phenomenon, then a single filter or ConvNet should have similar repercussions on both graphs. This paper aims at debunking the common misconception that spectral filters are not transferable. We show that if two graphs discretize the same ``continuous'' space, then a spectral filter or ConvNet has approximately the same repercussion on both graphs. Our analysis is more permissive than the standard analysis. Transferability is typically described as the robustness of the filter to small graph perturbations and re-indexing of the vertices. Our analysis accounts also for large graph perturbations. We prove transferability between graphs that can have completely different dimensions and topologies, only requiring that both graphs discretize the same underlying space in some generic sense.
\end{abstract}

\section{Introduction}

The success of convolutional neural networks (ConvNets) on Euclidean domains ignited an interest in recent years in extending these methods to graph structured data.  In a standard ConvNet, the network receives as input a signal defined over a Euclidean rectangle, and at each layer applies a set of convolutions/filters on the outputs of the previous layer, a non linear activation function, and, optionally, pooling. 
 A graph ConvNet has the same architecture, with the only difference that now signals are defined over the vertices of graph domains, and not Euclidean rectangles. Graph structured data is ubiquitous in a range of applications, and can represent 3D shapes, molecules, social networks, point clouds, and citation networks to name a few. 
 
 In a machine learning setting, the general architecture of the ConvNet is fixed, but the specific filters to use in each layer are free parameters. In training, the filter coefficients are optimized to minimize some loss function. In some situations, both the graph and the signal defined on the graph are variables in the input space of the ConvNet. Namely, the dataset consists of many different graphs, and many different signals on these graphs. We call such a scenario a \textbf{multi-graph setting}. In multi-graph settings, if two graphs represent the same underlying phenomenon, and the two signals given on the two graphs are similar in some sense, the output of the ConvNet on both signals should be similar as well. This property is typically termed transferability, and is an essential requirement if we wish the ConvNet to generalize well on the test set, which in general consists of graphs unseen in the training set. In fact, transferability can be seen as a special type of generalization capability.
Analyzing and proving transferability is the focus of this paper.

\subsection{Convolutional neural networks}

A classical 1D convolution neural network, as described above, can be written explicitly as follows. We call each application of filters, followed by the activation function and pooling a \textbf{layer}. We consider discrete input signals ${\bf f}\in \RR^{d_1}$, seen as the samples of a continuous signal $f:\RR\rightarrow\RR$ at $d_1$ sample points.
In each Layer $l=1,\ldots,L$ there are $K_l\in\NN$ signal channels. The convolution-operators/filters of the ConvNet map the signal channels of each Layer $l-1$ to the signal channels of Layer $l$. Moreover, as the layers increase, we consider coarser discrete signals. Namely, signals of Layer $l$ consist of $d_l$ samples, where $d_1\geq d_2 \geq\ldots\geq d_L$. 
Consider the affine-linear filters 
\[\{g^l_{k'k}\ |\ 
  {\scriptstyle k=1\ldots K_{l-1},\ 
  k'=1\ldots K_{l}}
\}\]
 of Layer $l-1$, and the matrix $A^l=\{a^l_{k'k}\}_{k'k}\in\RR^{K_{l}\times K_{l-1}}$ that mixes the $K_{l-1}\times K_{l}$ resulting output signals to the $K_{l}$ channels of Layer $l$. \textcolor{black}{Note that each $g_{k'k}^l$ denotes a convolution operator plus constant.} Denote the signals at Layer $l$ by $\{{\bf f}^{l}_{k'}\}_{k'=1}^{K_l}$. The ConvNet maps Layer $l-1$ to Layer $l$ by 
\[
\{{\bf f}^{l}_{k'}\}_{k'=1}^{K_l}= Q^{l}\Big(\rho\Big\{\sum_{k=1}^{K_{l-1}}a^l_{k'k}\ g^l_{k'k}({\bf f}^{l-1}_k)\Big\}_{k'=1}^{K_l}\Big),
\]
where $\rho:\RR\rightarrow\RR$, called the \textbf{activation function}, operates pointwise on vectors, and the \textbf{pooling operator} $Q^{l}:\RR^{d_{l-1}}\rightarrow \RR^{d_l}$ sub-samples signals from $\RR^{d_{l-1}}$ to $\RR^{d_l}$. A typical choice for $\rho$ is the ReLU function $\rho(x)=\max\{0,x\}$. The output of the ConvNet are the signals $\{{\bf f}^{L}_{k'}\}_{k'=1}^{K_L}$ at Layer $L$.

When generalizing this architecture to graphs, there is a need to extend the convolution, activation function, and pooling to graph structured data. Here, graph signals are mappings that assign to each vertex of a graph a value. The activation function operates pointwise on signals, and generalizes trivially to graph signals. For pooling, graph signals are sub-sampled to signals over coarsened graphs, typically via the Graclus algorithm \cite{Graclus} (see also \cite[Subsection 2.2]{defferrard2016convolutional}). Next, we explain how filters are generalized to graphs.

\subsection{Convolution operators on graphs}

There are generally two approaches to defining convolution operators on graphs, both generalizing the standard convolution on Euclidean domains \cite{review_new,wu2019comprehensive}. Spatial approaches generalize the idea of a sliding window to graphs. Here, the main challenge is to define a way to translate a filter kernel along the vertices of the graph, or to aggregate feature information from the neighbors of each node. Some popular examples of spatial methods are \cite{GNN_1,GNN_2,Monti2017GeometricDL}. Spectral methods are inspired by the convolution theorem in Euclidean domains, that states that convolution in the spatial domain is equivalent to pointwise multiplication in the frequency domain. The challenge here is to define the frequency domain and the Fourier transform of graphs. The basic idea is to define the graph Laplacian, or some other graph operator that we interpreted as a shift operator, and to use its eigenvalues as frequencies and its eigenvectors as the corresponding pure harmonics \cite{ortega2018graph}. Decomposing a graph signal to its pure harmonic coefficients is by definition the graph Fourier transform, and filters are defined by multiplying the different frequency components by different values, see Subsection \ref{Spectral convolution operators} for more details. For some examples of spectral methods we refer to
\cite{bruna2013spectral,defferrard2016convolutional,Cayley1,gama2018convolutional}.
Additional references for both methods can be found in \cite{wu2019comprehensive}.

One typical motivation for favoring spatial methods is the claim that spectral methods are not transferable, and thus do not generalize well on graphs unseen in the training set. The goal in this paper is to debunk this misconception, and to show that state-of-the-art spectral graph filtering methods are transferable.
This paper does not argue against spatial methods, but shows the potential of spectral approaches to cope with datasets having varying graphs. We would like to encourage researches to reconsider spectral methods in such situations. 
Interestingly, \cite{ARMA_ConvNet} obtained state-of-the-art results using spectral graph filters on variable graphs, without any modification to compensate for the ``non-transferability''.

\subsection{Stability of spectral methods}

A necessary condition of any reasonable definition of transferability is stability. Namely, given a filter, if the topology of a graph is perturbed, then the filter on the perturbed graph is close to the filter on the un-perturbed graph. Without stability it is not even possible to transfer a filter from a graph to another very close graph, and thus stability is necessary for transferability. 
Previous work studied the behavior of graph filters with respect to variations in the graph. \cite{segarra2017optimal} provided numerical results on the robustness of polynomial graph filters to additive Gaussian perturbations of the eigenvectors of the graph Laplacian. Since the eigendecomposition is not stable to perturbations in the topology of the graph, this result does not prove robustness to such perturbations. \cite{isufi2017filtering}  showed that the expected graph filter under random edge losses is equal to the accurate output. However, \cite{isufi2017filtering} did not bound the error in the output in terms of the error in the graph topology. \cite{Stab_Diff} studied the stability with respect to diffusion distance of diffusion scattering transforms on graphs, a graph version of the popular scattering transforms, which are pre-defined Euclidean domain ConvNets \cite{Mallat2}. \cite{Stab_Diff2} also studied stability of graph scattering transforms, in terms of perturbations in the Laplacian eigenvectors and vertex permutations. Recently, \cite{Trans00} studied stability properties of spectral graph filters of a fixed number of vertices. However, in \cite[Theorems 2 and 3]{Trans00} the assumption that the relative error matrix is normal and is close to a scaled identity matrix is restrictive, and not satisfied in the generic case. In particular, only perturbations which are approximately a multiplication of all of the edge weights by the same scalar are considered in these theorems. A similar restriction is implicit in the analysis of \cite{Stab_Diff3}, which studied stability of graph scattering transforms.
%
\cite{surface_CVPR} analyzed the stability of a special type of ConvNet on triangle meshes, where filtering is pre-defined via propagating information from vertices to faces and back using the Dirac operator. The error of the ConvNet between two polygon meshes discretizing the same surface was bounded, assuming the two meshes consist of the same number of vertices. This approach to stability is reminiscent of our approach, but in our analysis we do not assume that the two graphs consist of the same number of vertices. Moreover, we consider general spectral graph ConvNets.

\subsection{Our contribution}

In the following we summarize our contribution.

\subsubsection{Theoretical settings of transferability}
\label{Theoretical settings of transferability}
We prove in this paper the stability of graph spectral filters to general perturbations in the topology. 
In fact, we present a more permissive framework of transferability, allowing to compare graphs of incompatible sizes and topologies.  We consider spectral filters as they are, and do not enhance them with any computational machinery for transferring filters. Thus, one of the main conceptual challenges is to find a way to compare two different graphs, with incompatible graph structures, from a theoretical stance. To accommodate the comparison of incompatible graphs, our approach resorts to non-graph theoretical considerations, assuming that graphs are observed from some underlying non-graph spaces. 
In our approach, graphs are regarded as discretizations of underlying corresponding ``continuous'' metric spaces. This makes sense, since a weighted graph can be interpreted as a set of points (vertices) and a decreasing function of their distances (edge weights). We can actually relax the assumption that the ``continuous space'' is metric, and consider more general topological spaces\footnote{A topological space is a generalization of a metric space, where distances are no longer defined, but continuity is defined. In metric spaces $\mathcal{M}$, continuity of functions $f:\mathcal{M}\rightarrow\RR$ is defined via an ``$\epsilon$--$\delta$'' formulation: $f$ is continuous at $x\in\mathcal M$, if for every open interval $B_{\e,f(x)}=\big(f(x)-\e,f(x)+\e\big)$ about $f(x)$, the inverse set $\{y\in\mathcal{M}\ |\ f(y)\in B_{\e,f(x)}\}$ contains some open ball $B_{\delta,x}=\{y\in\mathcal{M}\ |\ {\rm dist}(y,x)<\delta\}$ about $x$.  Topological spaces generalize the ``$\epsilon$--$\delta$'' notion of continuity by directly specifying which sets are open, without defining a notion of distance.}.
Two graphs are comparable, or represent the same phenomenon, if both discretize the same space.
This approach allows us to prove transferability under small perturbations of the adjacency matrix, but more generally, allows us to prove transferability between graphs with incompatible \textcolor{black}{sizes}. 

More generally, we consider graph sampled from general measure spaces\footnote{A measure space is informally a space in which it is possible to compute the volume of a rich collection of subsets. Using the notion of volume, it is then possible to define integration of functions defined on the measure space, and thus the root mean square error between functions is well defined.}, where the \textbf{sampling operator} is a linear mapping that takes a signal on the measure space and returns a signal on the graph. We consider a corresponding \textbf{interpolation operator}, a linear mapping that takes a signal on the graph and returns a signal on the measure space. This setting is general, and can be used to describe graphs sampled from topological spaces at sample points, graphs coarsened to smaller graphs via, e.g., the Graclus algorithm \cite{Graclus}, and graph perturbations, as discussed in Subsection \ref{Examples of transferability settings}.

The way to compare two graphs is to consider their embeddings to the ``continuous'' space they both discretize. 
For intuition, consider the special case where the ``continuous'' space is a manifold. Any manifold can be discretized to a graph/polygon-mesh in many different ways, resulting in different graph topologies. A filter designed/learned on one polygon-mesh should have approximately the same repercussion
on a different polygon-mesh discretizing the same manifold. 
\textcolor{black}{The informal term ``repercussion'' means ``the effect that a network/filter has on data.'' Choosing a rigorous definition for this term is a mathematical modeling challenge that we address as follows.}
To compare the filter on the two graphs, we consider a generic signal defined on the continuous space, and sampled to both graphs. After applying the graph filter on the sampled signal on both graphs, we interpolate the results back to two continuous signals. In our analysis we show that these two interpolated continuous signals are approximately equal (see Figure \ref{fig1} for illustration of this procedure). 

For the case of graphs sampled from topological spaces, we develop a digital signal processing (DSP) framework akin to the classical Nyquist--Shannon approach, where now analog domains are topological spaces, and digital domains are graphs. 

\subsubsection{The basic assumption of graphs discretizing topological spaces}
\label{The basic assumption of graphs discretizing metric spaces}

In the DSP setting of transferability, the assumption that graphs are discretizations of topological spaces is an ansatz, and it is important to clarify the philosophy behind this choice.
One of the fundamental challenges in studying transferability is to determine to which graph changes a network should be sensitive/discriminative and to which changes the network should generalize, or be transferable. The later changes are sometimes termed nuisances in the machine learning jargon, since the network should be designed/trained to ignore them. A network should not be transferable to all graph changes, since then the network cannot be used to discriminate between different types of graphs. On the other hand, the network should be transferable between different graphs that represent the same underlying phenomenon, even if these two graphs are not close to each other in standard measures of graph distance. The ansatz that two graphs represent the same phenomenon if both discretize the same topological space, gives us a theoretical starting point: we know to which graph changes the network should be transferable, so the problem of transferability can be formulated mathematically. What we show is that spectral graph ConvNets always generalize between graphs discretizing the same topological space, regardless of the specific form of their filters. Namely, this type of generalization is built-in to spectral graph ConvNets, and requires no training. 

The validity of this ansatz from a modeling stance is justifiable to different extents, depending on the situation. As noted above, it is natural to think of graphs as discretizations of metric spaces. Certainly, this is the case for geometric datasets like meshes, or 3D solids like molecules. 
\textcolor{black}{There is also evidence that real life networks, like World Wide Web, social networks,  protein interaction networks, and biological cellular networks, have underlying geometric structures. For example, in \cite{SelfSim} it was shown that such networks are self similar, in the sense that the coarsened version of the network has the same probability distribution of links as the fine network. Hence, a network and its coarsened version both represent the same underlying phenomenon. It is thus desirable for graph ConvNets to have the same effect on both the original and the corasened graph in some sense. Follow up works showed that networks can be seen as sampled from a latent underlying geometric space, e.g., a hyperbolic space \cite{Hyp1}, or a circle \cite{SelfSim2}. For a comprehensive survey on the underlying geometry of networks we refer the reader to \cite{NetGeo}.
}

One might even stretch the interpretation further, and consider examples like citation networks\footnote{A citation network is a graph, where each node represents a paper. Two nodes are connected by an edge if there is a citation between the papers. A graph signal is constructed by mapping the content of each paper to a vector representing this content.}, which seem non-geometric. The idea is to view citation networks as discretizations of some hypothetical underlying metric space. This metric space is the continuous limit of citation networks, where the number of papers tends to infinity. Intuitively, in the limit there is a continuum of papers, and the distance between papers models the probability for the two papers to be linked by a citation. Namely, the distance decreases to zero as the probability increases to one. We do not attempt to study or characterize this hypothetical continuous citation network, but only postulate its existence as a metric space.
In practice, the computations in training and applying filters do not use any knowledge of the underlying continuous metric space. Its existence is used only for approximation theoretic analysis. 

\textcolor{black}{Other notions of graphs approximating continuous latent spaces are possible. For example, in graphon analysis, simple graphs approximate graphons if the homomorphism densities of the graph and of the graphon are close \cite{graphon0}. In this paper we focus however on the sampling approach, leaving the graphon approach for future research.}

\subsubsection{Concept-based and principle transferability}
Graph ConvNets can manage transferability in different ways. 
First, when a graph ConvNet is shown a multi-graph training set, it can learn ``concepts'' that promote transferability. Let us call this approach \textbf{concept-based transferability}. Second, it may be the case that transferability is a mathematical law: a built-in capability of certain types of graph ConvNets, independent of their specific filters, which requires no training. This approach, that we call \textbf{principle transferability}, is the focus of this paper. 

We believe that the success of spectral graph ConvNets in multi-graph settings relies on both types of transferability. We call the accumulative effect of concept-based transferability and principle transferability \textbf{total transferability}. In this paper we prove theoretically that spectral graph ConvNets have principle transferability. We moreover demonstrate principle transferability by concocting experiments that isolate principle transferability from concept-based transferability. This is done by zero shot learning: training the network on one single graph, which prevents it from learning concepts for dealing with varying graphs, and testing the resulting network on other graphs. The performance of such a network on the new graphs only partially degrades, illustrating the effect size of principle transferability in total transferability. Moreover, in our isolated principle transferability experiment, spectral methods outperform spatial methods, which indicates that spectral methods have competitive transferability capabilities. 

\subsubsection{Overview of our transferability results}

In the following we give a high-level overview of our results.

\paragraph{The transferability inequality.}

In the transferability theory there is always an original space with an original Laplacian, from which we sample a graph and a graph Laplacian. As explained in Subsection \ref{Theoretical settings of transferability}, the original space may be a ``continuous'' measure space or a discrete graph. Let us call the original space the \textbf{continuous space}, and the original Laplacian the \textbf{continuous Laplacian}.
A \textbf{transferability error} is the error between the continuous object and the discretized object.
In Section \ref{The transferability inequality} we introduce the \textbf{transferability inequality} (Theorem \ref{main00}), a generic inequality that bounds the \textbf{transferability error of filters} in terms of the \textbf{transferability error of Laplacians} and the error entailed by sampling-interpolating, called the \textbf{consistency error}. Informally, the transferability inequality reads
\[
  transferability\ of\ filter \ \ \leq \  \  transferability\ of\ Laplacian \ \ +\ \   consistency\ error. 
  \]
%
The transferability inequality 
 asserts that if sampling and interpolation is chosen well, in the sense that sampling a continuous signal and then interpolating it results in a small error, and if the graph Laplacian approximates the continuous Laplacian, then also any graph spectral filter approximates the corresponding filter on the continuous space.

\paragraph{Sufficient conditions for transferability.}
The transferability inequality states that the transferability of a filter is small if the transferability of the Laplacian and the consistency error are small. 
In Section \ref{Transferability of spectral graph filters and ConvNets} we introduce general conditions under which the transferability of the Laplacian and the consistency error are small. 

\paragraph{Tansferability of graph spectral ConvNets.}

In Subsection \ref{Transferability of graph ConvNets} we extend the transferability results of filters to transferability of spectral ConvNets. We prove the transferability of graph spectral ConvNets under the assumption of small transferability error of the Laplacian and small consistency error in each coarsened version of the graph in the network (Theorem \ref{main_conv00} and Corollary \ref{main_conv0}). This implies that graph spectral ConvNets are appropriate in multi-graph settings. We support this claim both with basic experiments and by recalling other papers that demonstrate transferability of spectral methods in practice.

\paragraph{Transferability of graphs sampled from topological spaces.}
In Section \ref{Sampling and interpolation} we prove that the sufficient conditions for transferability are satisfied  for graphs discretizing topological spaces via sampling. To this end, we develop a digital signal processing (DSP) framework akin to the classical Nyquist--Shannon approach, where now analog domains are topological spaces, and digital domains are graphs.
Graphs are sampled from topological spaces by evaluation at sample points. We prove that graph Laplacians approximate topological space Laplacians in case the sample points satisfy some quadrature assumptions, namely, if certain integrals over the topological space can be approximated by sums over the sample points. 

\paragraph{Transferability of graphs randomly sampled from topological spaces.}
An important question that arises from the transferability inequality is if it is reasonable to assume that the right hand side of the transferability inequality is small. Another question is if the assumptions of the  DSP setting of transferability are  reasonable. The answer to these questions depends on the situation. A universal mathematical analysis is not possible, since the answer depends on how the graph dataset was constructed, how graphs were sampled and from what model, and how the graph Laplacians were chosen. To give a mathematical solution to this question, in Subsection \ref{Random sample sets satisfy the quadrature definitions} we consider a controlled setting of the data acquisition step. 
We prove that the quadrature assumptions of our DSP framework are satisfied in high probability in case the sample points of the discrete graphs are drawn randomly from the corresponding topological space (Theorem \ref{Theo:probMC}). In this scenario, spectral ConvNets are transferable in high probability.

\paragraph{Main message.} 
\emph{The concept that spectral graph ConvNets are not appropriate in situations where the data consists of many different graphs and many different signals on these graphs is a misconception. Graph spectral ConvNets are transferable both in practice and theory. If your data consists of many graphs, among other methods, you should consider spectral graph ConvNets.}

$ $

All proofs are given in the appendix.
We wish to remark that some preliminary results on stability of spectral convolutions of graphs of a fixed size were reported in \cite{tran0}.

\begin{figure}[!ht]
\centering
\includegraphics[width=0.7\linewidth]{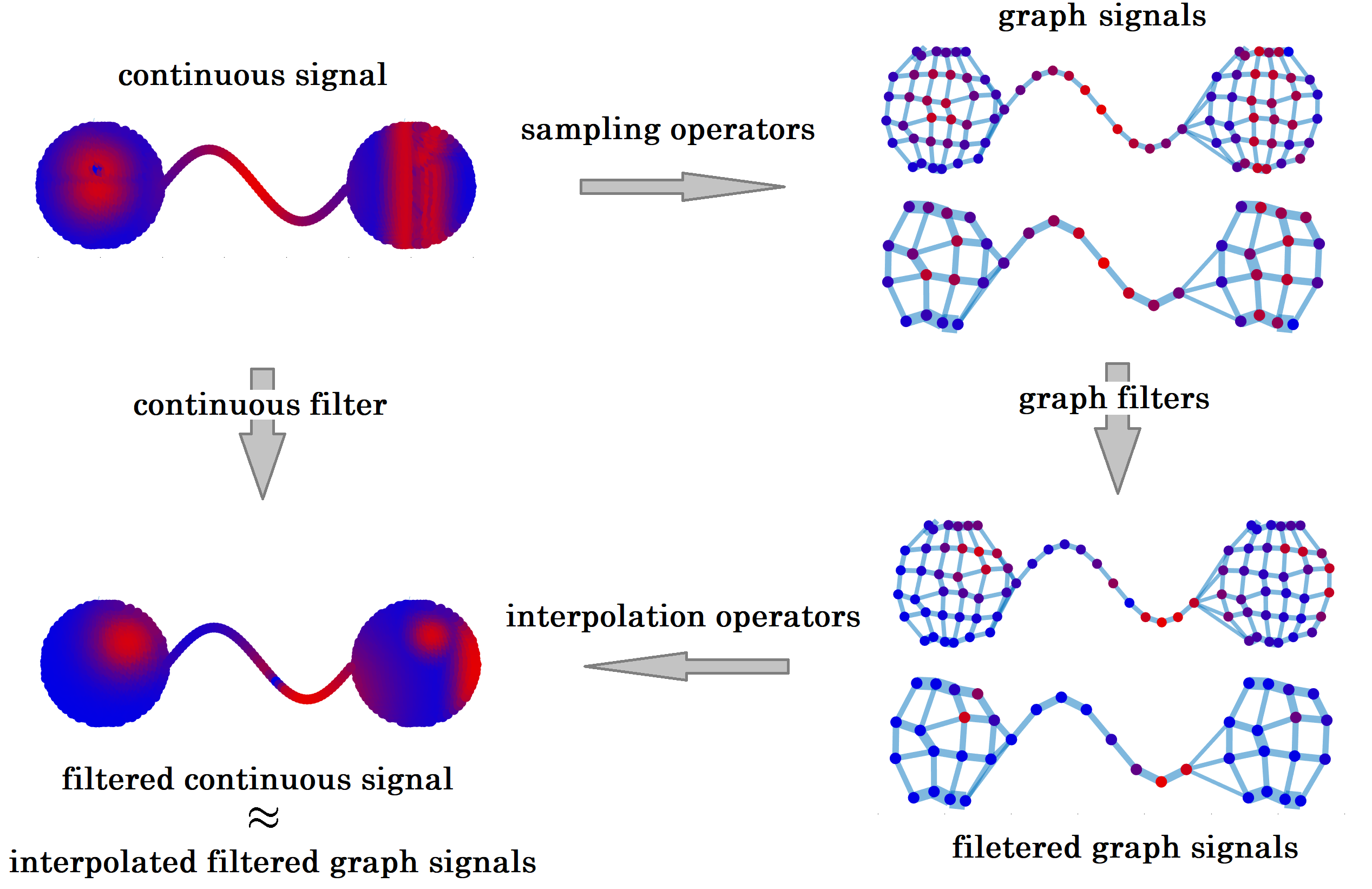}
\caption{ Diagram of the approximation procedure, illustrating how a fixed filter/ConvNet operates on a ``continuous'' topological space and two graphs discretizing it. Top left: a continuous signal on the topological space. Top right: the sampling of the continuous signal to the two graphs that discretize the topological space. Bottom right: the filter applied on both graph signals. Bottom left: the filter applied on the continuous topological space signal is approximated by the interpolation of either of the two filtered graph signals. As a result, the interpolations of the two filtered graph signals are approximately identical.}
\label{fig1}
\end{figure}

\section{Theoretical framework of graph spectral methods}

In this section we recall the theory of graph spectral methods. We show that state-of-the-art graph spectral methods are based on a \textbf{functional calculus} implementation of convolution operators, and explain the misconception of non-transferability of spectral graph filters. We last show how to use graph spectral methods for directed graphs.

\subsection{Spectral convolution operators}
\label{Spectral convolution operators}

Consider an undirected weighted graph ${\cal G}=\{\mathcal{E},\mathcal{V},{\bf W}\}$, with vertices $\mathcal{V}=\{1,\ldots,N\}$, edges $\mathcal{E}\subset \mathcal{V}^2$, and adjacency matrix ${\bf W}$. The adjacency matrix ${\bf W}=(w_{n,m})_{n,m=1}^N$ is symmetric and represents the weights of the edges, where $w_{n,m}$ is nonzero only if vertex $n$ is connected to vertex $m$ by an edge. Consider the degree matrix ${\bf D}$, defined as the diagonal matrix with entries $d_{n,n}=\sum_{m=1}^N w_{n,m}$.

The frequency domain of a graph is determined by choosing a shift operator, namely a self-adjoint operator $\DD$ that respects the connectivity of the graph. 
As a prototypical example, we consider the unnormalized Laplacian $\DD ={\bf D}-{\bf W}$, which depends linearly on ${\bf W}$.
Other examples of common shift operators are the normalized Laplacian $\DD_{\rm n}={\bf I}-{\bf D}^{-1/2}{\bf W}{\bf D}^{-1/2}$, and the adjacency matrix itself. In this paper we call a generic self-adjoint shift operator \textbf{Laplacian}, and denote it by $\DD$.
Denote the eigenvalues of $\DD$ by $\{\l_n\}_{n=1}^N$, and the eigenvectors by $\{\phi_n:V\rightarrow\CC\}_{n=1}^N$. The Fourier transform of a graph signal $f:V\rightarrow\CC$ is given by the vector of frequency intensities
\[\cF f = (\ip{f}{\phi_n})_{n=1}^N,\]
where $\ip{u}{v}$ is an inner product in $\CC^N$, e.g., the standard dot product.
The inverse Fourier transform of the vector $(v_n)_{n=1}^N$ is given by
\[\cF^* (v_n)_{n=1}^N = \sum_{n=1}^Nv_n\phi_n.\]
Since $\{\phi_n\}_{n=1}^N$ is an orthonormal basis, $\cF^*$ is the inverse of $\cF$. 
A spectral graph filter ${\bf G}$ based on the coefficients $(g_n)_{n=1}^N$ is defined by
\begin{equation}
{\bf G} f = \sum_{n=1}^Ng_n\ip{f}{\phi_n}\phi_n.
\label{filt0}
\end{equation}
Any spectral filter defined by (\ref{filt0}) is \textbf{permutation equivariant}, namely, does not depend on the indexing of the vertices. Re-indexing the vertices in the input results in the same re-indexing of vertices in the output. 

Spectral filters implemented by (\ref{filt0}) have two disadvantages. First, as shown in Subsection \ref{The misconception of non-transferability of spectral graph filters}, they are not transferable. Second, they entail high computational complexity. Formula (\ref{filt0}) requires the computation of the eigendecomposition of the Laplacian $\DD$, which is computationally demanding and can be unstable when the number of vertices $N$ is large. Moreover, there is no general ``graph FFT'' algorithm for computing the Fourier transform of a signal $f\in L^2(V)$, and (\ref{filt0}) requires computing the frequency components $\ip{f}{\phi_n}$ and their summation directly. 

\subsection{Functional calculus implementation of spectral convolution operators}

To overcome the above two limitations, state-of-the-art methods, like \cite{defferrard2016convolutional,art:ARMA,Cayley1,gama2018convolutional},  are implemented via \textbf{functional calculus}. Functional calculus is the theory of applying functions $g:\CC\rightarrow\CC$ on normal operators in Hilbert spaces $\mathcal{H}$. In the special case of a self-adjoint or unitary operator $\mathbf{T}$ in the space $\mathcal{H}$, with a discrete spectrum, $g(\mathbf{T})$ is defined by
\begin{equation}
g(\mathbf{T})f=\sum_{n} g(\l_n)\ip{f}{\phi_n}\phi_n,
\label{eq:FC1}
\end{equation}
for any vector $f$ in the Hilbert space, where $\{\l_n,\phi_n\}$ is the eigendecomposition of the operator $\mathbf{T}$. The operator $g(\mathbf{T})$ is normal for general $g:\CC\rightarrow\CC$, self-adjoint for $g:\CC\rightarrow\RR$, and unitary for $g:\CC\rightarrow e^{i\RR}$ (where $e^{i\RR}$ is the unit complex circle). 

Definition (\ref{eq:FC1}) is canonical in the following sense. In the special case where 
\[g(\l)=\frac{\sum_{l=0}^{L}c_l\l^l}{\sum_{l=0}^{L}d_l\l^l}\]
is a rational function, $g(\mathbf{T})$ can be defined in two ways. First, by (\ref{eq:FC1}), and second by compositions, linear combinations, and inversions, as 
\begin{equation}
g(\mathbf{T}) = \Big(\sum_{l=0}^{L}c_l\mathbf{T}^l\Big)\Big(\sum_{l=0}^{L}d_l\mathbf{T}^l\Big)^{-1}
\label{eq:FC_ratio}
\end{equation}
It can be shown that (\ref{eq:FC1}) and (\ref{eq:FC_ratio}) are equivalent. 

Moreover, definition (\ref{eq:FC1}) is also canonical in regard to non-rational functions. Loosely speaking, if a polynomial $p$ approximates the function $g$, then the operator $p(\mathbf{T})$ approximates the operator $g(\mathbf{T})$.  This is formulated as follows. Consider the space $PW(\l_M)$ of vectors $f$ comprising finite eigenbasis expansions
\[f=\sum_{n=0}^{M}b_n\phi_n,\]
for a fixed $M$.
If a sequence of polynomials $\{g_k\}_k$ converges to a continuous function $g$ in the sense
\[\lim_{k\rightarrow\infty}\sup_{\l\leq\abs{\l_M}}\abs{g(\l)-g_k(\l)}=0,\]
then also
\begin{equation}
\lim_{k\rightarrow\infty}\norm{g(\mathbf{T}) -g_k(\mathbf{T}) }=0,
\label{eq:FC_conv1}
\end{equation}
where the operator norm in (\ref{eq:FC_conv1}) is defined by
\[\norm{g(\mathbf{T}) -g_k(\mathbf{T}) } := \sup_{0\neq f\in PW(\l_M)}\frac{\norm{g(\mathbf{T})f -g_k(\mathbf{T})f }}{\norm{f}}.\]

When filters are defined via (\ref{eq:FC1}) with polynomial or rational function $g$, implementing spectral filters via (\ref{eq:FC_ratio}) overcomes the limitation of definition (\ref{filt0}). By relying on the spatial operations of compositions, linear combinations, and inversions, the computation of a spectral filter is carried out entirely in the spatial domain, without ever resorting to spectral computations. Thus, no eigendecomposition and Fourier transforms are ever computed. The inversions in $g(\mathbf{T})f$ involve solving systems of linear equations, which can be computed directly if $N$ is small, or by some iterative approximation method for large $N$.
Methods like \cite{defferrard2016convolutional,Kipf2017,ortega2018graph,gama2018convolutional} use polynomial filters, and \cite{art:ARMA,Cayley1,ARMA_ConvNet} use rational function filters. We term spectral methods based on functional calculus \textbf{functional calculus filters}.

\subsection{The misconception of non-transferability of spectral graph filters}
\label{The misconception of non-transferability of spectral graph filters}

The non-transferability claim is formulated based on the sensitivity of the Laplacian eigendecomposition to small perturbations in $\mathbf{W}$, or equivalently in $\DD$. Namely, a small perturbation of $\DD$ can result in a large perturbation of the eigendecomposition $\{\l_n,\phi_n\}_{n=1}^N$, which results in a large change in the filter defined via (\ref{filt0}). This claim was stated in \cite{review_new} only for spectral filters implemented via (\ref{filt0}), for which it is true. However, later papers misinterpreted this claim and applied it to functional calculus filters. \textcolor{black}{This misconception can be found in prominent surveys \cite{wu2019comprehensive}, as well as research papers, e.g.,  \cite{fey2018splinecnn, maron2018invariant, te2018rgcnn, cai2019exploiting, chen2019dagcn, bi2019graph} (the list if far from exhaustive).}
The instability argument does not prove non-transferability, since state-of-the-art spectral methods do not explicitly use the eigenvectors, and do not parametrize the filter coefficients $g_n$ via the index $n$ of the eigenvalues. Instead, state-of-the-art methods are based on functional calculus, and define the filter coefficients using a function $g:\RR\rightarrow\CC$, as $g(\l_n)$. 
The parametrization of the filter coefficients by $g$ is indifferent to the specifics of how the spectrum is indexed, and instead represents an overall response in the frequency domain, where the \textbf{value} of each frequency determines its response, and not its index.
When functional calculus filters are defined by (\ref{eq:FC1}), a small perturbation of $\DD$ that results in a perturbation of $\l_n$, also results in a perturbation of the coefficients $g(\l_n)$. It turns out that the perturbation in $g(\l_n)$ implicitly compensates for the instability of the eigendecomposition, and functional calculus spectral filters are stable.  This is seen by using the transferability inequality in a graph perturbation setting (see Subsection \ref{Examples of transferability settings}).

\textcolor{black}{As a toy example, consider the graph Laplacian on a graph with three nodes, defined via its eigendecomposition, where $\l_1=1$ has a 2D eigenspace, spanned by the eigenvectors $\phi_1,\phi_2$, and $\l_2=2$ has a 1D eigenspace spanned by $\phi_3$.
Implementation (\ref{filt0}) with $g_1\neq g_2$ is not even uniquely defined by $\DD$, as the basis $\{\phi_1,\phi_2\}$ of the eigenspace of $\l_1$ is not uniquely defined by $\DD$. On the other hand, the functional calculus implementation (\ref{eq:FC1}) imposes that the frequency response is one constant for the whole eigenspace of $\l_1$, and the non-uniqueness problem is avoided. More generally, in \cite{tran0} the stability of functional calculus filters was proved.}

\subsection{Spectral graph filters on directed graphs}
\label{Laplacians of directed graphs as normal operators0}

In Appendix \ref{Laplacians of directed graphs as normal operators} we explain how functional calculus applies as-is to non-normal matrices, even though the theory is defined only for normal operators. As a result, spectral filters can be defined on directed graphs represented by non-symmetric adjacency matrices.

There is an inner product structure in $\CC^{N}$ under which general diagonalizable matrices can be seen as normal operators.
Given an $N\times N$ diagonalizable matrix ${\bf A}$ with eigenvectors $\{\boldsymbol{\gamma}_k\}_{k=1}^N$, consider the matrix $\boldsymbol{\Gamma}$ comprising the eigenvectors as columns. Define the inner product 
\begin{equation}
\ip{{\bf u}}{{\bf v}}= {\bf v}^{\rm H} {\bf B} {\bf u},
\label{eq:self1}
\end{equation}
where ${\bf B}=\boldsymbol{\Gamma}^{-{\rm H}}\boldsymbol{\Gamma}^{-1}$ is symmetric,  ${\bf u}$ and ${\bf v}$ are given as column vectors, and for a matrix ${\bf C}=(c_{m,k})_{n,m}\in \CC^{N\times N}$, the Hermitian transpose ${\bf C}^{\rm H}$ is the matrix consisting of entries $c^{\rm H}_{m,k}=\overline{c_{k,m}}$. Under the inner product (\ref{eq:self1}),  ${\bf A}$ is normal. Consider an operator $A$ represented by the matrix ${\bf A}$. The adjoint $A^*$ of an operator $A$ is defined to be the unique operator such that
\[\forall {\bf u},{\bf v}\in\CC^{d}, \quad \ip{A{\bf u}}{{\bf v}} = \ip{{\bf u}}{A^*{\bf v}}.\]
 The matrix representation of the adjoint $A^*$ is given by 
\begin{equation}
{\bf A}^*={\bf B}^{-1}{\bf A}^{\rm H}{\bf B}.
\label{eq:mat_adj1}
\end{equation}
Thus, an operator is self-adjoint if ${\bf B}^{-1}{\bf A}^{\rm H}{\bf B}={\bf A}$, and unitary if  ${\bf B}^{-1}{\bf A}^{\rm H}{\bf B}={\bf A}^{-1}$.

The above results are proved in Appendix \ref{Laplacians of directed graphs as normal operators}.

\section{The transferability inequality}
\label{The transferability inequality}

In this section we derive the \textbf{transferability inequality}, a generic inequality that bounds the \textbf{transferability error of filters} by the \textbf{transferability error of Laplacians} plus the error entailed by sampling-interpolating, called the \textbf{consistency error}.

\subsection{The general setting of transferability}

For a graph discretizing a ``continuous'' topological space, as described in Subsections \ref {Theoretical settings of transferability} and \ref{The basic assumption of graphs discretizing metric spaces}, the transferability error between the graph and the topological space is defined as follows. Given a generic signal in the topological space, on the one hand, the signal is sampled to the graph, the discrete filter is applied on the sampled signal, and the filtered signal is interpolated back to the topological space. On the other hand, the filter is applied on the signal directly in the topological space. The error between these two output signals is called the transferability error of the filter.  For two graphs with small transferability error between each of the graphs and the topological space, the transferability error of the filter between the two graphs is also small by the triangle inequality. We thus focus on transferability between graphs and topological spaces.

Topological space signals are functions that assign to every point in the topological space a value. The error between pairs of signals is defined as the root mean square error (RMSE). To define RMSE in this abstract setting we must be able to integrate over the topological space, and thus we always assume that the topological space comes with some notions of volume, namely a Borel measure\footnote{A measure is a generalization of the notion of volume. A Borel measure in a topological space is a notion of volume that respects in some sense the topological structure. For example, open sets must have well defined volumes.}. 

Instead on focusing on graphs discretizing topological spaces via sampling, we consider a more general setting. In the general setting we study transferability between two domains, $\mathcal{M}$ and the finite domain $G$.
The domains $\mathcal{M}$ and $G$ are assumed to be measure spaces, and we consider the two spaces of signals\footnote{Using the notion of volume of a measure space $\mathcal{M}$ it is possible to define integration, and thus define the Lebesgue space of square integrable functions $L^2({\cal M})$.} $L^2(\mathcal{M})$ and $L^2(G)$. We assume that the spaces $L^2({\cal M})$ and $L^2(G)$ are separable, namely, there exist orthonormal bases of $L^2({\cal M})$ and $L^2(G)$. Since filtering is seen as a procedure of increasing certain frequencies, and decreasing others, we need a notion of oscillation of signals in the spaces $L^2(\mathcal{M})$ and $L^2(G)$. For that, we endow the signal spaces with additional structure.
In each of the signal spaces $L^2(\mathcal{M})$ and $L^2(G)$ we consider a special normal linear operators (typically self-adjoint) that we call the \textbf{Laplacian} of the space. For $L^2(\mathcal{M})$ we denote the Laplacian by $\mathcal{L}$, and for $L^2(G)$ we denote the Laplacian by $\DD$. We suppose that $\mathcal{L}$ and $\DD$ have discrete spectra in the following sense.
\begin{definition}
\label{discreteL}
Consider the normal operator $T$ with spectrum consisting only of eigenvalues, and denote the eigendecomposition of $T$ by $\{\l_j,P_j\}_{j=1}^{\infty}$, with eigenvalues $\l_j$ and projections $P_j$ upon the corresponding eigenspaces $W_j$. We say that $T$ has \textbf{discrete spectrum}
if in each bounded disc in $\CC$ there are finitely many eigenvalues of $T$, and the eigenspace of each eigenvalue is finite-dimensional. We consider the eigenvalues in increasing order of $\abs{\l_j}$, and denote $\Lambda(T)=\{\l_j\}_{j=1}^{\infty}$.
\end{definition}
For example, Laplace-Beltrami operators on compact Riemannian manifolds satisfy Definition \ref{discreteL} by Weyl’s law \cite[Chapter 11]{WeylLaw}.

As discussed in Subsection \ref{Laplacians of directed graphs as normal operators}, the Laplacian $\DD$ need not be a normal matrix. If $\DD$ is not a normal matrix,
we consider an inner product structure on each $L^2(V_n)$ for which $\DD$ is a normal operator.

The Laplacians $\mathcal{L}$ and $\DD$ define the notion of oscillation on $L^2(\mathcal{M})$ and $L^2(G)$. Namely, the eigenvectors of the Laplacians are seen as the pure harmonics, or Fourier modes. The eigenvalues are seen as an ordering of the pure harmonics, where the larger the eigenvalue corresponding to an eigenvector, the more oscillatory the eigenvector is. 
Filters are defined as measurable functions $f:\CC\rightarrow\CC$. Each filter can be manifested in both spaces via functional calculus, where the filter in $L^2(\mathcal{M})$ is defined as $f(\mathcal{L})$,  and the filter in $L^2(G)$ is defined as $f(\DD)$.

We suppose that the space $G$ is finite, and thus $L^2(G)$ is finite-dimensional.
When $\mathcal{M}$ is infinite, the signal space $L^2(\mathcal{M})$ is infinite-dimensional in general. We consider the finite-dimensional subspace of signal of $L^2(\mathcal{M})$ spanning all of the eigenvectors of $\mathcal{L}$ up to some eigenvalue, as defined next.
\begin{definition}
Let $\mathcal{L}$ be a normal operator in $L^2(\mathcal{M})$ with discrete spectrum.
Denote the eigenvalues, eigenspaces, and projections upon the eigenspaces of $\mathcal{L}$ by $\{\l_j,W_j,P_j\}_{j\in\NN}$.
For each $\l>0$, we define the $\l$'th \textbf{Paley-Wiener} space of $\mathcal{M}$ as 
\[PW(\l)=\oplus_{j\in\NN}\{W_j\ |\ \abs{\l_j}\leq\l\}.\]
We denote by $P(\l)$ the \textbf{spectral projection} upon $PW(\l)$, given by 
\[P(\l) = \sum_{\l_j\in \Lambda(\DD),\ \abs{\l_j}\leq\l}P_j.\]
\end{definition}

A Paley-Wiener space is interpreted as the space of band-limited signals in the band $\l$. 
When $L^2(\mathcal{M})$ is infinite-dimensional we restrict the analysis to a generic Paley-Wiener space $PW(\l)\subset L^2(\mathcal{M})$. Namely, transferability is analyzed on signals which are not too oscillatory.

To accommodate a transferability analysis, we consider two mappings  that transfer signals from $L^2(\mathcal{M})$ to signals in $L^2(G)$ and back. For each fixed band $\l$, consider the linear operators
\[S^{\l}:  PW(\l)\rightarrow L^2(G) , \quad R^{\l}:L^2(G)\rightarrow PW(\l).\]
We typically think of $S^{\l}$ as down-sampling or discretization, and $R^{\l}$ as up-sampling. We thus call $S^{\l}$ \textbf{sampling} and $R^{\l}$ \textbf{interpolation}.
\begin{definition}
The \textbf{transferability error of the filter} $f$ \textcolor{black}{(at the band $\l$)}, on the signal $s\in PW(\l)$, is defined by
\[\norm{f(\mathcal{L})s - R^{\l}f(\DD)S^{\l}s},\]
the \textbf{transferability error of the Laplacian} \textcolor{black}{(at the band $\l$)} is defined by
\[\norm{\mathcal{L}s - R^{\l}\DD S^{\l}s},\]
and the \textbf{consistency error} \textcolor{black}{(at the band $\l$)} is defined by
\[\norm{s - R^{\l}S^{\l}s}.\]
\end{definition}
What we prove in this section is the following inequality
\[\norm{f(\mathcal{L})s - R^{\l}f(\DD)S^{\l}s} \leq C_1\norm{\mathcal{L}s - R^{\l}\DD S^{\l}s} + C_2\norm{s - R^{\l}S^{\l}s}\]
up to some constants $C_1$ and $C_2$.

\subsection{Examples of transferability settings}
\label{Examples of transferability settings}

Before we formulate the transferability inequality theorem, let us give three concrete settings of the above transferability analysis. In the first example, which was introduced in Subsections \ref {Theoretical settings of transferability} and \ref{The basic assumption of graphs discretizing metric spaces}, $\mathcal{M}$ is a topological space with a Borel measure. 
The space $G$ is a graph, where the nodes of $G$ are seen as sample points in $\mathcal{M}$.
Sampling general signals in the Lebesgue space $L^2(\mathcal{M})$ is not well defined (unless $\mathcal{M}$ is discrete), since signals in $L^2(\mathcal{M})$ are defined up to a subset of $\mathcal{M}$ of measure zero. 
To be able to define sampling properly we consider the Paley-Wiener spaces, an approach that generalizes the standard Nyquist--Shannon theory in signal processing in $L^2(\RR)$. For that we further assume that the Paley-Wiener spaces associated with $\mathcal{L}$ consist of continuous functions (see Definition \ref{def:respectCont} for more details). 
 Sampling is the operator $S^{\l}$ that evaluates signals $s\in PW(\l)\subset L^2(\mathcal{M})$ at the sample points to obtain a signal on the graph. Similarly to classical digital signal processing, we define the interpolation $R^{\l}$ as the adjoint of the sampling operator, namely $R^{\l}=S^{\l *}$ (see Subsection for more information). Transferability between $\mathcal{M}$ and $G$ is thus seen as the error entailed by operating in the digital domain $G$ instead of the analog domain $\mathcal{M}$.

As a second example, we consider transferability under graph coarsening. Here, $\mathcal{M}$ is a graph, and $G$ is a coarse version of $M$.  In the Graclus algorithm for coarsening \cite{Graclus},  pairs of neighboring nodes in $\mathcal{M}$ with strong weights are collapsed to single nodes in $G$. Since both $\mathcal{M}$ and $G$ are finite, we consider the whole space $L^2(\mathcal{M})$ as the Paley-Wiener space, and omit the superscript $\l$ in $R$ and $S$. Given a signal $s$,
coarsening, $S$, is the operator that assigns the value
\begin{equation}
    \label{eq:coarsening_op}
    [Ss](q_{1,2}) =(s(q_1)+s(q_2))/\sqrt{2}
\end{equation} 
to the node $q_{1,2}$ of $G$ with parent nodes from $\mathcal{M}$,  $q_1$ and $q_2$, that have the signal values $s(q_1)$ and $s(q_2)$ respectively. Piecewise constant interpolation is defined to be $R= S^*$.

The last example is graph perturbation. Here, $\mathcal{M}$ is a graph, and $G$ is a perturbation of $\mathcal{M}$, that is obtain by adding or deleting random edges from $\mathcal{M}$ or perturbing the edge weights. Here we take $S=R=I$. Transferability in this case is called stability.

\subsection{Theorem of transferability inequality}

For the transferability inequality we need the following notations. For a continuous $g:\CC\rightarrow\CC$ and $M\in\NN$ denote 
\begin{equation}
    \label{eq:glm}
    \norm{g}_{\mathcal{L},M}:=\max_{0\leq m \leq M}\{\abs{g(\lambda_m)}\}.
\end{equation}
	For each $\l_m\in \Lambda(\mathcal{L})$ denote
	\begin{equation}
	{\rm V}_g(\lambda_m):=\max_{\kappa\in \Lambda(\DD)}\abs{\frac{g(\kappa)-g(\l_m)}{\kappa-\l_m}}.
	\label{eq:V_fl}
	\end{equation}
	Note that for a Lipschitz continuous $g$ with Lipschitz constant $D$, it follows from $\abs{\frac{g(x)-g(y)}{x-y}}\leq D$ that ${\rm V}_g(\lambda_m) \leq D$. Denote by $\#\{\l_j\leq\l\}_j$ the number of eigenvalues of $\mathcal{L}$ less or equal to $\l$, and note that
\[\#\{\l_j\leq\l\}_j\leq {\rm dim}PW(\l),\]
where ${\rm dim}PW(\l)$ is the dimension of $PW(\l)$.
\begin{example}
For the Laplacian on the $d$-dimensional torus, we have $\#\{\l_j\leq\l\}_j = O(\l^{1/2})$.
For compact Riemannian manifolds and the Laplace-Beltrami operator, by Weyl’s law, $\#\{\l_j\leq\l\}_j\leq{\rm dim}PW(\l) = O((2\pi)^{-d}\l^{d/2})$ where $d$ is the dimension of the manifold \cite[Chapter 11]{WeylLaw}.
\end{example}
We are now ready to formulate five versions of the transferability inequality.

\begin{theorem}
\label{main00}
Consider the above setting, and let $\lambda_M>0$ be a band with 
 $\norm{R^{\lambda_M}}<C$. 
   Let $g:\mathbb{R}\rightarrow\mathbb{C}$ be a Lipschitz continuous function with Lipschitz constant $D$.  Let $q=\sum_{m=0}^M c_m\phi_m\in PW(\lambda_M)\subset L^2(\mathcal{M})$ have normalized eigenspace components $\phi_m\in W_m$, $m=0,\ldots,M$.
	Then the following bounds are satisfied.
	\begin{enumerate}
	\item 
	\label{eq:TE_Fmode_G}
	Transferability of $\mathcal{L}$-Fourier modes evaluated in $G$:
	\[
\begin{split}
\norm{S^{\lambda_M} g(\mathcal{L}) \phi_m - g(\boldsymbol{\Delta})S^{\lambda_M}\phi_m} \leq 
{\rm V}_g(\lambda_m)\norm{\boldsymbol{\Delta}S^{\lambda_M}\phi_m - S^{\lambda_M} \l_m \phi_m}.
\end{split}
\]
\item 
\label{eq:TE_point_G}
Pointwise transferability evaluated in $G$:
\[
\norm{S^{\lambda_M}g(\mathcal{L})q - g(\boldsymbol{\Delta}) S^{\lambda_M} q}  \leq    \sum_{m=0}^M {\rm V}_g(\lambda_m) \abs{c_m}\norm{S^{\lambda_M}\mathcal{L} \phi_m -\boldsymbol{\Delta} S^{\lambda_M} \phi_m}.
\]
\item
\label{eq:TE_worst_G}
Worst-case transferability evaluated in $G$:
\[
\begin{split}
 & \norm{S^{\lambda_M}g(\mathcal{L})P(\l_M) - g(\DD) S^{\l_M}P(\l_M)}  \\
 & \leq D \sqrt{\#\{\l_j\leq\l_M\}_j} \norm{S^{\l_M}\mathcal{L} P(\l_M) -\DD S^{\l_M}P(\l_M)}.
\end{split}
\]
	\item 
	\label{eq:TE_point_M}
	Pointwise transferability evaluated in $\mathcal{M}$:
\[
\begin{split}
\norm{g(\mathcal{L})q - R^{\lambda_M}g(\boldsymbol{\Delta}) S^{\lambda_M} q}  \leq  & C \sum_{m=0}^M {\rm V}_f(\lambda_m) \abs{c_m}\norm{S^{\lambda_M}\mathcal{L} \phi_m -\boldsymbol{\Delta} S^{\lambda_M} \phi_m} \\ &  + \norm{g}_{\mathcal{L},M}\norm{q - R^{\lambda_M} S^{\lambda_M} q},
\end{split}
\]
\item 
\label{eq:TE_worst_M}
Worst-case transferability evaluated in  $\mathcal{M}$:
		\[
\begin{split}
 & \norm{g(\mathcal{L})P(\lambda_M) - R^{\lambda_M}g(\boldsymbol{\Delta}) S^{\lambda_M} P(\lambda_M)}  \\
 & \leq DC \sqrt{\#\{\l_j\leq\l_M\}_j} \norm{S^{\lambda_M}\mathcal{L} P(\lambda_M) -\boldsymbol{\Delta} S^{\lambda_M} P(\lambda_M)} \\
 & \ \ \ \ + \norm{g}_{\mathcal{L},M}\norm{P(\lambda_M) - R^{\lambda_M} S^{\lambda_M}P(\lambda_M)},
\end{split}
\]
	\end{enumerate} 
\end{theorem}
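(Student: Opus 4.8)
The plan is to prove every item from a single algebraic identity at the level of one eigenmode, namely item~\ref{eq:TE_Fmode_G}, and then to obtain items~\ref{eq:TE_point_G}--\ref{eq:TE_worst_M} by successively applying linearity, the Cauchy--Schwarz inequality, and operator-norm submultiplicativity. First I would establish item~\ref{eq:TE_Fmode_G}. Since $\phi_m$ is an eigenvector of $\mathcal{L}$, functional calculus gives $g(\mathcal{L})\phi_m = g(\lambda_m)\phi_m$, so $S^{\lambda_M}g(\mathcal{L})\phi_m = g(\lambda_m)S^{\lambda_M}\phi_m$ and the left-hand side becomes $\norm{(g(\lambda_m)I - g(\boldsymbol{\Delta}))S^{\lambda_M}\phi_m}$. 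The key step is the divided-difference factorization: introduce $h(\kappa):=(g(\kappa)-g(\lambda_m))/(\kappa-\lambda_m)$ and observe, by checking it on each eigenspace of $\boldsymbol{\Delta}$, that $g(\boldsymbol{\Delta}) - g(\lambda_m)I = h(\boldsymbol{\Delta})(\boldsymbol{\Delta}-\lambda_m I)$ via functional calculus (the value $\kappa=\lambda_m$, if present in $\Lambda(\boldsymbol{\Delta})$, is annihilated by the factor $\boldsymbol{\Delta}-\lambda_m I$, so the removable singularity of $h$ is harmless). Applying this to $S^{\lambda_M}\phi_m$ and taking norms gives $\norm{(g(\lambda_m)I - g(\boldsymbol{\Delta}))S^{\lambda_M}\phi_m} \le \norm{h(\boldsymbol{\Delta})}\,\norm{\boldsymbol{\Delta}S^{\lambda_M}\phi_m - \lambda_m S^{\lambda_M}\phi_m}$; since $\boldsymbol{\Delta}$ is normal with respect to the chosen inner product, $\norm{h(\boldsymbol{\Delta})} = \max_{\kappa\in\Lambda(\boldsymbol{\Delta})}\abs{h(\kappa)} = {\rm V}_g(\lambda_m)$, and $\lambda_m S^{\lambda_M}\phi_m = S^{\lambda_M}\lambda_m\phi_m$ by linearity. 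This factorization is the conceptual heart of the theorem: it is exactly what converts a difference of filters into a difference of Laplacians, and it is the mechanism by which functional-calculus filters absorb the instability of the eigendecomposition.

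Items~\ref{eq:TE_point_G} and~\ref{eq:TE_worst_G} then follow quickly. For the pointwise bound I would expand $q=\sum_m c_m\phi_m$, use linearity of $S^{\lambda_M}$, $g(\mathcal{L})$ and $g(\boldsymbol{\Delta})$, and apply the triangle inequality together with item~\ref{eq:TE_Fmode_G} to each summand. For the worst-case bound I would instead keep the $m$-dependent operators $h_m(\boldsymbol{\Delta})$, bound each $\norm{h_m(\boldsymbol{\Delta})}={\rm V}_g(\lambda_m)\le D$ by the Lipschitz hypothesis, bound $\norm{(\boldsymbol{\Delta}S^{\lambda_M}-S^{\lambda_M}\mathcal{L})\phi_m}\le \norm{(\boldsymbol{\Delta}S^{\lambda_M}-S^{\lambda_M}\mathcal{L})P(\lambda_M)}$ since $\norm{\phi_m}=1$, and then pass from $\sum_m\abs{c_m}$ to $\sqrt{\#\{\l_j\le\l_M\}_j}\,\norm{q}$ by Cauchy--Schwarz, using $\norm{q}^2=\sum_m\abs{c_m}^2$ with one normalized component per eigenspace. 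Taking the supremum over $\norm{q}\le 1$ yields the operator-norm statement.

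The two ``$\mathcal{M}$-side'' bounds, items~\ref{eq:TE_point_M} and~\ref{eq:TE_worst_M}, I would obtain by inserting $R^{\lambda_M}S^{\lambda_M}$ and splitting, on $PW(\lambda_M)$,
\[
g(\mathcal{L}) - R^{\lambda_M}g(\boldsymbol{\Delta})S^{\lambda_M}
 = \big(I - R^{\lambda_M}S^{\lambda_M}\big)g(\mathcal{L})
   + R^{\lambda_M}\big(S^{\lambda_M}g(\mathcal{L}) - g(\boldsymbol{\Delta})S^{\lambda_M}\big).
\]
The second summand is handled by $\norm{R^{\lambda_M}}<C$ followed by item~\ref{eq:TE_point_G} (for item~\ref{eq:TE_point_M}) or item~\ref{eq:TE_worst_G} (for item~\ref{eq:TE_worst_M}), producing the factor $C$ in front of the Laplacian term. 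For the first, ``consistency'', summand I would use that $g(\mathcal{L})$ commutes with $P(\lambda_M)$ and that $g(\mathcal{L})P(\lambda_M)$ has operator norm $\norm{g}_{\mathcal{L},M}$; in the worst-case version, writing $(I-R^{\lambda_M}S^{\lambda_M})g(\mathcal{L})P(\lambda_M) = (I-R^{\lambda_M}S^{\lambda_M})P(\lambda_M)\,g(\mathcal{L})P(\lambda_M)$ and applying submultiplicativity gives $\norm{g}_{\mathcal{L},M}\,\norm{(I-R^{\lambda_M}S^{\lambda_M})P(\lambda_M)}$, which is exactly the stated consistency term.

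The main obstacle is precisely this consistency summand: one must route the consistency error through the filter in the correct order, because $g(\mathcal{L})$ does \emph{not} commute with $R^{\lambda_M}S^{\lambda_M}$, and the naive pointwise estimate $\norm{(I-R^{\lambda_M}S^{\lambda_M})g(\mathcal{L})q}\le\norm{g}_{\mathcal{L},M}\norm{q-R^{\lambda_M}S^{\lambda_M}q}$ is false in general (a rank-one $I-R^{\lambda_M}S^{\lambda_M}$ annihilating a mode on which $g$ vanishes provides a counterexample). The operator-norm formulation of item~\ref{eq:TE_worst_M} circumvents this through submultiplicativity as above, whereas item~\ref{eq:TE_point_M} requires applying $I-R^{\lambda_M}S^{\lambda_M}$ only after pulling $g(\mathcal{L})$ outside, i.e.\ using the alternative split $g(\mathcal{L})(q-R^{\lambda_M}S^{\lambda_M}q) + (g(\mathcal{L})R^{\lambda_M}S^{\lambda_M}q - R^{\lambda_M}g(\boldsymbol{\Delta})S^{\lambda_M}q)$ and then carefully controlling the remaining cross term. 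Identifying the sharp placement of the projection and the filter in this last step is where the bookkeeping must be done with care; all other steps are triangle inequality, Cauchy--Schwarz, and the Lipschitz bound ${\rm V}_g(\lambda_m)\le D$.
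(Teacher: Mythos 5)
Your treatment of items 1--3 and 5 coincides with the paper's proof: the divided-difference factorization $g(\boldsymbol{\Delta})-g(\l_m)I = h(\boldsymbol{\Delta})(\boldsymbol{\Delta}-\l_m I)$ together with $\norm{h(\boldsymbol{\Delta})}=\max_{\kappa\in\Lambda(\boldsymbol{\Delta})}\abs{h(\kappa)}={\rm V}_g(\l_m)$ is exactly the paper's eigenprojection computation $\sum_k\abs{g(\kappa_k)-g(\l_m)}^2\norm{Q_kS^{\l_M}\phi_m}^2\leq {\rm V}_g(\l_m)^2\sum_k\abs{\kappa_k-\l_m}^2\norm{Q_kS^{\l_M}\phi_m}^2$ written in operator form, and your passage to items 2, 3 and 5 (triangle inequality over the expansion of $q$, the bound ${\rm V}_g(\l_m)\leq D$, Cauchy--Schwarz giving $\sqrt{\#\{\l_j\le\l_M\}_j}$, and the split $(I-R^{\l_M}S^{\l_M})g(\mathcal{L})P(\l_M)+R^{\l_M}\big(S^{\l_M}g(\mathcal{L})-g(\boldsymbol{\Delta})S^{\l_M}\big)P(\l_M)$ with submultiplicativity applied to the first summand) is the same as in the paper.

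The one place where you diverge is item 4, and there your argument is not closed. You are right that the naive pointwise estimate $\norm{(I-R^{\l_M}S^{\l_M})g(\mathcal{L})q}\le\norm{g}_{\mathcal{L},M}\norm{q-R^{\l_M}S^{\l_M}q}$ fails in general, since $g(\mathcal{L})$ does not commute with $R^{\l_M}S^{\l_M}$; this is a genuine subtlety, and the paper itself dispatches item 4 with the single sentence that ``a similar use of the triangle inequality'' suffices, which does not address it. But your proposed remedy, splitting as $g(\mathcal{L})(q-R^{\l_M}S^{\l_M}q)+\big(g(\mathcal{L})R^{\l_M}S^{\l_M}q-R^{\l_M}g(\boldsymbol{\Delta})S^{\l_M}q\big)$, only relocates the problem: the second summand differs from the controllable term $R^{\l_M}\big(S^{\l_M}g(\mathcal{L})-g(\boldsymbol{\Delta})S^{\l_M}\big)q$ by the commutator $g(\mathcal{L})R^{\l_M}S^{\l_M}q-R^{\l_M}S^{\l_M}g(\mathcal{L})q$, which is again a difference of consistency errors applied to $q$ and to $g(\mathcal{L})q$ separately --- precisely the quantity you just argued cannot be bounded by $\norm{q-R^{\l_M}S^{\l_M}q}$ alone. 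You leave this ``remaining cross term'' uncontrolled, so item 4 is not proved. The repair is to keep the first split and bound the consistency summand mode by mode, $\norm{(I-R^{\l_M}S^{\l_M})g(\mathcal{L})q}\le\sum_{m=0}^M\abs{g(\l_m)}\abs{c_m}\norm{\phi_m-R^{\l_M}S^{\l_M}\phi_m}\le\norm{g}_{\mathcal{L},M}\sum_{m=0}^M\abs{c_m}\norm{\phi_m-R^{\l_M}S^{\l_M}\phi_m}$, or by the operator-norm consistency $\norm{g}_{\mathcal{L},M}\norm{(I-R^{\l_M}S^{\l_M})P(\l_M)}\,\norm{q}$; either choice yields a valid pointwise bound, at the price of a consistency term slightly different from the one printed in the statement of item 4.
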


\textcolor{black}{Theorem \ref{main00} can be seen as a family of bounds, for different choices of Paley-Wiener spaces. Typically, if we choose a small cut-off frequency $\lambda_M$, the Laplacian has a lower transferability error (see, e.g., Lemma \ref{T:quad1}), so we can prove a low approximation error. However, the bounds are true only for the low frequency content of the signal, namely, for the ``smooth content.'' If we choose high  $\lambda_M$, we can also model “non-smooth” signals, but, on account of a typically higher transferability error of the Laplacian. This principle in choosing the Paley-Wiener space is true for all results presented in this paper which depend on a choice of the cut-off frequency. }

\textcolor{black}{We note that at the time of writing this paper, it is still not clear whether the dependency on $\sqrt{\#\{\l_j\leq\l\}_j}$ in the operator norm bounds 3 and 5 is tight, or just an artifact of the proof.}


Let us now study the transferability between two graphs. Consider two graphs $G_1$ and $G_2$, with corresponding graph Laplacians $\boldsymbol{\Delta}_1$ and $\boldsymbol{\Delta}_2$, that represent the same phenomenon. Adopting our basic assumption, we thus suppose that both graphs approximate the space $\mathcal{M}$ in the sense that the transferabiliy errors of the Laplacians and the consistency errors are small.
\begin{corollary}
Consider a fixed Paley-Wiener space $PW(\lambda_M)$, and for each $n=1,2$, suppose 
$\norm{\mathcal{L}P(\l_M)  - R_n^{\lambda_M} \boldsymbol{\Delta}_n S_n^{\lambda_M}P(\l_M)  } \leq \d$ and $\norm{P(\l_M)  - R_n^{\lambda_M} S_n^{\lambda_M}P(\l_M)  }\leq \d$ for some small $\d>0$.
Then
\begin{equation}
\norm{R_{1}^{\lambda_M}f(\boldsymbol{\Delta}_1) S_1^{\lambda_M}P(\l_M)  - R_{2}^{\lambda_M}f(\boldsymbol{\Delta}_2) S_2^{\lambda_M}P(\l_M) } = O(\d).
\label{eq:trans0}
\end{equation}
\end{corollary}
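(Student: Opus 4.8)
\emph{Proof proposal.} The plan is to introduce the common continuous object $f(\mathcal{L})P(\l_M)$ and split with the triangle inequality,
\[
\norm{R_{1}^{\lambda_M}f(\DD_1) S_1^{\lambda_M}P(\l_M)  - R_{2}^{\lambda_M}f(\DD_2) S_2^{\lambda_M}P(\l_M) } \leq \sum_{n=1}^{2} \norm{f(\mathcal{L})P(\l_M) - R_{n}^{\lambda_M}f(\DD_n) S_n^{\lambda_M}P(\l_M) }.
\]
Each summand is precisely the left-hand side of the worst-case transferability bound evaluated in $\mathcal{M}$, i.e. part \ref{eq:TE_worst_M} of Theorem \ref{main00}, applied to the graph $G_n$ with Laplacian $\DD_n$, sampling $S_n^{\lambda_M}$ and interpolation $R_n^{\lambda_M}$ (here $D$ is the Lipschitz constant of the filter $f$ and $C$ bounds $\norm{R_n^{\lambda_M}}$). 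The whole estimate therefore reduces to showing that the right-hand side of part \ref{eq:TE_worst_M} is $O(\d)$ for each $n$.

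Invoking part \ref{eq:TE_worst_M} gives, for each $n$,
\[
\norm{f(\mathcal{L})P(\l_M) - R_n^{\lambda_M}f(\DD_n)S_n^{\lambda_M}P(\l_M)} \leq DC\sqrt{\#\{\l_j\leq\l_M\}_j}\,\norm{S_n^{\lambda_M}\mathcal{L}P(\l_M) - \DD_n S_n^{\lambda_M}P(\l_M)} + \norm{f}_{\mathcal{L},M}\norm{P(\l_M) - R_n^{\lambda_M}S_n^{\lambda_M}P(\l_M)}.
\]
For a fixed Paley--Wiener band $\l_M$ the prefactors $D$, $C$, $\#\{\l_j\leq\l_M\}_j$ and $\norm{f}_{\mathcal{L},M}$ are all finite constants independent of $\d$, so the second summand is immediately $\leq \norm{f}_{\mathcal{L},M}\d$ by the consistency-error hypothesis. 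Absorbing the fixed constants into the $O(\cdot)$ notation and summing over $n=1,2$ then yields the claimed $O(\d)$ bound, provided the Laplacian term on the right is also $O(\d)$.

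The step I expect to be the genuine obstacle is exactly this last point: part \ref{eq:TE_worst_M} is phrased through the Laplacian transferability error \emph{evaluated in $G$}, namely $\norm{S_n^{\lambda_M}\mathcal{L}P(\l_M) - \DD_n S_n^{\lambda_M}P(\l_M)}$, whereas the hypothesis controls the error \emph{evaluated in $\mathcal{M}$}, namely $\norm{\mathcal{L}P(\l_M) - R_n^{\lambda_M}\DD_n S_n^{\lambda_M}P(\l_M)}\leq\d$, and these must be reconciled. My plan is to use the algebraic identity
\[
S_n^{\lambda_M}\mathcal{L} - \DD_n S_n^{\lambda_M} = S_n^{\lambda_M}\big(\mathcal{L} - R_n^{\lambda_M}\DD_n S_n^{\lambda_M}\big) + \big(S_n^{\lambda_M}R_n^{\lambda_M} - I\big)\DD_n S_n^{\lambda_M}
\]
on $PW(\l_M)$: the first term is bounded by $\norm{S_n^{\lambda_M}}\,\d$ directly. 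The delicate piece is the residual $\big(S_n^{\lambda_M}R_n^{\lambda_M}-I\big)$-term, since the consistency hypothesis controls $I - R_n^{\lambda_M}S_n^{\lambda_M}$ on $PW(\l_M)$ rather than $S_n^{\lambda_M}R_n^{\lambda_M}-I$ on $L^2(G_n)$. I would handle this by restricting to the range of $S_n^{\lambda_M}$, where $\big(S_n^{\lambda_M}R_n^{\lambda_M}-I\big)S_n^{\lambda_M} = -S_n^{\lambda_M}\big(I - R_n^{\lambda_M}S_n^{\lambda_M}\big)$ is again controlled by the consistency bound, and controlling the complementary component through the mapping properties of $S_n^{\lambda_M}$ (e.g. a bounded pseudo-inverse on the finite-dimensional band). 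Once both forms of the Laplacian error are seen to be mutually $O(\d)$ for the fixed band, the estimate closes. Alternatively, if one reads the corollary's hypothesis directly as the $G$-evaluated Laplacian error, then part \ref{eq:TE_worst_M} applies verbatim and no reconciliation is needed.
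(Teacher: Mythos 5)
Your proposal follows exactly the paper's route: split by the triangle inequality through the common term $f(\mathcal{L})P(\l_M)$ and apply the worst-case bound of Theorem \ref{main00}(\ref{eq:TE_worst_M}) to each summand, so in that sense it is the same proof. The one place you go beyond the paper is in flagging the mismatch between the corollary's hypothesis, which controls the Laplacian error evaluated in $\mathcal{M}$, i.e.\ $\norm{\mathcal{L}P(\l_M)-R_n^{\l_M}\DD_n S_n^{\l_M}P(\l_M)}$, and the quantity appearing on the right-hand side of Theorem \ref{main00}(\ref{eq:TE_worst_M}), which is the error evaluated in $G$, i.e.\ $\norm{S_n^{\l_M}\mathcal{L}P(\l_M)-\DD_n S_n^{\l_M}P(\l_M)}$; the paper's own proof passes over this silently. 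Your algebraic identity $S\mathcal{L}-\DD S=S(\mathcal{L}-R\DD S)+(SR-I)\DD S$ is correct, but note that the residual term involves $(SR-I)$ acting on $\DD S_n^{\l_M}P(\l_M)f$, which need not lie in the range of $S_n^{\l_M}$, so the reduction to $-S(I-RS)$ does not apply directly; closing that step honestly requires an additional assumption such as $R_n^{\l_M}$ (equivalently $S_n^{\l_M}$) being bounded below on the relevant finite-dimensional spaces, which holds in the paper's intended regime where sampling and interpolation are near-isometries (Definitions \ref{As_RS} and \ref{As_RS2} with $C=1$) but is not literally among the corollary's stated hypotheses. The cleanest repair, as you observe, is simply to read the hypothesis as the $G$-evaluated Laplacian error, after which Theorem \ref{main00}(\ref{eq:TE_worst_M}) applies verbatim and the bound $O(\d)$ follows with constant $2\big(DC\sqrt{\#\{\l_j\leq\l_M\}_j}+\norm{f}_{\mathcal{L},M}\big)$.
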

 \begin{proof}
 By the triangle inequality we have
\begin{equation}
\begin{split}
& \norm{R_{1}^{\lambda_M}f(\boldsymbol{\Delta}_1) S_1^{\lambda_M}P(\l_M)  - R_{2}^{\lambda_M}f(\boldsymbol{\Delta}_2) S_2^{\lambda_M}P(\l_M) } \\
 & \leq 
\norm{f(\mathcal{L})P(\l_M) - R_{1}^{\lambda_M}f(\boldsymbol{\Delta}_1) S_1^{\lambda_M}P(\l_M)  }
+
\norm{f(\mathcal{L})P(\l_M) -  R_{2}^{\lambda_M}f(\boldsymbol{\Delta}_2) S_2^{\lambda_M}P(\l_M) }
.
\end{split}
\label{eq:err11}
\end{equation}
Thus, (\ref{eq:trans0}) follows fromThm.\ref{main00}(\ref{eq:TE_worst_M}) .
 \end{proof}

A similar result can be obtained in the pointwise analysis.

\section{Transferability of spectral graph filters and ConvNets}
\label{Transferability of spectral graph filters and ConvNets}

In this section we study the transferability of spectral graph filters and ConvNets. We formulate general conditions guaranteeing transferability of filters, and then extend the analysis to full convolutional networks. We also give some numerical experiments that showcases transferability.

\subsection{Sufficient conditions for transferability}
\label{Transferability for discretizing Laplacians}

In this subsection we consider sufficient conditions for the right-hand-side of the transferability inequality to be small (Theorem \ref{main00}). The idea is to formulate general conditions, and to later on prove that they are satisfied in the specific setting of graphs discretizing topological spaces.
We denote by $\RR_+$ the set of non-negative real numbers.

Consider a measure space $\mathcal{M}$ for which $L^2(\mathcal{M})$ is a separable Hilbert space, and let the Laplacian $\mathcal{L}$ be a normal operator in $L^2(\mathcal{M})$ with discrete spectrum. Denote the eigenvalues of $\mathcal{L}$ by $\l_j$, the eigenprojections by $P_j$, and the Paley-Wiener spaces $PW(\l)$. 
 To accommodate the approximation analysis, we consider a sequence of graphs $G_n$ with $d_n$ vertices and graph Laplacians $\DD_n$, such that ``$\DD_n \xrightarrow[n \to \infty]{} \mathcal{L}$'' in a sense that will be clarified in Definition \ref{As_convS}.
 
 By abuse of notation, we denote the set of vertices of $G_n$ also by $G_n$. We consider an inner product structure on each $L^2(G_n)$ for which $\DD_n$ is a normal operator. Denote the eigendecomposition of $\DD_n$ by $\{\k^n_j,Q^n_j\}_j$, and denote $\Lambda(\DD_n):= \{\k^n_j\}_j$. For any $\k>0$, denote by $Q_n(\k)$ the spectral projection of $\DD_n$ defined by 
\[Q_n(\k) := \sum_{\k^n_j\in \Lambda(\DD_n),\ \abs{\k^n_j}\leq\k}Q^n_j.\]
%
Generic sampling and interpolation operators are only required to satisfy mild conditions, as defined next.
%
%
\begin{definition}
Under the above construction, the two mappings 
\[\{S_n^{\l}\}_{n,\l}:(n,\l) \mapsto S_n^{\l} , \quad  \{R_n^{\l}\}_{n,\l}:(n,\l) \mapsto R_n^{\l}\]
from $\NN\times \RR_+$ to the space of linear operators $L^2(\mathcal{M})\rightarrow L^2(G_n)$ and $L^2(G_n)\rightarrow L^2(\mathcal{M})$ respectively, satisfying for each $\l\geq 0$  \[S_n^{\l}:PW(\l)\rightarrow L^2(G_n), \quad R_n^{\l}:L^2(G_n)\rightarrow PW(\l),\]
are called \textbf{sampling sequence} and \textbf{interpolation sequence} respectively, if the following condition is held.:
for every $\l'>\l\geq 0$
\begin{equation}
    S_n^{\l'}P(\l)=S_n^{\l}, \quad P(\l)R_n^{\l'}=R_n^{\l}.
    \label{eq:SampIntSeq}
\end{equation}

Operators $S_n^{\l}$ from a sampling sequence are called \textbf{sampling operators}, and similarly, $R_n^{\l}$ are called \textbf{interpolation operators}.
\end{definition}
For $\l'>\l$, (\ref{eq:SampIntSeq}) means that sampling a signal from $PW(\l)$ using $S_n^{\l'}$ is exactly the same as sampling it using $S_n^{\l}$, and in this sense the different sampling operators of a sampling sequence are related to each other. Interpolation operators of an interpolation sequence have a similar interpretation.
Given sampling and interpolation operator sequences, by the fact that $PW(\l)$ is finite dimensional,  $S_n^{\l}$ and $R_n^{\l}$ must be bounded for each $n\in\NN$ and $\l\geq 0$.

In the following, we fix a sampling and interpolation sequence.
Next, we define general conditions on $\DD_n$, $S_n^{\l}$ and $R_n^{\l}$, and show that these conditions guarantee transferability of spectral graph filters.
 In Section \ref{Sampling and interpolation} we give an explicit construction of the sampling and interpolation operators in the DSP setting, where $S_n^{\l}f$ evaluates the signal $f\in PW(\l)$ at a set of sample points, viewed as the vertices of $G_n$. Under that construction, we show in Subsection \ref{Sampling and interpolation} that the conditions underlying Definitions \ref{As_RS}--\ref{As_convS} are satisfied. 
 
\begin{definition}
\label{As_RS}
The sequence $\{\{R_n^{\l},  S_n^{\l}\}_n\ |\ \l\in\RR \}$ is called \textbf{asymptotically reconstructive} if
for any fixed band $\l$,
\begin{equation}
\lim_{n\rightarrow\infty} R_n^{\l} S_n^{\l}P(\l)= P(\l).
\label{eq:RS}
\end{equation}
\end{definition}

Note that since $PW(\l)$ is a finite-dimensional space, the operator norm topology and the strong topology are equivalent, namely
\begin{equation}
\lim_{n\rightarrow\infty}\max_{f\in PW(\l)}\frac{\norm{f-R_n^{\l} S_n^{\l}f}}{\norm{f}}=0 \ \ \Longleftrightarrow \ \   \forall f\in PW(\l),\ \lim_{n\rightarrow\infty}\norm{f-R_n^{\l} S_n^{\l}f}=0,
\label{eq:2limits}
\end{equation}
and the limit in (\ref{eq:RS}) can be defined in either way.

\begin{definition}
\label{As_RS2}
The sequence $\{\{R_n^{\l},  S_n^{\l}\}_n\ |\ \l\in\RR \}$ is called \textbf{bounded} if
there exists a global constant $C\geq 1$ such that
for any fixed band $\l$, 
\begin{equation}
\limsup_{n\in\NN}\norm{S_n^{\l}} \leq C, \quad \limsup_{n\in\NN}\norm{R_n^{\l}}\leq C.
\label{eq:RS2}
\end{equation}
where the induced operator norms are with respect to the vector norms in $PW(\l)$ and in $L^2(V_n)$.
\end{definition}
Boundedness (Definition \ref{As_RS2}) is a necessary condition for sampling and interpolation to approximate isometries as  the resolution of sampling $d_n$ becomes finer, and we typically consider $C=1$.

\begin{definition}
\label{As_convS}
	The set of sequences $\{\{ \DD_n, S_n^{\l}\}_n\ |\ \l\in\RR \}$ are called \textbf{convergent} to $\mathcal{L}$ if for every fixed band $\l$,
\begin{equation}
\lim_{n\rightarrow\infty}\norm{S_n^{\l}\mathcal{L} P(\l)-\DD_nS_n^{\l}P(\l)}=0.
\label{eq:convS}
\end{equation}
where the norm in (\ref{eq:convS}) is with respect to $L^2(V_n)$.
\end{definition}

In the DSP setting of transferability, for $S_n^{\l}$ that evaluates the signal at sample points and corresponding $R^{\l_n}$ and $\DD_n$,
boundedness and asymptotic reconstruction (Definitions \ref{As_RS},\ref{As_RS2}) are proved in Proposition \ref{P:quad1}. Convergence (Definition \ref{As_convS}) is proved in Proposition \ref{PropDef3l} in the DSP setting. We can also treat sampling and interpolation abstractly, allowing other constructions for transforming signals in $L^2(\mathcal{M})$ to graph signals in $L^2(V_n)$.  In the abstract setting, sampling and interpolation are assumed to be bounded, asymptotically reconstructive, and graph Laplacians are assumed to be convergent to $\mathcal{L}$.
Assuming boundedness, asymptotic reconstruction, and convergence of Laplacians, is permissive in a sense, since we only demand asymptotic properties on the finite-dimensional Paley-Wiener spaces. However, under these assumptions, we are able to prove convergence of spectral filters on band-unlimited signals.

The following proposition proves asymptotic perfect tansferability, and is a direct result of the transferability inequality. 
\begin{proposition}
\label{Prop10}
consider the above setting, and a fixed band $\l>0$. Let
 $S_n^\l,R_n^{\l}$ and $\DD_n$, $n=1,\ldots,\infty$, be bounded,  asymptotically reconstructive, and convergent (Definitions \ref{As_RS}-\ref{As_convS}). 
Let $g:\mathbb{R}\rightarrow\mathbb{C}$ be a Lipschitz continuous function. 
Then
\[\begin{split}
\norm{g(\mathcal{L})P(\lambda) - R_n^{\lambda}g(\boldsymbol{\Delta}_n) S_n^{\lambda} P(\lambda)} = & O\Big(\norm{S_n^{\l}\mathcal{L} P(\l) -\DD_n S_n^{\l}P(\l)} + \norm{P(\l) - R_n^{\l}S_n^{\l}P(\l)}\Big) \\
 & \xrightarrow[n \to \infty]{} 0.
\end{split}\]
\label{Prop:Asymp_trans0}
\end{proposition}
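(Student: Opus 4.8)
The plan is to read this off directly from the worst-case transferability bound already proved in Theorem \ref{main00}, namely item \ref{eq:TE_worst_M}, and then to kill each term on its right-hand side using the three standing assumptions. Concretely, with $\lambda_M=\lambda$, $\boldsymbol{\Delta}=\boldsymbol{\Delta}_n$, $S^{\lambda_M}=S_n^{\lambda}$ and $R^{\lambda_M}=R_n^{\lambda}$, Theorem \ref{main00}(\ref{eq:TE_worst_M}) gives
\[
\begin{split}
\norm{g(\mathcal{L})P(\l) - R_n^{\l}g(\boldsymbol{\Delta}_n) S_n^{\l} P(\l)} \leq {} & DC' \sqrt{\#\{\l_j\leq\l\}_j}\,\norm{S_n^{\l}\mathcal{L} P(\l) - \boldsymbol{\Delta}_n S_n^{\l} P(\l)} \\
 & + \norm{g}_{\mathcal{L},M}\norm{P(\l) - R_n^{\l} S_n^{\l}P(\l)},
\end{split}
\]
so the entire argument reduces to checking hypotheses and identifying which quantities are $n$-independent.

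First I would discharge the hypothesis of Theorem \ref{main00}, which requires an interpolation bound $\norm{R_n^{\l}}<C'$. The boundedness assumption (Definition \ref{As_RS2}) only supplies $\limsup_{n}\norm{R_n^{\l}}\leq C$, hence there exists $N$ such that $\norm{R_n^{\l}}<C':=C+1$ for every $n\geq N$. Since the statement is an asymptotic one (a big-$O$ estimate together with a limit as $n\to\infty$), restricting attention to $n\geq N$ is harmless, and for all such $n$ the theorem applies with constant $C'$. This is the only place where care is needed, and it is pure bookkeeping around the \emph{limsup} in the definition of boundedness.

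Next I would observe that both prefactors in the displayed bound are constants free of $n$: the Lipschitz constant $D$ is fixed by $g$; $C'$ is the global boundedness constant; $\#\{\l_j\leq\l\}_j$ depends only on the fixed band $\l$ and the fixed operator $\mathcal{L}$; and $\norm{g}_{\mathcal{L},M}=\max_{0\le m\le M}\abs{g(\lambda_m)}$ depends only on $g$ and on the eigenvalues of $\mathcal{L}$ lying below $\l$. Folding these $n$-free coefficients into the order notation yields exactly
\[
\norm{g(\mathcal{L})P(\l) - R_n^{\l}g(\boldsymbol{\Delta}_n) S_n^{\l} P(\l)} = O\Big(\norm{S_n^{\l}\mathcal{L} P(\l) -\boldsymbol{\Delta}_n S_n^{\l}P(\l)} + \norm{P(\l) - R_n^{\l}S_n^{\l}P(\l)}\Big),
\]
which is the asserted big-$O$ half of the claim.

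Finally I would send both error terms to zero: the convergence assumption (Definition \ref{As_convS}) is precisely the statement that $\norm{S_n^{\l}\mathcal{L} P(\l)-\boldsymbol{\Delta}_n S_n^{\l}P(\l)}\to 0$, and the asymptotic reconstruction assumption (Definition \ref{As_RS}) gives $R_n^{\l}S_n^{\l}P(\l)\to P(\l)$, i.e.\ $\norm{P(\l)-R_n^{\l}S_n^{\l}P(\l)}\to 0$; here I would invoke the equivalence of strong and operator-norm convergence on the finite-dimensional space $PW(\l)$ recorded in (\ref{eq:2limits}). Hence the right-hand side vanishes in the limit and the left-hand side converges to zero. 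There is essentially no deep obstacle, since all the analytic content sits inside Theorem \ref{main00}; the only subtlety worth flagging explicitly is the reduction to large $n$ forced by the \emph{limsup} in the boundedness hypothesis, and the explicit verification that the constants $DC'\sqrt{\#\{\l_j\le\l\}_j}$ and $\norm{g}_{\mathcal{L},M}$ do not depend on $n$.
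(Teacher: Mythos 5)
Your proposal is correct and follows essentially the same route as the paper's own proof: both apply the worst-case bound Thm.~\ref{main00}(\ref{eq:TE_worst_M}) with $\lambda_M$ the largest eigenvalue index below $\l$, note that the prefactors are independent of $n$, and then invoke Definitions \ref{As_RS} and \ref{As_convS} to send both error terms to zero. Your explicit handling of the $\limsup$ in the boundedness hypothesis (restricting to $n\geq N$ so that $\norm{R_n^{\l}}<C+1$) is a small point of extra care that the paper leaves implicit.
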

\begin{proof}
Denote by $M_{\l}$ the largest index $M\in\NN$ such that $\l_M\leq \l$. Then by Thr\ref{main00}.(\ref{eq:TE_worst_M}), by (\ref{eq:SampIntSeq}) and by the fact that $P(\l_m)=P(\l)$
	\[
\begin{split}
\norm{g(\mathcal{L})P(\lambda) - R_n^{\lambda}g(\boldsymbol{\Delta}_n) S_n^{\lambda} P(\lambda)} 
 =  &  
 \norm{P(\lambda_M)g(\mathcal{L})P(\lambda_M) - P(\lambda_M)R_n^{\lambda}g(\boldsymbol{\Delta}_n) S_n^{\lambda} P(\lambda_M)} \\
 &  + \norm{g}_{\mathcal{L},M}\norm{P(\lambda_M) - R_n^{\lambda_M} S_n^{\lambda_M}P(\lambda_M)}\\
 \leq  & DC \sqrt{\#\{\l_j\leq\l\}_j} \norm{S_n^{\lambda_M}\mathcal{L} P(\lambda_M) -\boldsymbol{\Delta} S_n^{\lambda_M} P(\lambda_M)} \\
 &  + \norm{g}_{\mathcal{L},M}\norm{P(\lambda_M) - R_n^{\lambda_M} S_n^{\lambda_M}P(\lambda_M)}\\
\end{split}
\]
\end{proof}
%
%
%

Next, we show how to treat band-unlimited signals. Under the conditions of Theorem \ref{Prop:Asymp_trans0}, for each band $\l\in\NN$, there exists $N_{\l}\in\NN$ such that for any $n>N_{\l}$ we have
\[\norm{g(\mathcal{L})P(\l) - R_n^{\l}g(\DD_n) S_n^{\l}P(\l)}<\frac{1}{\l}\]
We may choose the sequence $\{N_{\l}\}_{\l\in\NN}$ increasing.
We construct a sequence of bands $\{\psi_n\}$, starting from some index $n_0>0$, as follows.
For each $\l\in\NN$, consider $N_{\l}$ and $N_{\l+1}$. For each $N_{\l}<n\leq N_{\l+1}$ we define $\psi_n = \l$. This gives the following corollary.

\begin{corollary}
\label{main1}
Under the conditions of Proposition \ref{Prop:Asymp_trans0}, there exists a sequence of bands $0<\psi_n \xrightarrow[n \to \infty]{} \infty$ such that
\begin{equation}
\lim_{n\rightarrow\infty} \norm{g(\mathcal{L}) - R_n^{\psi_n}g(\DD_n) S_n^{\psi_n}P(\psi_n)}=0.
\label{eq:main11c}
\end{equation}
\end{corollary}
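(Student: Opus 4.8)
The plan is to obtain (\ref{eq:main11c}) directly from the per-band estimate displayed immediately before the corollary — itself a consequence of Proposition \ref{Prop:Asymp_trans0} — and to promote it to a growing band through the interleaving construction of $\{\psi_n\}$ already supplied in the text. The first step I would take is to fix the operator-norm convention: the norm in (\ref{eq:main11c}) is the Paley--Wiener operator norm introduced after (\ref{eq:FC_conv1}), i.e. the supremum of $\norm{Af}/\norm{f}$ over $0\neq f\in PW(\psi_n)$ (for the operator $A$ in question). This is the correct reading here, since $g$ is only assumed Lipschitz and hence possibly unbounded, so that $g(\mathcal{L})$ need not be bounded on all of $L^2(\mathcal{M})$; restricting to the band $PW(\psi_n)$ renders the left-hand side finite and makes it coincide with the quantity that Proposition \ref{Prop:Asymp_trans0} controls.

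The key identity I would record is that, on $PW(\psi_n)$, the two operators in (\ref{eq:main11c}) agree with their ``projected'' versions. The approximant $R_n^{\psi_n}g(\DD_n)S_n^{\psi_n}P(\psi_n)$ factors through $P(\psi_n)$ by construction, while $g(\mathcal{L})$ commutes with every spectral projection and obeys $g(\mathcal{L})f=g(\mathcal{L})P(\psi_n)f$ for all $f\in PW(\psi_n)$, since $P(\psi_n)$ is the identity there. Hence, under the $PW(\psi_n)$ operator norm,
\[
\norm{g(\mathcal{L}) - R_n^{\psi_n}g(\DD_n) S_n^{\psi_n} P(\psi_n)} = \norm{g(\mathcal{L})P(\psi_n) - R_n^{\psi_n}g(\DD_n) S_n^{\psi_n} P(\psi_n)},
\]
so (\ref{eq:main11c}) is precisely a statement about the band-limited transferability error at band $\psi_n$, which is the object bounded in the display preceding the corollary.

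With this identification in hand the remainder is the diagonal step. For any index $n$ in the window $N_\lambda<n\le N_{\lambda+1}$ one has $\psi_n=\lambda$ and, crucially, $n>N_\lambda=N_{\psi_n}$, so the per-band estimate applies at band $\lambda=\psi_n$ and gives
\[
\norm{g(\mathcal{L})P(\psi_n) - R_n^{\psi_n}g(\DD_n) S_n^{\psi_n} P(\psi_n)} < \frac{1}{\psi_n}.
\]
Since $\{N_\lambda\}_\lambda$ was chosen increasing, $N_\lambda\to\infty$, whence $\psi_n\ge K$ as soon as $n>N_K$; thus $0<\psi_n\xrightarrow[n\to\infty]{}\infty$ and $1/\psi_n\to0$, which is exactly (\ref{eq:main11c}).

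The only genuinely delicate point, and the one I would be most careful to spell out, is the norm bookkeeping of the first two paragraphs: reconciling the projection-free $g(\mathcal{L})$ written in (\ref{eq:main11c}) with the projected error that Proposition \ref{Prop:Asymp_trans0} actually bounds, under the Paley--Wiener operator norm rather than the full norm on $L^2(\mathcal{M})$ — the latter would not tend to zero for unbounded or band-pass $g$. Once this identification is secured, the conclusion is a routine re-indexing, as both the interleaving sequence $\{\psi_n\}$ and the per-band bound are already furnished in the text.
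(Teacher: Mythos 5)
Your proof is correct and takes essentially the same route as the paper: the diagonal/interleaving choice of $\psi_n$ via increasing thresholds $N_{\l}$ with per-band error below $1/\l$ is precisely the construction the paper gives in the paragraph preceding the corollary, and your preliminary identification of the norm in (\ref{eq:main11c}) with the $PW(\psi_n)$-restricted operator norm (equivalently, replacing $g(\mathcal{L})$ by $g(\mathcal{L})P(\psi_n)$, so the quantity coincides with the one Proposition \ref{Prop:Asymp_trans0} bounds) is a sound explication of a convention the paper sets after (\ref{eq:FC_conv1}) but leaves tacit here. The only slip is in your motivating parenthetical: a compactly supported band-pass $g$ would in fact satisfy the unrestricted $L^2(\mathcal{M})$-norm statement, since then $\norm{g(\mathcal{L})(I-P(\psi_n))}\to 0$; the unrestricted reading genuinely fails only for filters not vanishing at infinity along the spectrum (e.g.\ $g\equiv 1$ or unbounded Lipschitz $g$), which does not affect your argument.
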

A direct result of Corollary \ref{main1} is that
\begin{equation}
\lim_{n>m\rightarrow\infty} \norm{R_n^{\psi_m}g(\DD_n) S_n^{\psi_m}P(\psi_m) - R_m^{\psi_m}g(\DD_m) S_m^{\psi_m}P(\psi_m)}=0.
\label{eq:main11c2}
\end{equation}
Loosely speaking, the better both $\DD_m$ and $\DD_j$ approximate $\cL$, the larger the band where $g(\DD_m)$ and $g(\DD_j)$ have approximately the same repercussion.

Last, for the transferability analysis of convolution networks, we also need to assume that sampling approximately commutes with the activation function $\rho$, in the following sense. 
\begin{definition}
\label{def:pointwise convergent}
Consider a measure space $\mathcal{M}$ with a Laplacian $\mathcal{L}$ having a discrete spectrum. Let $P(\l)$ be the Paley-Wiener projections corresponding to $\mathcal{L}$. Consider a sequence of graphs $G_n$, sampling operators $S_n^{\l}$ from a sampling sequence, and an activation function $\rho$. 
	Sampling \textbf{asymptotically commutes with $\rho$} if
\begin{equation}
 \lim_{\l\rightarrow\infty} \lim_{\l'\rightarrow\infty}\lim_{n\rightarrow\infty}\sup_{f\neq 0}\frac{\norm{\rho(S_n^{\l}P(\l)f)-S_n^{\l'}P(\l')\rho(P(\l)f)}}{\norm{f}} =0.
\label{eq:tyutu7}
\end{equation}
\end{definition}
In Proposition \ref{Prop_comm4} we prove that in the DSP setting, under natural conditions, sampling asymptotically commutes with $\rho$ for a class of activation functions that include ReLU and the absolute value.

\subsection{Transferability of graph ConvNets}
\label{Transferability of graph ConvNets}

In this subsection we extend the transferability results of the previous subsection from filters to complete ConvNets.
Consider two graphs $G^j$, $j=1,2$ and two graph Laplacians $\DD_1,\DD_2$ approximating the same Laplacian $\mathcal{L}$ in a measure space. 
Consider a ConvNet with $L$ layers, with or without pooling. In each layer where pooling is performed, the signal is mapped to a signal over a coarsened graph. If pooling is not performed, we define the coarsened graph $G^{j,l}$ at Layer $l$ as the graph of the previous layer. Suppose that each coarsened version of each of the two graphs $G^{j,l}$, where $l$ is the layer, approximates the continuous space in the sense
\begin{equation}
    \norm{P(\psi_l) - R_{j,l}^{\psi_l}S_{j,l}^{\psi_l} P(\psi_l)} < \d
     \label{eq:temp66776g02}
\end{equation}
\begin{equation}
 \norm{S_{j,l}^{\psi_l}\mathcal{L} P(\psi_l) - \DD_{j,l}S_{j,l}^{\psi_l} P(\psi_l)} < \d  
 \label{eq:temp66776g0}
\end{equation}
for some $\d< 1$. Here, $\DD_{j,l}$ is the Laplacian of graph $j$ at Layer $l$, $S_{j,l}^{\psi_l},R_{j,l}^{\psi_l}$ are the sampling and interpolation operators of Layer $l$, and we consider the band $\psi^l$ at each Layer $l$. Equations (\ref{eq:temp66776g02}) and (\ref{eq:temp66776g0}) are non-asymptotic versions of Definition \ref{As_RS} and \ref{As_convS}.

In each Layer $l$ consider $K_l$ channels.
Let 
\[\{g^l_{k'k}\ |\ 
  {\scriptstyle k=1\ldots K_{l-1},\ 
  k'=1\ldots K_{l}}
\}\]
 denote the filters of Layer $l$, and consider the matrix $A^l=\{a^l_{k'k}\}_{k'k}\in\RR^{K_{l}\times K_{l-1}}$. 
We denote the bias at channel $k'$ and Layer $l$ by $b_{k'}^l$. Here, $b_{k'}^l$ is a scalar signal, namely the signal that assigns the constant  value $b_{k'}^l$ to each node. Note that, by abuse of notation, $b_{k'}^l$ denotes both a scalar and a signal. In most common graph ConvNet methods there are no biases, so we typically assume $b_{k'}^l=0$.


Denote the signals/feature-map at Layer $l$ of the graph ConvNet of graph $G^j$, by $\{\tf^{j,l}_{k'}\}_{k'=1}^{K_l}$. The CovnNet maps Layer $l-1$ to Layer $l$ by 
\begin{equation}
\{\tf^{j,l}_{k'}\}_{k'=1}^{K_l}= Y^{j,l}\Big(\rho\Big\{b_{k'}^l+\sum_{k=1}^{K_{l-1}}a^l_{k'k}\ g^l_{k'k}(\DD_{j,l-1})\tf^{j,l-1}_k\Big\}_{k'=1}^{K_l}\Big),
\label{eq:Conv1}
\end{equation}
where $\rho$ is an activation function, and $Y^{j,l}:L^2(G^{j,l-1})\rightarrow L^2(G^{j,l})$ is pooling. For the graph ConvNets, the inputs of Layer 1 are $S_{j,1}^{\psi_0}P(\psi_0)f$ for $j=1,2$, where $f\in L^2(\mathcal{M})$ is a measure space signal. 
In the continuous case, we define the measure space ConvNet by
\begin{equation}
\{f^l_{k'}\}_{k'=1}^{K_l}= P(\psi^l)\Big(\rho\big\{ P(\psi^{l-1})b_{k'}^l+\sum_{k=1}^{K_{l-1}}a^l_{k'k}\ g^l_{k'k}(\mathcal{L})f^{l-1}_k\big\}_{k'=1}^{K_l}\Big),
\label{eq:Conv2}
\end{equation}
where $\{f^{l}_{k'}\}_{k'=1}^{K_l}$ is the signal at Layer $l$. Here, the input $P(\psi_0)f$ of Layer 1 is in $PW(\psi_0)$.
To understand the role of the projection $P(\psi_l)$ in (\ref{eq:Conv2}), note that spaces $PW(\psi_l)$ are not invariant under the activation function $\rho$ in general. Thus, as part of the definition of the ConvNet on $L^2(\mathcal{M})$, after each application of $\rho$ we project the result to $PW(\psi_l)$. Moreover, for typical choices of $\mathcal{L}$, like Laplace-Beltrami operator, the constant signal $b_{k'}^l$ is in $P(0)$, and thus $P(\psi_{l-1})b^l_{k'}=b^l_{k'}$. More generally, we project $b^l_{k'}$ to the Paley-Wiener space by $P(\psi_{l-1})b^l_{k'}$. 
 
The graph and measure space ConvNets are defined by iterating formulas (\ref{eq:Conv1}) and (\ref{eq:Conv2}) respectively along the layers.
We denote the mapping from the input of Layer 1 to Channel $k$ of Layer $l$ of the ConvNet by $\mathcal{N}^l_k$ for the measure space ConvNet, and by $\mathcal{N}^{j,l}_k$ for the graphs ConvNets $j=1,2$. Namely
\begin{equation}
f^l_k=\mathcal{N}^l_k P(\psi_0)f,  \quad
\tf^{j,l}_k=\mathcal{N}^{j,l}_k S_{j,1}^{\l} P(\psi_0)f.
\label{eq:Conv12}
\end{equation}
	
	We restrict ourselves to contractive activation functions, as defined next.
	\begin{definition}
The activation function $\rho$ is called \textbf{contractive} if for every $y,z\in\CC$ \newline
$\abs{\rho(y)-\rho(z)}\leq \abs{y-z}$.
\label{def:contractive}
	\end{definition}
	 The contraction property also carries to $L^p(\mathcal{M})$ spaces. Namely, if $\rho$ is contractive, then for every two signals $p,g$, $\norm{\rho(p)-\rho(g)}_p\leq \norm{p-g}_p$. For example, the ReLU and the absolute value activation functions are contractive. 
	 
	 We consider a generic pooling operator $Y^{j,l}:L^2(G^j,l-1)\rightarrow L^2(G^j,l)$. Typically, $Y^{j,l}$ is the max pooling or $l^2$ average pooling.
	 \begin{definition}
	 Suppose that coarsening is done by collapsing sequences of nodes of $G$ to single nodes in $G'$.  
\textbf{Max-pooling} is the non-linear operator that assigns the value
\begin{equation}
    \label{eq:maxp_pool_op}
    [Ys](y) =\max\{s(q_1),\ldots,s(q_K)\}/\sqrt{K}
\end{equation} 
to the node $y$ of $G'$ with parent nodes $q_1,\ldots,q_K$ from $G$, that have the signal values $s(q_1),\ldots,s(q_K)$ respectively, where $s\in L^2(G)$ is the $\RR_+$ valued signal.
\textbf{Average pooling} is defined similarly by
\begin{equation}
    \label{eq:l2p_pool_op}
    [Ys](y) =\sqrt{\sum_{k=1}^K s^2(q_k)/K}
\end{equation} 
	 \end{definition}
Note that in standard ConvNets of 2D images, pooling is defined via (\ref{eq:maxp_pool_op}) without division by $\sqrt{K}$. We divide max pooling by $\sqrt{K}$ since in the transferability setting it makes sense to normalize the $L^2$ norm in the coarse graph $G^{j,l}$ (see for example the DSP setting of Subsection \ref{Sampling and interpolation0}), while in standard ConvNets of 2D images the grid is not normalized.
Max pooling is norm-reducing, as defined next.
	 \begin{definition}
	 \label{def:reduce_norm}
	 Pooling, $Y:L^2(G)\rightarrow L^2(G')$, is said to \textbf{reduce norm} if $\norm{Y(h)}\leq\norm{h}$ for every $h\in L^2(G)$. 
	 \end{definition}
	 
	 In Theorem \ref{main_conv00} we assume that \textbf{pooling is consistent with sampling} in the sense that for every layer $l=1,\ldots,L$ and $j=1,2$,
	 \begin{equation}
	     \forall f\in PW(\psi_{l}),\  
\norm{Y^{j,l} S^{\psi_{l}}_{j,l-1}f - S^{\psi_{l}}_{j,l} f} \leq \d \norm{f}
\label{eq:TEMP56h*t}
	 \end{equation}
	 Equation (\ref{eq:TEMP56h*t}) means that sampling to the graph $G^{j,l-1}$ and then pooling to the graph $G^{j,l}$ is approximately the same as sampling to graph $G^{j,l}$ directly.
	 
	 In the following, we consider normalizations of the components of the ConvNet. In particular, assuming that sampling and interpolation are approximately isometries, we may normalize them with asymptotically small error to $\norm{S_{j,l}^{\psi_l}}=1,\norm{R_{j,l}^{\psi_l}}= 1$. We also assume that pooling reduces norm,

Suppose that sampling asymptotically commutes with $\rho$ (Definition \ref{def:pointwise convergent}), and let $0<\d<1$ be some tolerance.  By (\ref{eq:tyutu7}), it is possible to choose a sequence of bands $\psi_l$, and fine enough discretizations,
guaranteeing
\[\begin{split}
& \forall f\in L^2(\mathcal{M}),\ j=1,2,\ l=1,\ldots,L, \\
 &   \norm{\rho(S_{j,l-1}^{\psi_{l-1}}P(\psi_{l-1})f)-S_{j,l-1}^{\psi_l}P(\psi_l)\rho(P(\psi_{l-1})f)} < \d \norm{f}.
\end{split}\]
 Note that the band $\psi^l$ increases in $l$, since the activation function $\rho$ gradually increases the complexity of the signal. 
This leads us to the non-asymptotic setting of Theorem \ref{main_conv00}. Note as well that the ConvNet is invariant to multiplying all filters $g^l_{k',k}$ by a constant $\a\in\RR$ and multiplying $A^l$ by $1/\a$. Thus, in Theorem \ref{main_conv00} we suppose that all filters are normalized to $\norm{g^l_{k',k}}_{\infty}=1$, and the norm of the convolution layers is controlled by $A^l$.


\begin{theorem}
\label{main_conv00}
Consider a ConvNet with Lipschitz filters $\{g^l_{k'k}\ |\ {\scriptstyle k=1\ldots K_{l-1},\ k'=1\ldots K_{l}}\}$ with Lipschitz constant $D$ at each layer $l$, normalized to $\norm{g^l_{k',k}}_{\infty}=1$, and with $A^l$ satisfying $\norm{A^l}_{\infty}\leq A$, for some constant $A>0$. Suppose that the biases satisfy $\norm{b_{k'}^l}_2\leq B$ for some constant $B>0$, for $\norm{b_{k'}^l}_2$ denoting the norm of the constant signal $b_{k'}^l$ both in $L^2(G^{j,l})$ and in $L^2(\mathcal{M})$. Consider a contractive activation function $\rho$ (Definition \ref{def:contractive}). Suppose that $S_{j,l}^{\psi_l}$ and $R_{j,l}^{\psi_l}$ are normalized to $\norm{S_{j,l}^{\psi_l}}=1,\norm{R_{j,l}^{\psi_l}}= 1$. Let $0<\d<1$ and suppose that for every $j=1,2$

\begin{equation}
  \begin{split}
\forall l=0,\ldots,L-1 , &\quad  \norm{S_{j,l}^{\psi_{l}}\mathcal{L} P(\psi_{l}) - \DD_{j,l}S_{j,l}^{\psi_{l}} P(\psi_{l})} \leq \d \\
 & \quad   \norm{P(\psi_L) - R_{j,L}^{\psi_L}S_{j,L}^{\psi_L} P(\psi_L)} \leq \d \\ 
\forall f\in PW(\psi_{l-1}),\ \  \forall l=1,\ldots,L, & \quad   \norm{\rho(S_{j,l-1}^{\psi_{l-1}}P(\psi_{l-1})f)-S_{j,l-1}^{\psi_l}P(\psi_l)\rho(P(\psi_{l-1})f)} < \d \norm{f}\\
     \forall f\in PW(\psi_{l}), \ \  \forall l=1,\ldots,L,  & \quad  
\norm{Y^{j,l} S^{\psi_{l}}_{j,l-1}f - S^{\psi_{l}}_{j,l} f} \leq \d \norm{f}
\end{split}  
\label{eq:Th14}
\end{equation}
 Suppose that pooling reduces norm (Definition \ref{def:reduce_norm}).

Then, if $A> 1$,
\begin{equation}
\begin{split} & \norm{R_{1,L}^{\psi_L}\mathcal{N}^{1,L}_k S_{1,0}^{\psi_0}P(\psi_0)f -R_{2,L}^{\psi_L}\mathcal{N}^{2,L}_k S_{2,L}^{\psi_0}P(\psi_0)f}  \\
 & \ \ \ \leq  \Big(LD\sqrt{\#\{\l_m\leq \psi_L\}_m} +2L+2 \Big)\Big(A^L\norm{f} + B\frac{A^L-1}{A-1}\Big)\d 
 \end{split}
\label{eq:main_conv0}
\end{equation}
and, if $A=1$,
\begin{equation}
	\begin{split}
 & \norm{R_{1,L}^{\psi_L}\mathcal{N}^{1,L}_kS_{1,0}^{\psi_0} P(\psi_0)f- R_{2,L}^{\psi_L}\mathcal{N}^{2,L}_kS_{2,0}^{\psi_0} P(\psi_0)f}\\
 & \ \ \ \leq  \Big(LD\sqrt{\#\{\l_m\leq \psi_L\}_m} +2L+2\Big)(\norm{f}+LB)\d.
\end{split}
\label{eq:main_conv01}
\end{equation}
\end{theorem}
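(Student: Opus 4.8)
The plan is to introduce the measure-space ConvNet of (\ref{eq:Conv2}) as an intermediary and reduce the two-graph comparison to two one-sided estimates. Writing $f^l_k=\mathcal{N}^l_k P(\psi_0)f$ for the continuous feature maps and $\tf^{j,L}_k$ for the graph feature maps, the triangle inequality gives
\[
\norm{R_{1,L}^{\psi_L}\tf^{1,L}_k - R_{2,L}^{\psi_L}\tf^{2,L}_k} \leq \sum_{j=1,2}\norm{R_{j,L}^{\psi_L}\tf^{j,L}_k - f^L_k},
\]
so it suffices to bound, for each graph $j$, the error between the interpolated graph feature map and the continuous one. I would prove this one-sided bound by induction on the layer $l$, but carried in the graph domain through the quantity $E^{j,l}:=\max_{k'}\norm{\tf^{j,l}_{k'} - S_{j,l}^{\psi_l} f^l_{k'}}$, which has the clean base case $E^{j,0}=0$ since the two layer-$1$ inputs coincide. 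At the end I convert back to $\mathcal{M}$ using $\norm{R_{j,L}^{\psi_L}}=1$ and the final-layer consistency bound $\norm{P(\psi_L)-R_{j,L}^{\psi_L}S_{j,L}^{\psi_L}P(\psi_L)}\leq\delta$ from (\ref{eq:Th14}), so that $\norm{R_{j,L}^{\psi_L}\tf^{j,L}_k - f^L_k}\leq E^{j,L}+\delta\norm{f^L_k}$.

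For the inductive step I would split the layer map (\ref{eq:Conv1}) into its linear part (mixing, biases, filtering) and its nonlinear part (activation then pooling). For the linear part, inserting $g^l_{k'k}(\DD_{j,l-1})S_{j,l-1}^{\psi_{l-1}}f^{l-1}_k$ decomposes the filter error into a propagated term $\norm{g^l_{k'k}(\DD_{j,l-1})(\tf^{j,l-1}_k - S_{j,l-1}^{\psi_{l-1}}f^{l-1}_k)}\leq E^{j,l-1}$, using $\norm{g^l_{k'k}(\DD_{j,l-1})}\leq\norm{g^l_{k'k}}_\infty=1$, and a genuine filter-transferability term bounded by $D\sqrt{\#\{\l_m\leq\psi_{l-1}\}_m}\,\delta\,\norm{f^{l-1}_k}$ via Theorem \ref{main00}(\ref{eq:TE_worst_G}) together with the convergence hypothesis (\ref{eq:Th14}); summing over channels contributes the factor $\norm{A^l}_\infty\leq A$, while the biases add a low-frequency consistency term controlled by $\norm{b^l_{k'}}_2\leq B$ (the constant sits at low frequency, so its sampling is consistent). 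For the nonlinear part I would commute $\rho$ past sampling using the third line of (\ref{eq:Th14}) — this is precisely why the band $\psi_l$ must grow with $l$, since $\rho$ destroys band-limitedness — and commute pooling past sampling using (\ref{eq:TEMP56h*t}); here I invoke that $\rho$ is contractive and that max/average pooling are $1$-Lipschitz and norm-reducing, so these nonlinear maps do not amplify the accumulated error. This produces a recursion of the shape $E^{j,l}\leq A\,E^{j,l-1} + \big(A D\sqrt{\#\{\l_m\leq\psi_{l-1}\}_m}+2\big)\delta\,\norm{h^l_{k'}}+(\text{bias})$, where $h^l_{k'}$ is the continuous pre-activation.

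In parallel I would bound the continuous feature-map norms by $F_l:=\max_{k'}\norm{f^l_{k'}}\leq B+A F_{l-1}$ with $F_0\leq\norm{f}$, giving $F_L\leq A^L\norm{f}+B\frac{A^L-1}{A-1}$ for $A>1$ and $F_L\leq\norm{f}+LB$ for $A=1$; this is the origin of the amplitude factor in (\ref{eq:main_conv0}) and (\ref{eq:main_conv01}), and it also controls $\norm{h^l_{k'}}\leq B+AF_{l-1}$. Unrolling the error recursion yields $E^{j,L}\leq\sum_{l=1}^L A^{L-l}\big[(A D\sqrt{\#\{\l_m\leq\psi_L\}_m}+2)\delta F_l+(\text{bias})\big]$, and the key estimate $A^{L-l}F_{l-1}\leq F_L$ — which holds because the geometric growth of $F_l$ exactly absorbs the geometric weight $A^{L-l}$ — collapses the geometric sum into a coefficient linear in $L$. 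Tracking that each layer contributes one filter-transferability term plus two discretization terms (activation and pooling), and adding one final consistency term per graph before summing the two graphs, produces the stated constants $LD\sqrt{\#\{\l_m\leq\psi_L\}_m}+2L+2$; the $A=1$ case is identical with $F_L$ replaced by $\norm{f}+LB$.

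The hard part will be the bookkeeping of the nonlinear layers: one must commute the activation with the sampling operator while the Paley--Wiener band is forced to increase, so that the ``input'' and ``output'' sampling operators live at the distinct bands $\psi_{l-1}$ and $\psi_l$, and simultaneously commute pooling with sampling, all while preventing the per-layer errors from accumulating exponentially in $L$. Keeping the accumulation linear in $L$ rests entirely on the cancellation $A^{L-l}F_{l-1}\leq F_L$ between the recursion weight and the signal-norm growth; once that estimate is in place, collecting the per-layer contributions and applying the triangle inequality over the two graphs is routine.
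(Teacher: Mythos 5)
Your proposal is correct and follows essentially the same route as the paper's proof: the same triangle-inequality reduction through the continuous ConvNet, the same per-layer recursion for the sampled error $\norm{S_{j,l}^{\psi_l}P(\psi_l)f^l_{k'}-\tf^{j,l}_{k'}}$ obtained by inserting $g^l_{k'k}(\DD_{j,l-1})S_{j,l-1}^{\psi_{l-1}}f^{l-1}_k$ and invoking Theorem \ref{main00}(\ref{eq:TE_worst_G}) together with the activation- and pooling-consistency hypotheses, the same auxiliary bound $F_l\leq AF_{l-1}+B$ on the continuous feature maps, and the same collapse of the unrolled geometric sum into a coefficient linear in $L$. No gaps.
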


The proof of this theorem is in the appendix.
Theorem \ref{main_conv00} \textcolor{black}{may hint to the importance} of regularizing the convolution operators in the infinity norm. The next corollary  shows that adding bias increases instability with respect to the depth $L$, from linear to quadratic. 
\begin{corollary}
\label{main_conv0}
Consider the setting of Theorem \ref{main_conv00},  with $A^l$ normalized to $\norm{A^l}_{\infty}=1$, without biases, namely, $b_{k'}^l=0$. 
Then 
	\begin{equation}
\norm{R_{1,L}^{\psi_L}\mathcal{N}^{1,L}_kS_{1,0}^{\psi_0} P(\psi_0)- R_{2,L}^{\psi_L}\mathcal{N}^{2,L}_kS_{2,0}^{\psi_0} P(\psi_0)}
\leq  \Big(LD\sqrt{\#\{\l_m\leq \psi_L\}_m} +2L +2\Big)\d.
\label{eq:main_conv02}
\end{equation}
\end{corollary}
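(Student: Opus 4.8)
The plan is to deduce this corollary directly from Theorem \ref{main_conv00} by specializing its two free constants, with essentially no new analytic work. The hypotheses imposed here---normalizing $\norm{A^l}_{\infty}=1$ at every layer and setting all biases $b^l_{k'}=0$---correspond exactly to the choices $A=1$ and $B=0$ in the notation of Theorem \ref{main_conv00}. All remaining standing assumptions (contractive $\rho$, norm-reducing and sampling-consistent pooling, sampling and interpolation normalized to $\norm{S_{j,l}^{\psi_l}}=\norm{R_{j,l}^{\psi_l}}=1$, and the four $\d$-bounds collected in (\ref{eq:Th14})) are inherited unchanged, so the theorem applies verbatim under the corollary's setting.

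First, since $A=1$, I would invoke the second bound of the theorem, equation (\ref{eq:main_conv01}), rather than (\ref{eq:main_conv0}). Substituting $B=0$ collapses the factor $(\norm{f}+LB)$ to $\norm{f}$, giving the pointwise estimate
\[
\norm{R_{1,L}^{\psi_L}\mathcal{N}^{1,L}_kS_{1,0}^{\psi_0} P(\psi_0)f- R_{2,L}^{\psi_L}\mathcal{N}^{2,L}_kS_{2,0}^{\psi_0} P(\psi_0)f} \leq \Big(LD\sqrt{\#\{\l_m\leq \psi_L\}_m}+2L+2\Big)\norm{f}\,\d
\]
for every $f\in L^2(\mathcal{M})$. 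To pass from this pointwise inequality to the operator-norm statement (\ref{eq:main_conv02}), I would divide through by $\norm{f}$ for $f\neq 0$ and take the supremum over all such $f$; since the constant on the right is independent of $f$, the supremum of the left-hand side is by definition the operator norm of the difference $R_{1,L}^{\psi_L}\mathcal{N}^{1,L}_kS_{1,0}^{\psi_0} P(\psi_0)- R_{2,L}^{\psi_L}\mathcal{N}^{2,L}_kS_{2,0}^{\psi_0} P(\psi_0)$, and it is bounded by exactly $\big(LD\sqrt{\#\{\l_m\leq \psi_L\}_m}+2L+2\big)\d$, which is (\ref{eq:main_conv02}).

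The main point of care is not a genuine obstacle but bookkeeping: one must verify that the corollary's hypotheses are a faithful instance of the theorem's, so that no auxiliary assumption is silently required, and then correctly select the $A=1$ branch of the conclusion. The conceptual content the corollary is meant to isolate is quantitative: in the unbiased, norm-normalized regime the transferability error degrades only \emph{linearly} in depth $L$ through the factor $LD\sqrt{\#\{\l_m\leq \psi_L\}_m}+2L$, whereas reinstating biases activates the geometric factor $\frac{A^L-1}{A-1}$ in the general $A>1$ bound (\ref{eq:main_conv0}), producing the quadratic-in-$L$ degradation noted in the remark preceding this corollary. Beyond the substitution $A=1,\ B=0$ and the routine pointwise-to-operator-norm passage, no further argument is needed.
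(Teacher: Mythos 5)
Your proposal is correct and matches the paper's intent exactly: the corollary is stated as an immediate specialization of Theorem \ref{main_conv00} obtained by taking the $A=1$ branch (\ref{eq:main_conv01}) with $B=0$, and the passage from the pointwise bound to the operator norm is the routine supremum over $f\neq 0$ you describe. No further argument is needed.
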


	The assumptions of Corollary \ref{main_conv0} imply that the ConvNet is contractive. For non-contractive ConvNets, we can simply consider a contractive ConvNet and multiply it by a constant $C>1$. For such a ConvNet, the bound in (\ref{eq:main_conv0}) is simply multiplied by $C$.

\subsection{Transferability experiments}
In this subsection we showcase transferability of spectral graph methods in practice.
We first mention two papers that showcase the transferability of spectral graph ConvNets. In \cite{ARMA_ConvNet} graph spectral ConvNets are based on rational function filters. The task is graph classification on datasets consisting of many graphs and graph signals. Each graph represents a molecule and its signal represents some node features. \textcolor{black}{The results reported in that paper}  show that the proposed spectral method obtains state of the art results on these multi-graph settings.

In \cite{TransSuperPixel} different types of graph ConvNets are tested on a machine learning tasks in imaging. Inputs are represented by superpixel images, namely some graphs representing the images. Here, the setting is multi-graph, where different images are represented by different graphs. The reported results suggest that spectral methods are more transferable, dealing better with the multi-graph setting than spatial methods.


Next we present simple experiments that demonstrate transferability.
In figure \ref{Fig:all3} we showcase transferability under coarsening on the Bunny mesh. Here, the graph $\mathcal{M}$ consist of all mesh edges with weight $1$, and we consider the normalized Laplacian $\mathcal{L}$. The coarsened version $G$ of $\mathcal{M}$ is computed by the Graclus algorithm \cite{Graclus}. We consider the coarsening operator $S:L^2(\mathcal{M})\rightarrow L^2(G)$ defined as follows. Given a signal $f\in L^2(\mathcal{M})$, for each pair of nodes $x,y\in\mathcal{M}$ which collapse to the node $z\in G$, we define 
\[[Sf](z)=\big(f(x)+f(z)\big)/\sqrt{2}.\]
We define the piecewise constant interpolation operator $R:L^2(G)\rightarrow L^2(\mathcal{M})$ by $R=S^*$. 
The \textbf{coarsened Laplacian} $\DD$ in $L^2(G)$ is defined by $\DD = S\mathcal{L}R$. This definition of $\DD$ is natural when the goal is to promote transferability.
We consider a signal $f\in L^2(\mathcal{M})$ and a filter $g$. The figure compares $f$, $\mathcal{L}f$, and $g(\mathcal{L})f$ with $RSf$, $R\DD Sf$, and $Rg(\DD)Sf$ respectively.

In Figure \ref{Fig:all2} we showcase the transferability formula Thm.\ref{main00}(\ref{eq:TE_Fmode_G}) on the Bunny graph of Figure \ref{Fig:all3}. We consider a filter $g$ with Lipschitz bound $D$. On the left we plot the Laplacian transferability $\norm{\boldsymbol{\Delta}S^{\lambda_M}\phi_m - S^{\lambda_M} \mathcal{L} \phi_m}$ as a function of the eigenvalue of the eigenvector $\phi_m$ of $\mathcal{L}$. In the middle we plot the filter transferability $\norm{S^{\lambda_M} g(\mathcal{L}) \phi_m - g(\boldsymbol{\Delta})S^{\lambda_M}\phi_m}$ as a function of the Laplacian transferability, with the theoretical bound $y=Dx$ in red. On the right we plot the filter transferability divided by the Laplacian transferability of eigenvectors $\phi_m$ as a function of the eigenvalues $\l_m$. The theoretical bound $V_g(\l_m)$ is given in red.

In top-left of Figure \ref{Fig:all} we isolate principle transferability form concept-based transferability in MNIST, and compare a spectral graph ConvNet method with a spatial graph ConvNet method. We consider a simple ConvNet architecture based on three convolution layers with max pooling, where the max pooling in the third layer collapses each graph to one node, and two fully connected layers. In CayleyNet, the Cayley polynomial order of all three convolutional layers is 9, and they produce 32, 32, and 64 output features, respectively. In MoNet, all three convolutional layers contain 18 Gaussian kernels, and produce 32, 32 and 64 output features, respectively. Both two models contain 10K parameters. We train the network on MNIST images of one fixed fine resolution ($56X56$) and test on images of various coarse resolutions. The graph Laplacian is given by the central difference approximating second derivative. In this setting, the spectral method, CayleyNet, has higher principle transferability than the spatial method, MoNet. Indeed, its performance degrades slower as we coarsen the grid.

In top-middle and right of Figure \ref{Fig:all} we test transferability between the Citeseer graph $\mathcal{M}$ and its coarsened version $G$. We take the coarsening and interpolation operators $S$ and $R=S^*$ as before. We consider the normalized Laplacian $\mathcal{L}$ on $\mathcal{M}$, and the coarse Laplacian $\boldsymbol{\Delta} = S\mathcal{L} R$ on $G$. We use low-pass (top-middle) and high-pass (top-right) filters with Lipschitz constant 1. To show transferability, we plot $\norm{Sf(\mathcal{L}) \phi_m- f(\boldsymbol{\Delta})S\phi_m}$ as a function of $\norm{S\mathcal{L} \phi_m- \boldsymbol{\Delta}S\phi_m}$ for various eigenvectors $\phi_m$ of $\mathcal{L}$ (some corresponding eigenvalues are displayed). All values lie below $y=Dx$, where $D$ is the Lipschitz constant of the corresponding filter. This accords with the transferability inequality Thm.\ref{main00}(\ref{eq:TE_Fmode_G}).

In Figure \ref{Fig:all} bottom, we test the stability of spectral graph filters in the Cora graph with the normalized Laplacian, for different models of graph perturbations and sub-sampling. We consider three filters: low, mid and high pass. In bottom-left we randomly remove edges, in bottom-middle we randomly add edges, and in bottom-right randomly delete vertices, and compare the filters on the sub-sampled graph. The markers indicate the percentage of edges/vertices that were removed/added. The $x$ axis is the relative error in the Laplacian, and the $y$ axis is the relative error in the filter. The experimental results support the theoretical results on linear stability. All errors are given in Frobenius norm. The Frobenius norm can be seen as the average pointwise error, where the Laplacians and filters are applied on the signals of the standard basis.

\begin{figure}[!htb]
	\begin{minipage}{1\textwidth}
		\centering		
		\includegraphics[width=0.99\linewidth]{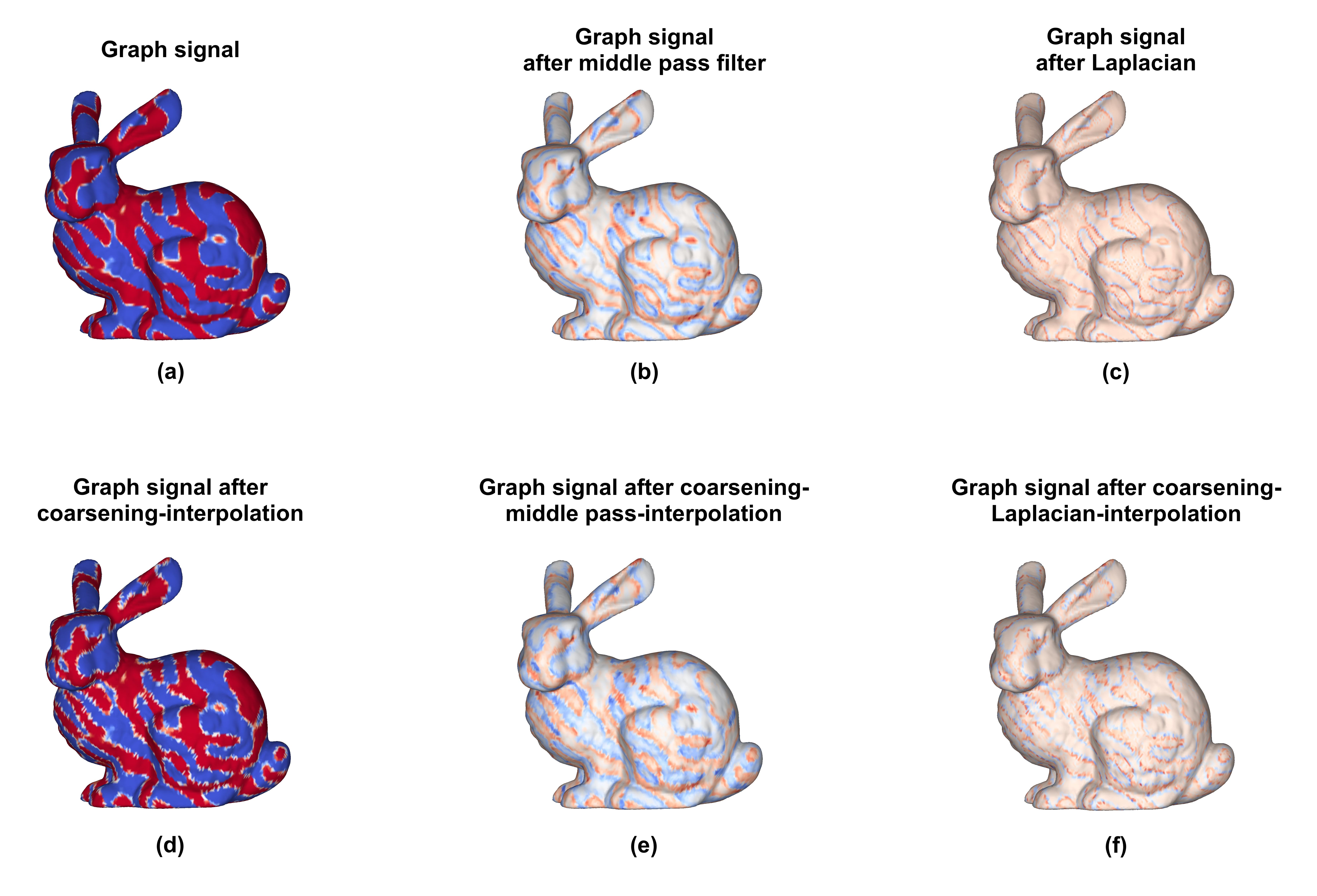}
		\caption{Transferability under coarsening on the Bunny mesh}
		\label{Fig:all3}
	\end{minipage}\hfill
\end{figure}

\begin{figure}[!htb]
	\begin{minipage}{1\textwidth}
		\centering		
		\includegraphics[width=0.99\linewidth]{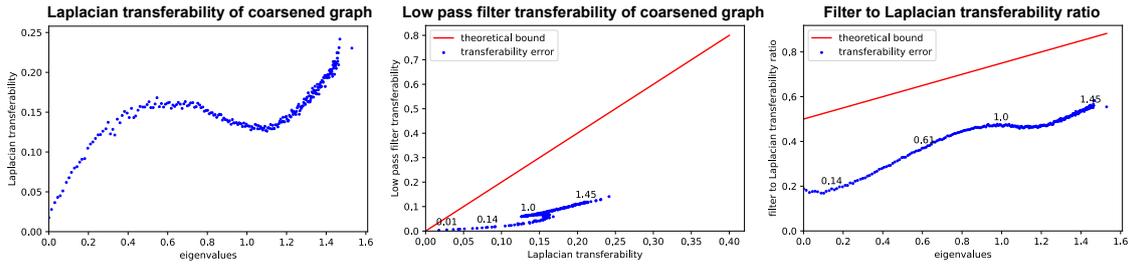}
		\caption{Transferability of $\mathcal{L}$-Fourier modes evaluated in $G$}
		\label{Fig:all2}
	\end{minipage}\hfill
\end{figure}

\vspace{-2.5mm}
\begin{figure}[!htb]
	\begin{minipage}{1\textwidth}
		\centering		
		\includegraphics[width=0.99\linewidth]{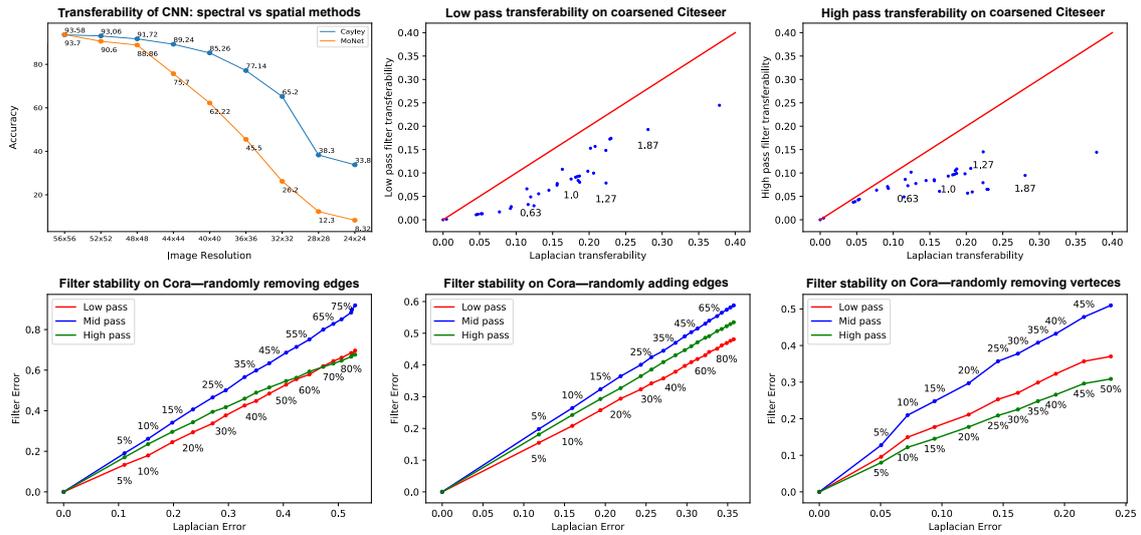}
		\caption{Trasferability experiments}
		\label{Fig:all}
	\end{minipage}\hfill
\end{figure}

\section{Transferability of graph discretizing topological spaces}
\label{Sampling and interpolation}

In this section we develop the DSP setting of transferability, in which graphs are sampled from continuous spaces, as described in Subsections \ref{Theoretical settings of transferability} and \ref{The basic assumption of graphs discretizing metric spaces}.
In the classical Nyquist--Shannon approach to digital signal processing, band-limited signals in $L^2(\RR)$ are discretized to $L^2(\ZZ)$ by sampling them on a grid of appropriate spacing. The original continuous signal can be reconstructed from the discrete signal via interpolation, which is explicitly given as the convolution of the delta train corresponding to the discrete signal with a sinc function.
Our goal is to formulate an analogous framework for graphs, where graphs are seen as discretizations of continuous entities, namely topological spaces with Borel measure.

Previous work studied sampling and interpolation in the context of graph signal processing, where the space that is sampled is a discrete graph itself. In \cite{G_Samp1,G_Samp2,G_Samp3,G_Samp4} sampling is defined by evaluating the graph signal on a subset of vertices, and in \cite{G_ag1,G_ag2} sampling is defined by evaluating the signal on a single vertex, and using repeated applications of the shift operator to aggregate the signal information on this node. In the context of discretizing continuous spaces to graphs, considering graph Laplacians of meshes as discretizations of Laplace-Beltrami operators on Riemannian manifolds is standard. However, manifolds are too restrictive to model the continuous counterparts of general graphs. A more flexible model are more general topological spaces with Borel measure.  Treating graph Laplacians as discretizations of metric space Laplacians was considered from a pure mathematics point of view in \cite{M_Laplace1}. In that work, the convergence of the spectrum of the graph Laplacian to that of the metric space Laplacian was shown under some conditions. However, for our needs, the explicit notion of convergence of Definition \ref{As_convS} is required, and the convergence of the spectrum alone is not sufficient. 
In \cite{graph_limit1}, a continuous limit object of graphs was proposed. More accurately, graph vertices are sampled from the continuous space $[0,1]$, and graph weights are sampled from a measurable weight function $W:[0,1]^2\rightarrow[0,1]$. 
In contract to this, in our analysis there is a special emphasis on Laplacians, which implicitly model the ``geometry'' of graphs and topological spaces. We thus bypass the analysis via edge weights, and study directly the discretization of topological Laplacians to graph Laplacians, from an operator theory point of view.

In this section we introduce a discrete signal processing setting, where analog domains are topological spaces, and digital domains are graphs. We present natural conditions, from a signal processing point of view, sufficient for the convergence of the graph Laplacian to the topological space Laplacian in the sense of Definition \ref{As_convS}. We also prove asymptotic reconstruction, boundedness, convergence, and asymptotic commutativity of sampling with activation function (Definitions \ref{As_RS}, \ref{As_RS2}, \ref{As_convS} and \ref{def:pointwise convergent}), under these conditions. All proofs are based on quadrature assumptions, stating that certain sums approximate certain integrals. In Subsection \ref{Random sample sets satisfy the quadrature definitions} we prove that the quadrature assumptions are satisfied in high probability, in case graphs are sampled randomly from topological spaces.

\subsection{Sampling and interpolation}
\label{Sampling and interpolation0}
We now proceed to give an explicit construction of the sampling and interpolation operators, under which they are asymptotically reconstructive and bounded (Definitions \ref{As_RS} and \ref{As_RS2}). The approach is similar to the classical Nyquist--Shannon approach to sampling and interpolation.

We start with basic notations and definitions.
Let $\mathcal{M}$ be a topological space with a Borel measure $\mu$, such that the volume $\mu(\mathcal{M})$ is finite. We call such a space a \textbf{topological-measure space}.
Let the Laplacian $\cL$  be a normal operator in $L^2(\mathcal{M})$, having discrete spectrum, with eigenvalues $\{\l_n\}_{n=1}^{\infty}$ and corresponding eigenvectors $\{\l_n,\phi_n\}_{n=1}^{\infty}$. Here, the eigenvectors $\l_n$ are in increasing order of $\abs{\l_n}$ and have repetitions if the corresponding eigenspace is more than one dimensional. Denote the Paley-Wiener spaces by $PW(\l)$, with projections $P(\l)$. Denote by $M_{\l}$ the index such that $\l_{M_{\l}}$ is the largest eigenvalue in its absolute value satisfying $\l_{M_{\l}}\leq \abs{\l}$.

Let
\[G_n=\{x^n_k\}_{k=1}^{N_n} \subset\mathcal{M}\quad , \quad n\in\NN\]
be a sequence of \textbf{sample sets}, 
where $N_n\in\NN$ for every $n\in\NN$. For the following analysis let us fix $n\in\NN$. We see $G_n$ as the nodes of a graph. Instead of analyzing the graph Laplacian through the graph adjacency matrix, we directly analyze the graph Laplacian.
Consider a diagonalizable operator $\DD_n$ in each $L^2(G_n)$, that we call graph Laplacian. The graph Laplacian represents the diffusion or shift kernel in $L^2(G_n)$, and hence encapsulates some notion of geometry in $L^2(G_n)$. A non symmetric Laplacian indicates that the space $L^2(G_n)$ samples $L^2(\mathcal{M})$ non-uniformly, as described in Subsection \ref{Continuous and sampled Laplacians}. Denote the eigendecomposition of $\DD_n$ with eigenvalues $\k^n_j$ and eigenvector $\boldsymbol{\gamma}^n_j$. Let  $\boldsymbol{\Gamma}_n$ be the eigenvector matrix with columns $\boldsymbol{\gamma}^n_j$. Consider the inner product $\ip{{\bf u}}{{\bf v}}_{L^2(G_n)}= {\bf v}^{\rm H} {\bf B}_n {\bf u}$ as defined in (\ref{eq:self1}), with ${\bf B}_n=\boldsymbol{\Gamma}^{-{\rm H}}_n\boldsymbol{\Gamma}^{-1}_n$. When writing $L^2(G_n)$ we mean the space with the inner product $\ip{{\bf u}}{{\bf v}}_{L^2(G_n)}$.
Here, for normal $\DD_n$, ${\bf B}_n={\bf I}$, and $\ip{{\bf u}}{{\bf v}}_{L^2(G_n)}$ is the standard dot product.

The following construction is defined for a fixed Paley-Wiener space $PW(\l)$.
We start by defining the \textbf{evaluation operator}, that evaluates signals in $PW(\l)$ at the sample set $G_n$.
Since general signals in $L^2(\mathcal{M})$ are only defined up to a set of finite measure, to define the evaluation of signals at points, we restrict ourselves to continuous signals. Let $C(\mathcal{M})$ be the Banach space of continuous functions with the infinity norm. The space $C(\mathcal{M})$ is dense in $L^2(\mathcal{M})$. Note that delta functionals that evaluate at a point are well defined on $C(\mathcal{M})$, as elements of the continuous dual $C(\mathcal{M})^*$. Thus, the sampling operator $S_n$ that evaluates at the sample points $\{x_k^n\}_k$ is a well defined bounded operator from $C(\mathcal{M})$ to $L^2(G_n)$. Since in our analysis we work in Paley-Wiener spaces, we consider the natural assumption that the Laplacian respects continuity.
\begin{definition}
\label{def:respectCont}
The Laplacian
$\cL$ is said to \textbf{respect continuity}
 if $PW(\l)$ is a subspace of $C(\mathcal{M})$ for every $\l>0$. 
\end{definition}
Note that Laplace-Beltrami operators on compact manifolds respect continuity, since their domain ($L^2$ functions with distributional Laplacian in $L^2$) is a subspace of $C(\mathcal{M})$.

We define the evaluation operator.
\begin{definition}
Under the above construction, the \textbf{evaluation operator}
$\Phi_n^{\l}:PW(\l)\rightarrow L^2(G_n)$ is defined by
\begin{equation}
\Phi_n^{\l}f=\big(\frac{1}{\sqrt{h_n}}f(x^n_k)\big)_{k=1}^{N_n},
\label{eq:samp_def_ev}
\end{equation}
where 
\begin{equation}
h_n=\frac{N_n}{\mu(\mathcal{M})}
\label{eq:hn1}
\end{equation} 
is the density of $G_n$ in $\mathcal{M}$.
\end{definition}
Consider the Fourier basis $\{\phi_m\}_{m=1}^{M_{\l}}$ of $PW(\l)$. Note that (\ref{eq:samp_def_ev}) can be written in this basis in the matrix form
$\boldsymbol{\Phi}_n^{\l}$,  with entries 
\begin{equation}
\phi_{k,m}=\frac{1}{\sqrt{h_n}}\phi_m(x^n_k).
\label{eq:phi_km}
\end{equation}
For a column vector ${\bf c}=(c_m)_{m=1}^{M_{\l}}$ and $f=\sum_{m=1}^{M_{\l}}c_m\phi_{m}$,  observe that
\[\Phi_n^{\l}f=\boldsymbol{\Phi}_n^{\l}{\bf c}.\]

When defining sampling and interpolation, one should address the non-uniform density of the sample set entailed by the inner product (\ref{eq:self1}).
We thus consider the following definitions of sampling and interpolation.
\begin{definition}
\label{def:samp_int}
Under the above construction,
\textbf{sampling}  $S^{\l}_n:PW(\l)\rightarrow L^2(G_n)$ is defined to be the evaluation operator, with the matrix representation,  where the input is in the Fourier basis $\{\phi_m\}_{m=1}^{M_{\l}}$ and the output in the standard basis of $L^2(G_n)$,
\begin{equation}
{\bf S}_n^{\l}=\boldsymbol{\Phi}_n,
\label{eq:samp_def002}
\end{equation}
where $\boldsymbol{\Phi}_n$ is a matrix with entries (\ref{eq:phi_km}). 
\textbf{Interpolation}  $R_n^{\l}: L^2(G_n)\rightarrow PW(\l)$ is defined as the operator with matrix representation, where the input is in the standard basis of $L^2(G_n)$ and the output is in the Fourier basis of $PW(\l)$,
\begin{equation}
{\bf R}_n^{\l}=\boldsymbol{\Phi}_n^{\rm H}{\bf B}_n.
\label{eq:int_def00002}
\end{equation}
\end{definition}

\begin{claim}
\label{c:RisSstar}
The interpolation operator satisfies
\begin{equation}
R_n^{\l} = (S_n^{\l})^*.
\label{eq:RisSstar}
\end{equation}
\end{claim}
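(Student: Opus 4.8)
The plan is to verify directly that $R_n^{\l}$ satisfies the defining property of the adjoint of $S_n^{\l}$, namely that $\ip{S_n^{\l} f}{{\bf u}}_{L^2(G_n)} = \ip{f}{R_n^{\l}{\bf u}}_{PW(\l)}$ for every $f\in PW(\l)$ and ${\bf u}\in L^2(G_n)$, and then to invoke the uniqueness of the adjoint. The two ingredients I need are explicit coordinate expressions for the two inner products. Since the Fourier basis $\{\phi_m\}_{m=1}^{M_{\l}}$ of $PW(\l)$ consists of normalized eigenvectors of the normal operator $\cL$, it is orthonormal in $L^2(\mathcal{M})$, so in Fourier coordinates the inner product on $PW(\l)$ is the standard one: writing $f\leftrightarrow{\bf c}$ and $g\leftrightarrow{\bf d}$, we have $\ip{f}{g}_{PW(\l)}={\bf d}^{\rm H}{\bf c}$. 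On the digital side, by definition the inner product on $L^2(G_n)$ is $\ip{{\bf u}}{{\bf v}}_{L^2(G_n)}={\bf v}^{\rm H}{\bf B}_n{\bf u}$ with ${\bf B}_n=\boldsymbol{\Gamma}_n^{-{\rm H}}\boldsymbol{\Gamma}_n^{-1}$, following the convention of (\ref{eq:self1}).

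First I would record that ${\bf B}_n$ is Hermitian, ${\bf B}_n^{\rm H}={\bf B}_n$, which is immediate from ${\bf B}_n=\boldsymbol{\Gamma}_n^{-{\rm H}}\boldsymbol{\Gamma}_n^{-1}$ since $(\boldsymbol{\Gamma}_n^{-{\rm H}}\boldsymbol{\Gamma}_n^{-1})^{\rm H}=\boldsymbol{\Gamma}_n^{-{\rm H}}\boldsymbol{\Gamma}_n^{-1}$. Next, using the matrix representation ${\bf S}_n^{\l}=\boldsymbol{\Phi}_n$ (input in the Fourier basis, output in the standard basis of $L^2(G_n)$), for $f\leftrightarrow{\bf c}$ I compute
\[
\ip{S_n^{\l} f}{{\bf u}}_{L^2(G_n)} = {\bf u}^{\rm H}{\bf B}_n\boldsymbol{\Phi}_n{\bf c}.
\]
On the other side, using ${\bf R}_n^{\l}=\boldsymbol{\Phi}_n^{\rm H}{\bf B}_n$, the Fourier coordinates of $R_n^{\l}{\bf u}$ are $\boldsymbol{\Phi}_n^{\rm H}{\bf B}_n{\bf u}$, so
\[
\ip{f}{R_n^{\l}{\bf u}}_{PW(\l)} = (\boldsymbol{\Phi}_n^{\rm H}{\bf B}_n{\bf u})^{\rm H}{\bf c} = {\bf u}^{\rm H}{\bf B}_n^{\rm H}\boldsymbol{\Phi}_n{\bf c} = {\bf u}^{\rm H}{\bf B}_n\boldsymbol{\Phi}_n{\bf c},
\]
where the last equality uses the Hermitian symmetry of ${\bf B}_n$. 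Comparing the two displays shows the defining adjoint identity holds for all $f$ and ${\bf u}$, and since the adjoint is unique this yields $R_n^{\l}=(S_n^{\l})^*$.

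The computation is elementary, so there is no serious obstacle; the only points requiring care are bookkeeping ones. First, one must consistently track which slot of each inner product is conjugate-linear (here conjugate-linear in the second argument, as in (\ref{eq:self1})) so that the Hermitian transposes land correctly. Second, one must be sure the Gram matrix of $PW(\l)$ in the Fourier basis is the identity, which is exactly the orthonormality of the eigenbasis $\{\phi_m\}$; this is the one place where the normality of $\cL$ and the normalization of the $\phi_m$ enter. As a sanity check, the same identity follows from the general between-spaces adjoint relation ${\bf S}^*={\bf B}_1^{-1}{\bf S}^{\rm H}{\bf B}_2$ applied with ${\bf B}_1={\bf I}$ on $PW(\l)$ and ${\bf B}_2={\bf B}_n$ on $L^2(G_n)$, which again reproduces ${\bf R}_n^{\l}=\boldsymbol{\Phi}_n^{\rm H}{\bf B}_n$.
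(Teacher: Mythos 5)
Your proposal is correct and follows essentially the same route as the paper: the paper derives the general adjoint formula $\mathbf{A}^*=\mathbf{A}^{\rm H}\mathbf{B}_n$ for maps from $PW(\l)$ (Fourier coordinates, standard dot product) to $L^2(G_n)$ (standard basis, $\mathbf{B}_n$-weighted inner product) and specializes to $\mathbf{A}=\boldsymbol{\Phi}_n$, which is the same computation you perform when verifying the defining adjoint identity directly, including the use of the Hermitian symmetry of $\mathbf{B}_n$. No gaps.
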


\begin{proof}
Let us derive a general formula for the adjoint of a linear mapping $PW(\l)\rightarrow L^2(G_n)$, represented as a matrix operator $\mathbf{A}$, where $PW(\l)$ is represented in the Fourier basis, and $L^2(G_n)$ in the standard basis. Note that the inner product in $PW(\l)$, represented in the Fourier basis, is the standard dot product. Thus, for any $\mathbf{c}\in \CC^{M_{\l}}$ and $\mathbf{q}\in L^2(G_n)\cong\CC^{N_n}$,
\[\ip{\mathbf{A}\mathbf{c}}{\mathbf{q}}_{L^2(G_n)} = \mathbf{q}^{\rm H}\mathbf{B}_n\mathbf{A}\mathbf{c}= (\mathbf{A}^H\mathbf{B}_n\mathbf{q})^{\rm H}\mathbf{c} =  \ip{\mathbf{c}}{\mathbf{A}^H\mathbf{B}_n\mathbf{q}}_{\CC^{M_{\l}}}.\]
Therefore
\[\mathbf{A}^*= \mathbf{A}^H\mathbf{B}_n.\]
From this, (\ref{eq:RisSstar}) follows as a special case.
\end{proof}

Next, we would like to find a condition, for $f=\sum_{m=1}^{M_{\l}}c_m\phi_m$, 
 that guarantees
\begin{equation}
\label{eq:temp4y375347l}
    {\bf R}_n^{\l}{\bf S}_n^{\l}{\bf c} \xrightarrow[n \to \infty]{} {\bf c}.\
\end{equation}
By requiring (\ref{eq:temp4y375347l}) for all elements of the Fourier basis,  writing (\ref{eq:temp4y375347l}) using entry-wise limits, and arranging all limits as the entries of  a matrix, we obtain the condition
\begin{equation}
\Big(\frac{\mu(\mathcal{M})}{N_n}\ip{\Big(\phi_{m}(x^n_k)\Big)_k}{\Big(\phi_{m'}(x^n_k)\Big)_k}_{L^2(G_n)}\Big)_{m,m'}\xrightarrow[n \to \infty]{} {\bf I}.
\label{eq:quad0000}
\end{equation}
The left hand side of (\ref{eq:quad0000}) is interpreted as a quadrature approximation of the inner product $\ip{\phi_m}{\phi_{m'}}_{L^2(\mathcal{M})}$, based on the sample points $\{x^n_k\}_{k=1}^{N_n}$ and their density. We summarize this in a definition.

\begin{definition}
\label{quadrature_sequence2}
Consider the above construction and notations. Denote by $\ip{\boldsymbol{\Phi}_n}{\boldsymbol{\Phi}_n}\in\CC^{M_{\l}\times M_{\l}}$ the matrix with entries $\ip{\Phi^{\l}_n\phi_m}{\Phi^{\l}_n\phi_{m'}}_{L^2(G_n)}$.
The pair $\{G_n,\DD_n\}_{n=1}^{\infty}$ is called a \textbf{quadrature sequence with respect to reconstruction}, if 
\begin{equation}
\ip{\boldsymbol{\Phi}_n}{\boldsymbol{\Phi}_n} \xrightarrow[n \to \infty]{}  {\bf I}.
\label{eq:quad2456}
\end{equation}
\end{definition}

Next we prove that quadrature sequences are asymptotically reconstructive and bounded.
\begin{proposition}
\label{P:quad1}
Consider the above construction and notations, with $\{G_n,\DD_n\}_{n=1}^{\infty}$ a quadrature sequence and $\mathcal{L}$ that respects continuity. Then sampling and interpolation are asymptotically reconstructive and bounded (Definitions \ref{As_RS} and \ref{As_RS2}), with bound $C=1$ in Definition \ref{As_RS2}.
\end{proposition}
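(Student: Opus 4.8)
The plan is to reduce both assertions to the single matrix statement of the quadrature hypothesis (Definition \ref{quadrature_sequence2}), using crucially that $R_n^{\l}=(S_n^{\l})^*$ (Claim \ref{c:RisSstar}). First I would record that the operators are even well defined: because $\cL$ respects continuity (Definition \ref{def:respectCont}), each $PW(\l)$ is a subspace of $C(\mathcal{M})$, so the point-evaluation functionals underlying $S_n^{\l}$ act on genuine continuous functions, and $S_n^{\l}:PW(\l)\to L^2(G_n)$ is a bounded operator for every fixed $n$ and $\l$. This is the only place the continuity hypothesis is used.

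Next I would identify $R_n^{\l}S_n^{\l}$ with the quadrature Gram matrix. Since $R_n^{\l}=(S_n^{\l})^*$, the operator $R_n^{\l}S_n^{\l}=(S_n^{\l})^*S_n^{\l}$ maps $PW(\l)$ into itself, and in the $\cL$-Fourier basis $\{\phi_m\}_{m=1}^{M_{\l}}$ — which is orthonormal in $PW(\l)$ — its $(m,m')$ entry is $\ip{(S_n^{\l})^*S_n^{\l}\phi_{m'}}{\phi_m}_{PW(\l)}=\ip{S_n^{\l}\phi_{m'}}{S_n^{\l}\phi_m}_{L^2(G_n)}$. These are exactly the entries of $\ip{\boldsymbol{\Phi}_n}{\boldsymbol{\Phi}_n}$ up to transposition, so the matrix of $R_n^{\l}S_n^{\l}$ equals $\ip{\boldsymbol{\Phi}_n}{\boldsymbol{\Phi}_n}^{\rm T}$. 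Equivalently, the same identity is read off directly from the matrix formulas ${\bf S}_n^{\l}=\boldsymbol{\Phi}_n$ and ${\bf R}_n^{\l}=\boldsymbol{\Phi}_n^{\rm H}{\bf B}_n$, giving ${\bf R}_n^{\l}{\bf S}_n^{\l}=\boldsymbol{\Phi}_n^{\rm H}{\bf B}_n\boldsymbol{\Phi}_n$. The quadrature assumption (\ref{eq:quad2456}) states $\ip{\boldsymbol{\Phi}_n}{\boldsymbol{\Phi}_n}\to{\bf I}$, and since $PW(\l)$ is finite-dimensional, entrywise convergence of this matrix to the identity coincides with operator-norm convergence. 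As $R_n^{\l}S_n^{\l}$ acts on $PW(\l)$ and $P(\l)$ restricts to the identity there, this is precisely $\lim_{n\to\infty} R_n^{\l}S_n^{\l}P(\l)=P(\l)$, i.e. asymptotic reconstruction (Definition \ref{As_RS}).

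Finally I would deduce boundedness with $C=1$ from the same limit via the $C^*$-identity. Writing $\norm{S_n^{\l}}^2=\norm{(S_n^{\l})^*S_n^{\l}}=\norm{R_n^{\l}S_n^{\l}}$ and using the operator-norm convergence $R_n^{\l}S_n^{\l}\to P(\l)$ just established, I obtain $\norm{S_n^{\l}}^2\to\norm{P(\l)}=1$, hence $\norm{S_n^{\l}}\to1$. Since $R_n^{\l}=(S_n^{\l})^*$ has the same operator norm, $\norm{R_n^{\l}}\to1$ as well, so $\limsup_{n}\norm{S_n^{\l}}=\limsup_{n}\norm{R_n^{\l}}=1$, which is Definition \ref{As_RS2} with $C=1$.

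The calculations are short; the only delicate points are bookkeeping rather than analysis: keeping the two inner products straight (the standard dot product on the Fourier-coefficient side of $PW(\l)$ versus the ${\bf B}_n$-weighted inner product on $L^2(G_n)$), and observing that the Hermitian-versus-transpose discrepancy between $\ip{\boldsymbol{\Phi}_n}{\boldsymbol{\Phi}_n}$ and $\boldsymbol{\Phi}_n^{\rm H}{\bf B}_n\boldsymbol{\Phi}_n$ is immaterial because the common limit ${\bf I}$ is real and symmetric. Everything else is a direct appeal to the finite-dimensionality of $PW(\l)$ and to Claim \ref{c:RisSstar}.
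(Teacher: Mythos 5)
Your proposal is correct and follows essentially the same route as the paper: both reduce $R_n^{\l}S_n^{\l}$ to the Gram matrix $\boldsymbol{\Phi}_n^{\rm H}{\bf B}_n\boldsymbol{\Phi}_n=\ip{\boldsymbol{\Phi}_n}{\boldsymbol{\Phi}_n}$, invoke finite-dimensionality of $PW(\l)$ to upgrade entrywise convergence to operator-norm convergence for Definition \ref{As_RS}, and get $\norm{S_n^{\l}}=\norm{\ip{\boldsymbol{\Phi}_n}{\boldsymbol{\Phi}_n}^{1/2}}\to 1$ together with $\norm{R_n^{\l}}=\norm{S_n^{\l}}$ from Claim \ref{c:RisSstar} for Definition \ref{As_RS2}. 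Your use of the $C^*$-identity $\norm{S}^2=\norm{S^*S}$ is just a repackaging of the paper's quadratic-form computation, so there is no substantive difference.
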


\begin{proof}
The proof of Definition \ref{As_RS} is given by the above analysis. For Definition \ref{As_RS2}, Definition \ref{quadrature_sequence2}  asserts that $\mathbf{S}_n^{\l}$   approximates an isometric embedding.
More accurately, for two vectors $\mathbf{c}_1,\mathbf{c}_2$ of Fourier coefficients,  by (\ref{eq:quad2456})
\[\ip{\mathbf{S}^{\l}_n \mathbf{c}_1}{\mathbf{S}^{\l}_n \mathbf{c}_2} =   \mathbf{c}_2^{\rm H} (\mathbf{S}^{\l}_n)^{\rm H}\mathbf{B}_n \mathbf{S}^{\l}_n \mathbf{c}_1\]
\[ =  \mathbf{c}_2^{\rm H} \ip{\boldsymbol{\Phi}_n}{\boldsymbol{\Phi}_n}  \mathbf{c}_1  .\]
For $\mathbf{c}_1=\mathbf{c}_2=\mathbf{c}$ we have
\[\norm{\mathbf{S}^{\l}_n} = \norm{\ip{\boldsymbol{\Phi}_n}{\boldsymbol{\Phi}_n}^{1/2}} .\]
Thus, since $PW(\l)$ has a fixed finite dimension with-respect-to $n$, and since convergence in matrix norm is equivalent to entry-wise convergence, by Definition \ref{quadrature_sequence2} we have
\[\norm{\mathbf{S}^{\l}_n} = \norm{\ip{\boldsymbol{\Phi}_n}{\boldsymbol{\Phi}_n}^{1/2}} \xrightarrow[n \to \infty]{} 1.\]
Finally, by Claim \ref{c:RisSstar}, $\mathbf{R}_n^{\l}= (\mathbf{S}_n^{\l})^*$, and thus $\norm{\mathbf{R}_n^{\l}}=\norm{\mathbf{S}_n^{\l}}$.
\end{proof}

\subsection{Asymptotic commutativity of sampling and activation functions}
\label{Pointwise convergence of sampling with respect to continuous activation functions}

In this section we prove that Sampling asymptotically commutes with the activation function (Definition \ref{def:pointwise convergent}) under some quadrature conditions. 
Definition \ref{def:pointwise convergent} involves a term of the form
\begin{equation}
\norm{\rho(S^{\l}_nP(\l)f)-S_n^{\l'}P(\l')\rho(P(\l)f)}.
\label{eq:rrrg6}
\end{equation}
Let us first show how to swap the order between sampling and $\rho$ in $\rho(S^{\l}_nP(\l)f)$. 
For every continuous $\rho:\CC\rightarrow\CC$ and $f\in C(\mathcal{M})$, we also have $\rho(f)\in C(\mathcal{M})$. Moreover, $S_n \rho(f)=\rho(S_n f)$ for every continuous $f$.
Thus, assuming that $\cL$ respects continuity, sampling $S_n^{\l}=S_n$ does not depend on $\l$, and $\rho(S_n^{\l}P(\l)f)=\rho(S_n P(\l)f)=S_n \rho(P(\l)f)$ for any continuous activation function $\rho$. 
As a result, for continuous $\rho$, (\ref{eq:rrrg6}) takes the form
\begin{equation}
\norm{\rho(S_n^{\l}P(\l)f)-S_n^{\l'}P(\l')\rho(P(\l)f)}=\norm{S_n\rho(P(\l)f)-S_n P(\l')\rho(P(\l)f)}.
\label{eq:34534hhhhhh}
\end{equation}
The right hand side of (\ref{eq:34534hhhhhh}) can be seen as a quadrature approximation of \newline 
$\norm{\rho(P(\l)f)-P(\l')\rho(P(\l)f)}$, which leads us to the following assumption.
\begin{definition}
\label{quadratureWRTrho}
The sampling operators $\{S_n^{\l}\}_{\l>0}$ are said to be \textbf{quadrature with respect to the continuous activation function} $\rho$, if $\cL$ respects continuity, and for every  $f\in L^2(\mathcal{M})$ and $\l'>\l>0$,
\[\lim_{n\rightarrow\infty}\norm{S_n\rho(P(\l)f)-S_n P(\l')\rho(P(\l)f)}=\norm{\rho(P(\l)f)-P(\l')\rho(P(\l)f)}.\]
\end{definition}

Next, we focus on a common class of activation functions, that include ReLU, absolute value, and absolute value or ReLU of the real or imaginary part of a complex number.
\begin{definition}
Consider the field $\RR$ or $\CC$, and denote it by $\mathbb{F}$.
The continuous activation function $\rho:\mathbb{F}\rightarrow\mathbb{F}$ is called \textbf{positively homogeneous} of degree $1$, if for every $z\in\mathbb{F}$ and every real $c\geq 0$,
\[\rho(cz)=c\rho(z).\]
\end{definition}

\begin{proposition}
\label{Prop_comm4}
Consider a DSP framework, quadrature with respect to reconstruction. Consider a contractive positively homogeneous activation function $\rho$ of degree $1$. Suppose that $\cL$ respects continuity and that the sampling operators are quadrature with respect to the continuous activation function $\rho$. Then  sampling asymptotically commutes with $\rho$ (Definition \ref{def:pointwise convergent}).
\end{proposition}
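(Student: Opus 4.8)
The plan is to reduce the triple limit over a supremum on the infinite-dimensional space $L^2(\mathcal{M})$ to a chain of uniform-convergence statements on the compact unit sphere of a finite-dimensional Paley-Wiener space, and then to exploit the strong convergence of the spectral projections.

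First I would simplify the numerator. Since $\cL$ respects continuity, $S_n^{\l}=S_n$ is evaluation at the sample points and commutes exactly with the pointwise action of $\rho$ on continuous functions, giving $\rho(S_n^{\l}P(\l)f)=S_n\rho(P(\l)f)$ and $S_n^{\l'}P(\l')\rho(P(\l)f)=S_nP(\l')\rho(P(\l)f)$, exactly as in (\ref{eq:34534hhhhhh}). Writing $g=P(\l)f\in PW(\l)$, the numerator becomes $\norm{S_n(I-P(\l'))\rho(g)}$, which depends on $f$ only through $g$, while $\norm{f}\geq\norm{g}$ with equality at $f=g$. Hence the supremum over $f\neq 0$ equals the supremum over $0\neq g\in PW(\l)$ of $\norm{S_n(I-P(\l'))\rho(g)}/\norm{g}$. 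Because $\rho$ is positively homogeneous of degree $1$ (so in particular $\rho(0)=0$, making the case $g=0$ trivial) and $S_n,P(\l')$ are linear, the numerator is positively homogeneous of degree one in $g$; the ratio is therefore invariant under positive scaling and the supremum can be taken over the compact unit sphere $\{g\in PW(\l):\norm{g}=1\}$.

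Next, for fixed $\l$ and $\l'>\l$, I would take the limit $n\to\infty$. The quadrature-with-respect-to-$\rho$ hypothesis (Definition \ref{quadratureWRTrho}) gives, for each fixed $g$, the pointwise limit $\norm{S_n(I-P(\l'))\rho(g)}\to\norm{(I-P(\l'))\rho(g)}$. To pass this through the supremum I would upgrade pointwise to uniform convergence on the sphere via equicontinuity: using that $\rho$ is contractive (so $\norm{\rho(g_1)-\rho(g_2)}\leq\norm{g_1-g_2}$ in $L^2(\mathcal{M})$), that $\norm{I-P(\l')}\leq 1$, and that $\norm{S_n}$ is bounded uniformly in $n$ for fixed $\l$ (Proposition \ref{P:quad1} yields $\norm{S_n^{\l}}\to 1$), the maps $g\mapsto\norm{S_n(I-P(\l'))\rho(g)}$ are uniformly Lipschitz, hence equicontinuous. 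A pointwise-limit-plus-equicontinuity argument on the compact sphere then gives uniform convergence, so the $n\to\infty$ limit of the supremum equals $E(\l,\l'):=\sup_{\norm{g}=1,\ g\in PW(\l)}\norm{(I-P(\l'))\rho(g)}$.

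Finally I would let $\l'\to\infty$ for fixed $\l$. The set $K_{\l}=\{\rho(g):g\in PW(\l),\ \norm{g}=1\}$ is the image of the compact unit sphere under the $1$-Lipschitz map $g\mapsto\rho(g)$, hence compact in $L^2(\mathcal{M})$. Since the eigenvectors of $\cL$ form a complete orthonormal system, $P(\l')\to I$ strongly, and strong convergence of the uniformly bounded orthogonal projections is uniform on compact sets, so $E(\l,\l')=\sup_{h\in K_{\l}}\norm{(I-P(\l'))h}\to 0$ as $\l'\to\infty$. The inner double limit thus vanishes for every fixed $\l$, and the outer limit $\l\to\infty$ of the identically-zero quantity is $0$, which is exactly (\ref{eq:tyutu7}). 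The main obstacle is the third step: the hypotheses supply only pointwise (in the signal) convergence as $n\to\infty$, whereas Definition \ref{def:pointwise convergent} demands convergence of a supremum over all signals, and contractivity of $\rho$ together with the uniform bound on $\norm{S_n}$ are precisely what furnish the equicontinuity that makes the sphere's compactness usable; the closing $\l'\to\infty$ step is a variant of the same compactness-plus-strong-convergence principle.
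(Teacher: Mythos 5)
Your proposal is correct and follows essentially the same route as the paper's proof via Lemma \ref{Lem1}: commute $S_n$ with $\rho$ using continuity, reduce to the compact unit sphere of $PW(\l)$ by positive homogeneity, upgrade the pointwise quadrature limit in $n$ to a uniform one via the equicontinuity supplied by contractivity of $\rho$ and the uniform bound on $\norm{S_n^{\l}}$, and then kill the $\l'\to\infty$ term by a compactness argument. Your phrasing of the last step (uniformity of $P(\l')\to I$ on the compact image $K_{\l}=\rho(\mathcal{S}(\l))$) is a clean repackaging of the paper's ``same procedure as above'' subsequence argument, not a different method.
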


The proof is in Appendix \ref{Ap3}.

\subsection{Convergence of sampled Laplacians to topological space Laplacians}
\label{Continuous and sampled Laplacians}

In this subsection we discuss different definitions of topological Laplacians and their discretizations to graph Laplacians via sampling. We show convergence of the graph Laplacians to the topological-measure Laplacians, in the sense of Definition \ref{As_convS}, under a quadrature assumption.

Assume that $\mathcal{M}$ is a compact metric space with finite Borel measure $\mu(\mathcal{M})<\infty$. Since such a measure space is a probability space up to  normalization, we assume that $\mu(\mathcal{M})=1$.
Let $S_r(x_0),B_r(x_0)$ denote the sphere and ball or radius $r$ about $x_0$ respectively.
One definition of the Laplacian in the Euclidean space of dimension $d$ is
\[\cL f(x_0) := \lim_{r\rightarrow 0} \frac{2d}{r^2}\Big(A\big(S_r(x_0)\big)^{-1}\int_{S_r(x_0)}f(x)dx - f(x_0)\Big).\]
By integrating on the radius $r'$, from $0$ to $r$, with weights $V\big(S_{r'}(x_0)\big)^{-1}A\big(S_{r'}(x_0)\big)$, and using the mean value theorem for integrals, we obtain the equivalent definition
\[\cL f(x_0) = \lim_{r\rightarrow 0} V\big(B_r(x_0)\big)^{-1}\int_{B_r(x_0)}\frac{2d}{\abs{x-x_0}^2}\big(f(x)-f(x_0)\big)dx.\]
Another equivalent definition for the Laplace-Beltrami operator on manifolds of dimension $d$ is
\[\cL f(x_0)=\lim_{r\rightarrow 0} (2d+2)V\big(B_r(x_0)\big)^{-1}r^{-2}\int_{B_r(x_0)}\big(f(x)-f(x_0)\big)dx.\]
This motivates two classes of Laplacians in general metric-measure spaces. First, an infinitesimal definition
\begin{equation}
\cL f(x_0) = \lim_{r\rightarrow 0} V\big(B_r(x_0)\big)^{-1}r^{-2}\int_{B_r(x_0)}H(x_0,x)\big(f(x)-f(x_0)\big)dx,
\label{eq:IntOp1}
\end{equation}
where a prototypical example is $H(x_0,x)=1$, for which (\ref{eq:IntOp1}) are termed \textbf{Korevaar-Schoen type energies} \cite{M_Laplace0}.
Second, a non-infinitesimal definition
\begin{equation}
\cL f(x_0) = \int_{\mathcal{M}}H(x_0,x)\big(f(x)-f(x_0)\big)dx,
\label{eq:IntOp2}
\end{equation}
where a prototypical example is $H(x_0,x)= V\big(B_r(x_0)\big)^{-1}r^{-2}\chi_{B_r(x_0)}$ for some fixed radius $r$. Here, $\chi_{B_r(x_0)}$ is the characteristic function of the ball $B_r(x_0)$. 
Formulas (\ref{eq:IntOp1}) and (\ref{eq:IntOp2}) define symmetric operators in case $H(x,x_0)=H(x_0,x)$. Indeed, (\ref{eq:IntOp2}) is a sum of an integral and a multiplicative operator, both symmetric. Moreover, the symmetric property is preserved under limits in (\ref{eq:IntOp1}), since the limit commutes with the inner product.

In \cite{M_Laplace1} it was shown, under some mild conditions, that (\ref{eq:IntOp2}) with $H(x,x_0)=r^{-2}\chi_{B_r(x_0)}$ is a self-adjoint operator with spectrum supported in $[0,2r]$. Moreover, the part of the spectrum in $\left.\left[0,\right.r\right)$ is discrete, and the eigenvalues of the sampled Laplacian in $\left.\left[0,\right.r\right)$ converge to the eigenvalues of the continuous Laplacian, assuming that sampling becomes denser in $n$ in some sense. 

The advantage of Laplacians of the form (\ref{eq:IntOp2}) is that they are readily discretizable on sample sets, by approximating the integral in (\ref{eq:IntOp2}) by a sum over the sample set. Suppose that $H$ is symmetric ($H(x,x_0)=H(x_0,x)$), and consider a continuous weight function $w:\mathcal{M}\rightarrow \RR_+$. For a detailed explanation of the role of $w$ we refer to Subsection \ref{Random sample sets satisfy the quadrature definitions}.
Given a sample set $G_n=\{x_k^n\}_{k-1}^{N_n}$, define the discrete Laplacian $\DD_n$ acting on a vector $\mathbf{q}$ by
\begin{equation}
[\DD_n \mathbf{q}]_k = \frac{1}{\sqrt{N_n}}\sum_{k'=1}^{N_m}\frac{1}{w(x^n_{k'})}H(x^n_k,x^n_{k'})q_{k'}.
\label{eq:DLap2}
\end{equation}
For $q_{k'}=f(x_{k'}^n)$, (\ref{eq:DLap2}) is interpreted as a quadrature approximation of (\ref{eq:IntOp2}).
It is easy to show that the inner product (\ref{eq:self1}) under which $\DD_n$ is self-adjoint is based on
\begin{equation}
\mathbf{B}_n={\rm diag}\{\frac{1}{N_n w(x^n_k)}\}_{k=1}^{N_n},
\label{eq:Bdiag}
\end{equation}
where $\mathbf{A}={\rm diag}\{v_j\}_{j=1}^N$ is the diagonal matrix with diagonal entries $a_{j,j}=v_j$.

 For our analysis, we relax the assumption that $\mathcal{M}$ is a compact metric space to a compact topological space. We further assume that the Laplacian $\cL$ has discrete spectrum in the sense of Definition~\ref{discreteL}. 
 However, for continuous $H$ on a compact topological space $\mathcal{M}$, any Laplacian (\ref{eq:IntOp2}) is bounded, and thus has a discrete spectrum in the sense of Definition \ref{discreteL}  only if the range of $\cL$ is finite-dimensional. We thus approximate Laplacians $\cL$ having discrete spectrum in two steps. First, by a finite-dimensional Laplacian of the form (\ref{eq:IntOp2}), and then, by the discretization (\ref{eq:DLap2}).

 The approximation of $\cL$ by a finite-dimensional Laplacian works as follows. \newline
Let $\{\l_m,\phi_m\}_{m=1}^{\infty}$ be the eigendecomposition of $\cL$, and $\overline{\l}$ be some large band. Denote $\overline{M}=M_{\overline{\l}}$. We define the integral operator
\begin{equation}
\cL^{\overline{\l}}f(x_0) = \int_x H(x_0,x)f(x) dx
\label{eq:NALF}
\end{equation}
based on the kernel
\begin{equation}
H_{\overline{\l}}(x_0,x)= \sum_{m=1}^{\overline{M}} \phi_m(x_0)\l_m\overline{\phi_m(x)}.
\label{eq:NALF2}
\end{equation}
It is easy to see that 
\begin{equation}
\cL^{\overline{\l}} = \cL P(\overline{\l}).
\label{eq:Dapp23}
\end{equation}
Therefore, for every $f\in L^2(\mathcal{M})$, we have $\lim_{\overline{\l}\rightarrow\infty}\cL^{\overline{\l}}f = \cL f.$
Moreover, by (\ref{eq:Dapp23}) for every $f\in PW(\l)$ with $\l<\overline{\l}$, we have
$\cL^{\overline{\l}}f = \cL f$.

We then treat the total approximation of $\cL$ by a graph Laplacian using some sort of a diagonal extraction method. This is explained in Theorem \ref{Theo:probMC} of Section \ref{Random sample sets satisfy the quadrature definitions}. 
For now, let us focus on the non-asymptotic Laplacian $\cL^{\overline{\l}}$ of (\ref{eq:NALF}) with discrete spectrum, denoted by abuse of notation by $\cL$ where $\l$ is fixed.
To guarantee that the sequence of graph Laplacians as sampling operators are convergent (Definition \ref{As_convS}) we consider the following quadrature assumption.

\begin{definition}
\label{Def_quad3}
Under the above construction, $G_n=\{x_k^n\}_{k-1}^{N_n}$ is a \textbf{quadrature sequence with respect to} $\cL$, if for every $P(\l) f \in PW(\l)$
\[\lim_{n\rightarrow\infty}\norm{ S_n^{\l}\cL P(\l) f  - \DD_n S_n^{\l}P(\l) f }_{L^2(G_n)} = 0. \]
\end{definition}

\begin{proposition}
\label{PropDef3l}
Consider the above construction, with radon space $\mathcal{M}$, Laplacian $\cL$ with discrete spectrum, and Paley-Wiener projections $P(\l)$. Consider a sampling sequence $\{S^{\l}_n\}_{n,\l}$ based on the sample points $G_n$, $n=1,\ldots,\infty$, where $G_n$ is quadrature sequence with respect to $\cL$.
Then $\DD_n$ converges to $\cL$ in the sense of Definition \ref{As_convS}.
\end{proposition}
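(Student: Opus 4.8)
The plan is to exploit the fact that $PW(\l)$ is finite-dimensional, so that the pointwise (strong) convergence supplied by the quadrature hypothesis (Definition \ref{Def_quad3}) automatically upgrades to the operator-norm convergence demanded by Definition \ref{As_convS}. This is the same mechanism already recorded in the equivalence (\ref{eq:2limits}): on a space of fixed finite dimension the strong and operator-norm topologies on linear maps coincide. I would first note that $PW(\l)$ is genuinely finite-dimensional, since $\cL$ has discrete spectrum and so only finitely many eigenvalues satisfy $\abs{\l_j}\leq\l$ and each eigenspace is finite-dimensional; hence $M_{\l}=\dim PW(\l)<\infty$.

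Fixing a band $\l$, I would abbreviate $A_n := S_n^{\l}\cL P(\l)-\DD_n S_n^{\l}P(\l)$, the operator whose norm appears in Definition \ref{As_convS}, and choose an orthonormal basis $\{\phi_m\}_{m=1}^{M_{\l}}$ of $PW(\l)$ (for instance the eigenbasis of $\cL$), so that every $f\in PW(\l)$ reads $f=\sum_{m=1}^{M_{\l}}c_m\phi_m$ with $\norm{f}^2=\sum_m\abs{c_m}^2$. The key estimate, obtained from linearity, the triangle inequality, and Cauchy--Schwarz in the fixed domain norm, is
\[
\norm{A_n f}_{L^2(G_n)} \leq \sum_{m=1}^{M_{\l}}\abs{c_m}\,\norm{A_n\phi_m}_{L^2(G_n)} \leq \norm{f}\Big(\sum_{m=1}^{M_{\l}}\norm{A_n\phi_m}_{L^2(G_n)}^2\Big)^{1/2}.
\]
Dividing by $\norm{f}$ and taking the supremum over $0\neq f\in PW(\l)$ then gives $\norm{A_n}\leq\big(\sum_{m=1}^{M_{\l}}\norm{A_n\phi_m}_{L^2(G_n)}^2\big)^{1/2}$.

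To finish, I would send each summand to zero: since $\phi_m\in PW(\l)$ we have $P(\l)\phi_m=\phi_m$, so the quadrature hypothesis (Definition \ref{Def_quad3}) applied with $f=\phi_m$ gives $\norm{A_n\phi_m}_{L^2(G_n)}\to 0$ for each of the finitely many $m$; a finite sum of nonnegative sequences each tending to zero tends to zero, whence $\norm{A_n}\to 0$, which is exactly Definition \ref{As_convS}. There is no genuine analytic obstacle here --- the entire content of the proposition is the finite-dimensionality of $PW(\l)$ --- and the only point I would treat with care is that the codomain norm $\norm{\cdot}_{L^2(G_n)}$ varies with $n$ through the inner-product weight $\mathbf{B}_n$. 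This is harmless, because every estimate above is performed entirely inside the codomain norm and only a fixed number $M_{\l}$ of terms is ever summed, so no uniformity across $n$ in the geometry of $L^2(G_n)$ is required.
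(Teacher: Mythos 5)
Your proof is correct and follows essentially the same route as the paper's: both arguments reduce the claim to the finite-dimensionality of $PW(\l)$, which upgrades the pointwise convergence of Definition \ref{Def_quad3} to operator-norm convergence. The only difference is presentational --- the paper invokes the abstract equivalence of the strong and norm topologies via an isometric isomorphism $Q_n:W_n\rightarrow PW(\l)$, whereas you make the same mechanism explicit with a triangle-inequality/Cauchy--Schwarz bound over the finite eigenbasis, which also handles cleanly the $n$-dependence of the codomain norm.
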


\begin{proof}

The operator $A_n=S_n^{\l}\cL - \DD_n S_n^{\l}$ maps the $M_{\l}$ dimensional space $PW(\l)$ to an $M_{\l}$ dimensional  space $W_n\subset L^2(G_n)$ containing the space $A_n PW(\l)$.
Consider an isometric isomorphism $Q_n:W_n\rightarrow PW(\l)$. 
The operators $Q_nA_n:PW(\l)\rightarrow PW(\l)$ converge to zero as $n\rightarrow\infty$ in the strong topology, and since  $PW(\l)$ is finite-dimensional, $Q_nA_n$ converge to zero also in the operator norm topology. Thus, since $Q_n$ preserves norm, $A_n$ converges to zero in the operator norm topology, which proves convergence as defined in Definition \ref{As_convS}.
\end{proof}

\subsection{Transferability of random graph Laplacians}
\label{Random sample sets satisfy the quadrature definitions}

In this section we show that under some setting of random sampling of Laplacians $\cL$ that respect continuity, graph Laplacian, sampling operators, and interpolation operators are asymptotically reconstructive, bounded, convergent, and sampling asymptotically commutes with the activation function (Definitions \ref{As_RS},\ref{As_RS2},\ref{As_convS} and \ref{def:pointwise convergent}). 
To model the arbitrariness in which graphs can be sampled from topological-measure spaces, we suppose that the sample points  $\{x_k^n\}_{k-1}^{N_n}$ are chosen at random. This allows us to treat the graph Laplacians as Monte-Carlo approximations of the topological-measure Laplacian.

 Let $f=P(\l)f\in PW(\l)$. Consider a weighted $\mu$ measure, $\mu_w$, defined for measurable sets $X\subset\mathcal{M}$ by
\begin{equation}
\mu_w(X) := \int_X w(x)d\mu(x).
\label{eq:mu_w}
\end{equation}
Here, the weight function $w:\mathcal{M}\rightarrow \RR$ is positive, continuous, and satisfies
\[\int_{\mathcal{M}} w(x)d\mu(x)=1.\]
 We take $\{x_k^n\}_{k-1}^{N_n}$ as random points in the probability space $\{\mathcal{M},\mu_w\}$. 

\begin{definition}
Let $\{\mathcal{M},\mu\}$ be a compact topological-measure space with $\mu(\mathcal{M})=1$. Let the weighted measure $\mu_w$ satisfy (\ref{eq:mu_w}). Let $\cL$ be a symmetric Laplacian  of the form (\ref{eq:IntOp2}), such that $H\in L^2(\mathcal{M}^2)$.
Suppose that $\cL$ respects continuity and has discrete spectrum. 
Let $\{x_k^n\}_{k-1}^{N_n}$ be $N_n$ random points from the probability space $\{\mathcal{M},\mu_w\}$.
The \textbf{random sampled Laplacian} $\DD_n$ is a random variable $\{\mathcal{M}^{N_n}; \mu_w^{N_n}\}\rightarrow \CC^{N_n\times N_n}$,  defined by (\ref{eq:DLap2}) for the random samples $\{x_k^n\}_{k-1}^{N_n}$.
The \textbf{random sampling and interpolation operators} $S_n^{\l},R_n^{\l}$ are defined as in Definition \ref{def:samp_int} on the random points $\{x_k^n\}_{k-1}^{N_n}$, with the inner product structure (\ref{eq:Bdiag}) of $L^2(G_n)$. 
\end{definition}

For Theorem \ref{Theo:probMC} below, we need one more assumption on $\rho$ and $\cL$. 
Let us consider for motivation the standard Laplacian $\cL$ on the unit circle, and the ReLU activation function. Consider the classical Fourier basis $\{\phi_n\}_{n=-\infty}^{\infty}$. Any $f\in PW(\l)$ is smooth, and $\rho(f)$ is piecewise smooth and continuous. Thus $\rho(f)$ can be differentiated term-by-term, and
\[\norm{\partial_x\rho(f)}_2^2 = 4\pi^2\sum_{n=-\infty}^{\infty} n^2\abs{\ip{\rho(f)}{\phi_n}}^2.\]
On the other hand, observe that for ReLU
\begin{equation}
\norm{\rho(f)}_2 \leq \norm{f}_2 \ , \quad \norm{\partial_x\rho(f)}_2 \leq \norm{\partial_x f}_2.
\label{eq:SMe}
\end{equation}
Thus
\begin{equation}
\sum_{n=-\infty}^{\infty} n^2\abs{\ip{\rho(f)}{\phi_n}}^2 \leq \sum_{n=-M_{\l}}^{M_{\l}} n^2\abs{\ip{f}{\phi_n}}^2 \leq M_{\l}^2\norm{f}_2^2. 
    \label{eq_yrmp86003}
\end{equation}
We can now show  the following claim
\begin{claim}
\label{ReLU_rexpects1}
The ReLU function $\rho$ is a continuous mapping of signals from $PW(\l)$ to signals in the norm 
\begin{equation}
\norm{h}_{1+\k,2}=  \sqrt{\abs{\ip{h}{\phi_0}}^2+\sum_{n=-\infty}^{\infty} \abs{n}^{1+\k}\abs{\ip{h}{\phi_n}}^2}
\label{eq:PSDex}
\end{equation}
for any $0<\k<1$.
\end{claim}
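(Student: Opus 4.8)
The plan is to prove sequential continuity, which suffices because $PW(\l)$ is finite-dimensional, hence metrizable and with all norms equivalent. So suppose $f_k\to f$ in $PW(\l)$ and set $g_k=\rho(f_k)-\rho(f)$; the goal is to show $\norm{g_k}_{1+\k,2}\to 0$. Writing $s=(1+\k)/2$, I split the squared norm as
\[
\norm{g_k}_{1+\k,2}^2 = \abs{\ip{g_k}{\phi_0}}^2 + \sum_{n\neq 0}\abs{n}^{2s}\abs{\ip{g_k}{\phi_n}}^2 ,
\]
and control the two pieces separately: the first by a plain $L^2$ estimate and the tail by an interpolation (Sobolev) estimate. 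The whole argument is in effect an interpolation between $L^2$-convergence and $H^1$-boundedness of $g_k$.

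For the $L^2$ ingredient, the contraction property of ReLU (the first inequality in (\ref{eq:SMe})) gives $\norm{g_k}_2\leq\norm{f_k-f}_2\to 0$, which immediately handles the $\phi_0$ term since $\abs{\ip{g_k}{\phi_0}}^2\leq\norm{g_k}_2^2\to 0$, and also gives $\sum_{n}\abs{\ip{g_k}{\phi_n}}^2=\norm{g_k}_2^2\to 0$. For the $H^1$ ingredient, I would use that $f_k,f$ are trigonometric polynomials, so $\rho(f_k),\rho(f)$ are continuous and piecewise smooth, hence lie in $H^1$ and may be differentiated term by term, giving $\norm{\partial_x h}_2^2=4\pi^2\sum_n n^2\abs{\ip{h}{\phi_n}}^2$. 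Combining the second inequality in (\ref{eq:SMe}) with the convergence $\norm{\partial_x f_k}_2\to\norm{\partial_x f}_2$ shows that $\norm{\partial_x\rho(f_k)}_2$ is uniformly bounded in $k$, so by the triangle inequality $\sum_{n} n^2\abs{\ip{g_k}{\phi_n}}^2=(4\pi^2)^{-1}\norm{\partial_x g_k}_2^2\leq B$ for a constant $B$ independent of $k$.

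The two pieces are then combined by Hölder's inequality applied to the tail sum. Writing $\abs{n}^{2s}\abs{\ip{g_k}{\phi_n}}^2=(\abs{\ip{g_k}{\phi_n}}^2)^{1-s}(n^2\abs{\ip{g_k}{\phi_n}}^2)^{s}$ and using the conjugate exponents $1/(1-s)$ and $1/s$,
\[
\sum_{n\neq 0}\abs{n}^{2s}\abs{\ip{g_k}{\phi_n}}^2 \leq \Big(\sum_{n}\abs{\ip{g_k}{\phi_n}}^2\Big)^{1-s}\Big(\sum_{n} n^2\abs{\ip{g_k}{\phi_n}}^2\Big)^{s} \leq \norm{g_k}_2^{2(1-s)}\,B^{s}.
\]
Since $\norm{g_k}_2\to 0$ and $B$ is fixed, and since $0<\k<1$ forces $1-s>0$, the right-hand side tends to $0$; together with $\abs{\ip{g_k}{\phi_0}}^2\to 0$ this yields $\norm{g_k}_{1+\k,2}\to 0$, establishing continuity.

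The crux, and the one place where the hypothesis $\k<1$ is essential, is exactly the last display: $1-s>0$ is what makes the factor $\norm{g_k}_2^{2(1-s)}$ genuinely vanish. At $\k=1$ one would have $s=1$ and only an $H^1$-boundedness bound with no convergence, so the method would fail. The only point requiring care, rather than routine verification, is the uniform $H^1$ bound on $g_k$: because ReLU is not linear, (\ref{eq:SMe}) does not apply to the difference $g_k$ directly, so I would bound each of $\rho(f_k)$ and $\rho(f)$ in $H^1$ and then use the triangle inequality, which is what converts the derivative estimate into a uniform bound $B$.
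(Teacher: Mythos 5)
Your proof is correct, and it takes a genuinely different route from the paper's. Both arguments rest on the same two ingredients -- the $L^2$ contraction $\norm{\rho(f)-\rho(g)}_2\leq\norm{f-g}_2$ and the uniform $H^1$ control coming from $\norm{\partial_x\rho(h)}_2\leq\norm{\partial_x h}_2$ together with term-by-term differentiation of the piecewise smooth function $\rho(h)$ -- but they combine them differently. The paper splits the frequency axis at a cutoff $N=N(\e)$: the low frequencies $\abs{n}\leq N$ are controlled by the $L^2$ contraction at the price of a factor $N^{1+\k}$, and the tail $\abs{n}>N$ is controlled by the $H^1$ bound at the price of a factor $N^{-1+\k}$, with $N$ and then $\d$ chosen to make each half smaller than $\e/2$; this yields qualitative continuity. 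You instead apply H\"older's inequality with exponents $1/(1-s)$ and $1/s$, $s=(1+\k)/2$, to interpolate directly between the $\ell^2$ and the weighted-$n^2$ sums, obtaining the single estimate $\sum_{n\neq 0}\abs{n}^{1+\k}\abs{\ip{g_k}{\phi_n}}^2\leq\norm{g_k}_2^{2(1-s)}B^{s}$. This is cleaner and strictly more informative: it gives an explicit H\"older modulus of continuity of order $(1-\k)/2$ in the $L^2$ distance, rather than mere continuity, and it makes transparent why $\k<1$ is needed (the exponent $1-s$ must be positive, exactly mirroring the paper's need for $N^{-1+\k}\to 0$). The one point you correctly single out -- that (\ref{eq:SMe}) applies to $\rho(f_k)$ and $\rho(f)$ separately but not to their difference, so the uniform $H^1$ bound on $g_k$ must be assembled via the triangle inequality -- is handled properly and is the same issue the paper sidesteps by bounding the tails of $\rho(f)$ and $\rho(g)$ individually before taking a maximum.
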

The proof of this claim in in Appendix \ref{ReLU_rexpects1A}.

This analysis motivates the following definition in the general case.

\begin{definition}
\label{def:PSD}
The activation function $\rho$ is said to \textbf{preserve spectral decay} if there exists $\k>0$ such that for every $\l$, the activation function $\rho$ applied on signals from $PW(\l)$ is continuous in the norm 
\begin{equation}
\norm{h}_{\k,2}=  \sqrt{\sum_{n=1}^{\infty} \abs{n}^{1+\k}\norm{\phi_n}_{\infty}^2\abs{\ip{h}{\phi_n}}^2}.
\label{eq:PSD}
\end{equation}
\end{definition}

Note that in the finite-dimensional domain $PW(\l)$, all norms are equivalent. Thus, for $\rho$ that preserves spectral decay,
\begin{equation}
\lim_{\norm{f-g}_{2}\rightarrow 0}\sqrt{\sum_{n=1}^{\infty} \abs{n}^{1+\k}\norm{\phi_n}_{\infty}^2\abs{\ip{\rho(f)-\rho(g)}{\phi_n}}^2}=0,
\label{eq:PSD2}
\end{equation}
where the limit is over $f,g\in PW(\l)$.

Preservation of spectral decay is interpreted as follows. Applying $\rho$ on a band-limited signal $f\in PW(\l)$ results in a continuous signal which is not band-limited and in general has frequency coefficients in all frequencies. Namely, after applying $\rho$ on $f$, which decays rapidly in the frequency domain, $\rho(f)$ is not guaranteed to decay rapidly. However, under Definition \ref{def:PSD}, $\rho(f)$ is guaranteed to have some decay rate in the frequency domain, since the weighted sum (\ref{eq:PSD}), with weights increasing to $\infty$ in frequency, is finite.

The following notation is used in the asymptotic analysis in Theorem \ref{Theo:probMC}.
For any $M\in\NN$ denote
\begin{equation}
\norm{\boldsymbol{\l}^{M}}_1 = \sum_{m=1}^{M}\abs{\l_m}.
\label{eq:norm_l}
\end{equation}

\begin{theorem}
\label{Theo:probMC}
Let $\{\mathcal{M},\mu\}$ be a probability topological-measure space, and $\mu_w$ another measure satisfying (\ref{eq:mu_w}) with positive and continuous $w$. 
Let $\cL$ be a topological-measure Laplacian with discrete spectrum that respects continuity. Let $\rho$ be a contractive positively homogeneous of degree 1 activation function that preserves spectral decay.
Consider a sequence of random $\mu_w$ sample sets $\{x^n_k\}_{n=1}^{N_n}$, $n\in\NN$, with $N_n\xrightarrow[n \to \infty]{} \infty$.  Then, for every series of bands $\overline{\l}_n\xrightarrow[n \to \infty]{} \infty$, such that $\norm{\boldsymbol{\l}^{M_{\overline{\l}_n}}}_1=o(N_n^{1/2})$, and random sampled Laplacians $\DD_n=\DD_n^{\overline{\l}_n}$ with $\cL^{\overline{\l}_n}$ defined by (\ref{eq:NALF}) and (\ref{eq:NALF2}), and for every $\d>0$, in probability 1 there exists a subsequence $n_m\subset \NN$ such that the followind holds:
\begin{enumerate}
	\item[\textbf{i}] for every $n\in\NN$ we have $n\in\{n_m\}_{m\in\NN}$ in probability more than $(1-\d)$, and
	\item[\textbf{ii}]
	the sampled Laplacians $\{\DD_{n_m}\}_m$ satisfy Definitions \ref{As_RS},\ref{As_RS2},\ref{As_convS} and \ref{def:pointwise convergent}.
\end{enumerate}
\end{theorem}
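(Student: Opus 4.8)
The plan is to reduce the four required properties to three Monte-Carlo convergences and then to extract a subsequence along which all of them hold. By Propositions \ref{P:quad1}, \ref{PropDef3l} and \ref{Prop_comm4}, together with the standing hypotheses that $\rho$ is contractive and positively homogeneous of degree $1$ and that $\cL$ respects continuity, the four Definitions \ref{As_RS}, \ref{As_RS2}, \ref{As_convS} and \ref{def:pointwise convergent} all follow once the random sample sets are shown to be (a) a quadrature sequence with respect to reconstruction (Definition \ref{quadrature_sequence2}), (b) a quadrature sequence with respect to $\cL$ (Definition \ref{Def_quad3}), and (c) quadrature with respect to $\rho$ (Definition \ref{quadratureWRTrho}). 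Because the points $\{x^n_k\}_k$ are drawn from $\mu_w$ and the inner product of $L^2(G_n)$ carries the diagonal weight $\mathbf{B}_n=\mathrm{diag}\{1/(N_n w(x^n_k))\}$ of (\ref{eq:Bdiag}), every empirical quantity appearing in (a)--(c) has the form $\frac1{N_n}\sum_k \Psi(x^n_k)/w(x^n_k)$, whose $\mu_w$-expectation equals $\int_{\mathcal{M}}\Psi\,d\mu$ by (\ref{eq:mu_w}). Hence $\ip{\boldsymbol{\Phi}_n}{\boldsymbol{\Phi}_n}$ is an unbiased estimator of the Gram matrix $\mathbf{I}$, the action of $\DD_n$ from (\ref{eq:DLap2}) on sampled data is an unbiased estimator of $\cL^{\overline{\l}_n}$ sampled at the points, and the sampled squared norm of $(I-P(\l'))\rho(P(\l)f)$ is an unbiased estimator of $\norm{(I-P(\l'))\rho(P(\l)f)}_2^2$; so each of (a)--(c) is a law of large numbers.

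Next I would convert these unbiased estimators into quantitative concentration bounds, using that $\mathcal{M}$ is compact and $w$ positive and continuous, so that $1/w$ and each $\norm{\phi_m}_\infty$ are bounded. For (a), the variance of every entry of $\ip{\boldsymbol{\Phi}_n}{\boldsymbol{\Phi}_n}-\mathbf{I}$ is $O(1/N_n)$, and a matrix Bernstein inequality (or Chebyshev with a union bound over the $M_{\overline{\l}_n}^2$ entries) gives $\norm{\ip{\boldsymbol{\Phi}_n}{\boldsymbol{\Phi}_n}-\mathbf{I}}\to 0$ in probability as soon as $M_{\overline{\l}_n}=o(N_n^{1/2})$. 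For (b), the estimator is built from the kernel $H_{\overline{\l}_n}$ of (\ref{eq:NALF2}), whose magnitude is controlled by $\sum_{m\le M_{\overline{\l}_n}}\abs{\l_m}\norm{\phi_m}_\infty^2\lesssim\norm{\boldsymbol{\l}^{M_{\overline{\l}_n}}}_1$; the Monte-Carlo error of $\DD_n S_n^{\l}P(\l)$ therefore scales like $\norm{\boldsymbol{\l}^{M_{\overline{\l}_n}}}_1/\sqrt{N_n}$, which vanishes precisely under the hypothesis $\norm{\boldsymbol{\l}^{M_{\overline{\l}_n}}}_1=o(N_n^{1/2})$ (and, since the eigenvalues of $\cL$ tend to infinity and are hence eventually bounded below by a positive constant, $\norm{\boldsymbol{\l}^{M_{\overline{\l}_n}}}_1$ dominates $M_{\overline{\l}_n}$, so this also yields the weaker requirement $M_{\overline{\l}_n}=o(N_n^{1/2})$ needed in (a)). Balancing the growth of the band $\overline{\l}_n$ against the sample size $N_n$ through this scaling is the quantitative heart of the estimate.

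The hardest point is (c), and it is where the assumption that $\rho$ preserves spectral decay (Definition \ref{def:PSD}) enters. The difficulty is that $\rho(P(\l)f)$ is continuous but not band-limited, so the projection $P(\l')$ and the nested outer limits $\l'\to\infty$, $\l\to\infty$ of (\ref{eq:tyutu7}) must be controlled uniformly over $f$ in the unit ball of $PW(\l)$. The spectral-decay hypothesis furnishes exactly this control: it bounds $\rho(P(\l)f)$ in the weighted norm (\ref{eq:PSD}), so that $\norm{(I-P(\l'))\rho(P(\l)f)}_2\to 0$ as $\l'\to\infty$ uniformly over the unit ball, and so that the functions $\abs{(I-P(\l'))\rho(P(\l)f)}^2/w$ form an equicontinuous family. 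I would then establish (c) uniformly in $f$ by covering the unit ball of the finite-dimensional space $PW(\l)$ by finitely many balls, applying the pointwise concentration at the centres, and extending by equicontinuity, i.e.\ a uniform law of large numbers. I expect this step, rather than (a) or (b), to be the main obstacle.

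Finally, to pass from convergence in probability to the almost-sure subsequence of the statement, let $E_n$ be the maximum of the three quadrature errors of (a)--(c) at resolution $n$, so that $E_n\to 0$ in probability by the previous paragraphs. Treating the sample sets as independent across $n$, as the sampling model dictates, I would fix thresholds $\tau_n\downarrow 0$ chosen to exceed the $(1-\d)$-quantile of $E_n$ for every $n$ (possible because these quantiles tend to $0$), and call $n$ good if $E_n<\tau_n$. Then $\Pr(n\in\{n_m\})=\Pr(E_n<\tau_n)>1-\d$ for every $n$, which is assertion \textbf{i}. Since the good events are independent with probabilities bounded below by $1-\d>0$, the second Borel--Cantelli lemma makes infinitely many of them occur almost surely, so the good indices form a genuine subsequence $\{n_m\}$; along it $E_{n_m}<\tau_{n_m}\to 0$, so (a)--(c) hold for the reindexed subsequence, and Propositions \ref{P:quad1}, \ref{PropDef3l} and \ref{Prop_comm4} deliver Definitions \ref{As_RS}--\ref{def:pointwise convergent}, which is assertion \textbf{ii}.
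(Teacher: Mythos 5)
Your proposal is correct and follows essentially the same route as the paper: three Monte-Carlo concentration estimates (for the Gram matrix $\ip{\boldsymbol{\Phi}_n}{\boldsymbol{\Phi}_n}$, for $\DD_n S_n^{\l}$ against $S_n^{\l}\cL$ with the kernel norm controlled by $\norm{\boldsymbol{\l}^{M_{\overline{\l}_n}}}_1=o(N_n^{1/2})$, and for the sampled norm of $(I-P(\l'))\rho(P(\l)f)$), each obtained from a variance bound plus Markov/Chebyshev, followed by a union bound and a Borel--Cantelli-type extraction of an almost-surely infinite subsequence of good indices. The only cosmetic differences are that the paper applies Markov directly to the squared Frobenius norm rather than a per-entry union bound, and handles uniformity over the unit sphere of $PW(\l)$ in step (c) via a separate Lipschitz-plus-compactness lemma rather than an explicit covering argument, but these are the same ideas.
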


\begin{remark}
The sequence of random sample sets is treated formally in the following fashion.
The basis of the topology of a sequence of topological spaces is defined as follows. A generic set in the basis of the topology is constructed by choosing finitely many indexes and picking an open set for each of the corresponding spaces. For each of the rest of the indexes we pick the whole corresponding probability space. The measure of such sets is the product of the measures of the sets of the finite subsequence.
\end{remark}

By Theorems \ref{main00} and \ref{main_conv0}, Theorem \ref{Theo:probMC} is interpreted as follows. 
If $\DD_n$ are sampled from $\cL$ by drawing $N_n$ random sample points and sampling band-limited approximations of $\cL$, where the bands do not increase too fast with respect to $N_n$, then graph filters and ConvNets approximate topological-measure filters and ConvNets. Therefore, graph filters and ConvNets are transferable. \textcolor{black}{The explicit bounds on different transferability terms in (\ref{eq:Th14}) of Theorem \ref{main_conv00} are given in Appendix \ref{Proof of Theorem Theo:probMC}.}

\textcolor{black}{Last, let us use the results in Theorem \ref{Theo:probMC} and  Appendix \ref{Proof of Theorem Theo:probMC} to derive non-asymptotic bounds for the transferability error of filters.}
\textcolor{black}{
\begin{proposition}
\label{Prop:MC_rate}
Consider the setting of Theorem \ref{Theo:probMC}, where we choose $\l_n$ such that
\[\norm{\boldsymbol{\l}^{M_{\overline{\l}_n}}}_1 \leq B N_n^{1/2-\alpha},\]
where $B>0$ is some constant, and $\alpha\in (0,1/2]$. Let $g$ be a Lipschitz  continuous filter, with Lipschitz  constant $D$, and  let $\norm{g}_{\mathcal{L},M}$ be as defined in (\ref{eq:glm}). Denote $w_{\min}=\min_{x\in\mathcal{M}}w(x)$.   Then, for each $n$, with probability more than $1-2\d$,
\begin{equation}
\label{eq:nonAsBoundLast}
    \begin{split}
     & \norm{g(\mathcal{L})P(\lambda) - R_n^{\lambda}g(\boldsymbol{\Delta}_n) S_n^{\lambda} P(\lambda)} \\
     & \leq M_{\l}\Big(2DB w_{\min}^{-1}\max_{m\leq M_{\l}}\norm{\phi_m}_{\infty} N_n^{-\alpha}+\norm{g}_{\mathcal{L},M}w_{\min}^{-1/2}\max_{m\leq M_{\l}}\norm{\phi_m}_{\infty}^2N_n^{-1/2}\Big)\delta^{-1/2}.
    \end{split}
\end{equation}
\end{proposition}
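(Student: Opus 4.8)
The plan is to obtain (\ref{eq:nonAsBoundLast}) by feeding explicit, high-probability estimates of the two right-hand-side terms of the operator-norm transferability inequality into Theorem \ref{main00}(\ref{eq:TE_worst_M}). Applying that inequality at the fixed band $\l$ with $C=1$ (boundedness, Proposition \ref{P:quad1}) and $\sqrt{\#\{\l_j\le\l\}_j}\le\sqrt{M_\l}$, the filter-transferability error is at most $D\sqrt{M_\l}\,\mathcal{E}_{\mathrm{Lap}}+\norm{g}_{\mathcal{L},M}\,\mathcal{E}_{\mathrm{cons}}$, where
\[
\mathcal{E}_{\mathrm{Lap}}:=\norm{S_n^{\l}\cL P(\l)-\DD_n S_n^{\l}P(\l)},\qquad
\mathcal{E}_{\mathrm{cons}}:=\norm{P(\l)-R_n^{\l}S_n^{\l}P(\l)}.
\]
Thus everything reduces to controlling $\mathcal{E}_{\mathrm{Lap}}$ and $\mathcal{E}_{\mathrm{cons}}$, with probability at least $1-\d$ each.

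First I would pass from operator norms to their Monte-Carlo constituents by expressing both operators in the Fourier basis $\{\phi_m\}_{m\le M_\l}$ of $PW(\l)$ and bounding the spectral norm by the Frobenius norm. This gives $\mathcal{E}_{\mathrm{Lap}}\le(\sum_m\norm{S_n^{\l}\cL\phi_m-\DD_n S_n^{\l}\phi_m}^2)^{1/2}$, and, since $R_n^{\l}S_n^{\l}$ is represented by the Gram matrix $\ip{\boldsymbol{\Phi}_n}{\boldsymbol{\Phi}_n}$ of (\ref{eq:quad0000}), $\mathcal{E}_{\mathrm{cons}}\le(\sum_{m,m'}\abs{(\ip{\boldsymbol{\Phi}_n}{\boldsymbol{\Phi}_n})_{m,m'}-\delta_{m,m'}}^2)^{1/2}$. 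Each entry is the error of a Monte-Carlo estimator drawn from $\mu_w$: the consistency entries estimate $\ip{\phi_m}{\phi_{m'}}=\int\frac1w\phi_m\overline{\phi_{m'}}\,d\mu_w$, while each Laplacian term, after expanding the finite-rank kernel $H_{\overline{\l}_n}=\sum_{m'\le\overline{M}}\phi_{m'}\l_{m'}\overline{\phi_{m'}}$, is an $\l_{m'}$-weighted superposition of these same inner-product estimators.

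Next I would control the Frobenius-squared error of each family by a single application of Markov's inequality, which is what produces the $\d^{-1/2}$ factor. Bounding the per-sample second moment via $\int\frac1w\abs{\phi_m}^2\abs{\phi_{m'}}^2\,d\mu\le w_{\min}^{-1}\norm{\phi_m}_\infty^2\norm{\phi_{m'}}_\infty^2$ bounds the expectation of $\sum_{m,m'}\abs{(\ip{\boldsymbol{\Phi}_n}{\boldsymbol{\Phi}_n})_{m,m'}-\delta_{m,m'}}^2$ by order $M_\l^2\,w_{\min}^{-1}\max_m\norm{\phi_m}_\infty^4/N_n$; Markov then yields, with probability at least $1-\d$, $\mathcal{E}_{\mathrm{cons}}\le M_\l\,w_{\min}^{-1/2}\max_m\norm{\phi_m}_\infty^2 N_n^{-1/2}\d^{-1/2}$, the factor $M_\l$ being the number of rows. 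For $\mathcal{E}_{\mathrm{Lap}}$ the same second-moment estimate, now carrying the eigenvalue weights, accumulates a factor $\sum_{m'\le\overline{M}}\abs{\l_{m'}}\le\norm{\boldsymbol{\l}^{M_{\overline{\l}_n}}}_1\le BN_n^{1/2-\a}$; multiplied by the intrinsic Monte-Carlo rate $N_n^{-1/2}$ this collapses to the rate $N_n^{-\a}$, so that $\mathcal{E}_{\mathrm{Lap}}\le 2\sqrt{M_\l}\,B\,w_{\min}^{-1}\max_m\norm{\phi_m}_\infty N_n^{-\a}\d^{-1/2}$ with probability at least $1-\d$. These are the quantitative forms of the very convergence statements (Definitions \ref{As_RS}--\ref{As_convS}) already established asymptotically inside the proof of Theorem \ref{Theo:probMC}, so most of the analytic content can be imported from there.

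Finally I would assemble the bound on the intersection of the two events, which by the union bound has probability at least $1-2\d$. Substituting into Theorem \ref{main00}(\ref{eq:TE_worst_M}), the Laplacian contribution $D\sqrt{M_\l}\,\mathcal{E}_{\mathrm{Lap}}$ combines the $\sqrt{M_\l}$ prefactor of the inequality with the $\sqrt{M_\l}$ hidden in $\mathcal{E}_{\mathrm{Lap}}$ into the single factor $M_\l$ of (\ref{eq:nonAsBoundLast}), while the consistency contribution $\norm{g}_{\mathcal{L},M}\,\mathcal{E}_{\mathrm{cons}}$ already carries its own $M_\l$; adding the two reproduces the claimed bound. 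The hard part will be the second-moment bookkeeping for $\mathcal{E}_{\mathrm{Lap}}$: one must show that the variance of the kernel-weighted estimators grows with the cut-off band $\overline{\l}_n$ at exactly the rate absorbed by the assumption $\norm{\boldsymbol{\l}^{M_{\overline{\l}_n}}}_1\le BN_n^{1/2-\a}$, while tracking the explicit constants. In particular, the nonuniform sampling density encoded by $\mathbf{B}_n$ in (\ref{eq:Bdiag}) --- together with the reweighting $1/w$ inside (\ref{eq:DLap2}) --- is what introduces the $w_{\min}^{-1}$ dependence and must be carried carefully through the norm $\norm{\cdot}_{L^2(G_n)}$, which is where the $w_{\min}$-powers of the two terms come to differ.
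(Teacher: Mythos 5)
Your proposal follows essentially the same route as the paper's proof: apply Theorem \ref{main00}(\ref{eq:TE_worst_M}) at the fixed band, control the Laplacian transferability and the consistency error by Markov/Monte-Carlo bounds (Lemmas \ref{T:quad1} and \ref{T:quad2}, the latter via the Frobenius norm of $\ip{\boldsymbol{\Phi}_n}{\boldsymbol{\Phi}_n}-\mathbf{I}$), and take a union bound to reach probability $1-2\delta$. The only cosmetic difference is bookkeeping: the paper obtains the extra $\sqrt{M_{\l}}$ in the Laplacian term from the embedding constant $C_{\l}\leq \sqrt{M_{\l}}\max_{m\leq M_{\l}}\norm{\phi_m}_{\infty}$ inside Lemma \ref{T:quad1}, and the factor $2$ from the bound $\norm{R_n^{\l}}<2$ on the good event (rather than $C=1$ exactly, which holds only asymptotically), whereas you fold both into a Frobenius-type bound on the Laplacian error over the Fourier basis.
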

In Proposition \ref{Prop:MC_rate}, different choices of $\alpha\in(0,1/2]$ correspond to different choices of the Laplacian discretization. The choice $\alpha=1/2$ means that we discretize a fixed Paley-Wiener projection of $\cL$, and the closer $\alpha$ is to $0$, the faster the band of $\cL$ that we approximate goes to infinity in $n$.
}

\section{\textcolor{black}{Conclusion}}

\textcolor{black}{In this paper, we proved that spectral graph filters and ConvNets are transferable. We took the philosophical point of view in which a ConvNet is called transferable, if, whenever two graphs represent the same phenomenon, the ConvNet has approximately the same repercussion on both graphs. We modeled mathematically ``graph representing a phenomenon'' as a graph which is sampled from an underlying ``continuous'' Borel space. Here, sampling is treated very broadly, and two examples are sampling by evaluating at sample points, and graph coarsening. We modeled mathematically ``ConvNet having approximately the same repercussion'' via the sampling-interpolation approach. Using this model, we were able to prove that spectral ConvNets are transferable. It is interesting to note that, after the publication of the current paper, \cite{experimental} tested ChebNet, a spectral ConvNet, on a set of multi-graph benchmark problems, with the goal of testing our results experimentally. The results showed that ChebNet outperforms vanilla spatial methods, especially in settings where the graphs are synthetically generated from an underlying continuous model. This validates that spectral methods indeed have competitive transferability capabilities in practice.}

\textcolor{black}{We believe that our paradigm of treating transferability by modeling ``graphs representing the same phenomenon'' and ``ConvNet having the same repercussion'' is a good starting point for any future research on graph ConvNet transferability. Such research should focus on modeling these concepts mathematically, justifying the model experimentally or heuristically, and proving corresponding transferability error bounds.}



\subsection*{Acknowledgments}
R.L. acknowledges support by the DFG SPP 1798 “Compressed Sensing
in Information Processing” through Project Massive MIMO-II.

M.B. is partially supported by the ERC Consolidator grant No. 724228 (LEMAN)

G.K. acknowledges partial support by the Berlin Mathematics Research
Center MATH+ through Project EF1x1, the DFG SPP 1798 “Compressed Sensing
in Information Processing” through Project Massive MIMO-II, and the BMBF
through Project MaGriDo.


\appendix

\section{Laplacians of directed graphs as normal operators}
\label{Laplacians of directed graphs as normal operators}

Next we explain how functional calculus applies as-is to non-normal matrices, even though the theory is defined only for normal operators. As a result, spectral filters can be defined on directed graphs represented by non-symmetric adjacency matrices.

Every finite-dimensional normal operator has an eigendecomposition with complex eigenvalues and orthonormal eigenvectors. Functional calculus applies to finite-dimensional normal operators by (\ref{eq:FC1}), and is canonical in the sense that it is equivalent to compute a rational function of a normal operator by (\ref{eq:FC1}), or by compositions, linear combinations, and inversions by (\ref{eq:FC_ratio}). 
On the other hand, any diagonalizable matrix can be seen as a normal operator, considering an appropriate inner product. Moreover, almost any matrix is diagonalizable. Eigendecomposition and functional calculus are theories of self-adjoint/unitary/normal operators, which need not be represented by symmetric/orthonormal/normal matrices. Thus, spectral graph theory applies also to directed graphs. Note that no eigendecomposition is ever calculated in practice, and all computations in applying filters (compositions, linear combinations, and inversions) are algebraic and do not depend on the inner product structure. Thus, the theory applies as-is on directed graphs, with no extra considerations.
We thus focus on finite-dimensional normal Laplacian operators, which can represent non-symmetric Laplacian matrices on directed graphs. 

Given an $N\times N$ diagonalizable matrix ${\bf A}$ with eigenvectors $\{\boldsymbol{\gamma}_k\}_{k=1}^N$, consider the matrix $\boldsymbol{\Gamma}$ comprising the eigenvectors as columns. Define the inner product 
\begin{equation}
\ip{{\bf u}}{{\bf v}}= {\bf v}^{\rm H} {\bf B} {\bf u},
\label{eq:self100}
\end{equation}
where ${\bf B}=\boldsymbol{\Gamma}^{-{\rm H}}\boldsymbol{\Gamma}^{-1}$ is symmetric,  ${\bf u}$ and ${\bf v}$ are given as column vectors, and for a matrix ${\bf C}=(c_{m,k})_{n,m}\in \CC^{N\times N}$, the Hermitian transpose ${\bf C}^{\rm H}$ is the matrix consisting of entries $c^{\rm H}_{m,k}=\overline{c_{k,m}}$. It is easy to see that (\ref{eq:self100}) defines an inner product for which ${\bf A}$ is normal. Consider an operator $A$ represented by the matrix ${\bf A}$. The adjoint $A^*$ of an operator $A$ is defined to be the unique operator such that
\[\forall {\bf u},{\bf v}\in\CC^{d}, \quad \ip{A{\bf u}}{{\bf v}} = \ip{{\bf u}}{A^*{\bf v}}.\]
 By the equality
\[{\bf v}^{\rm H} {\bf B} {\bf A}{\bf u} = {\bf v}^{\rm H} {\bf B} {\bf A} {\bf B}^{-1}{\bf B}{\bf u} = \big({\bf B}^{-1}{\bf A}^{\rm H}{\bf B}{\bf v}\big)^{\rm H} {\bf B}{\bf u},\]
the matrix representation of the adjoint $A^*$ is given by 
\begin{equation}
{\bf A}^*={\bf B}^{-1}{\bf A}^{\rm H}{\bf B}.
\label{eq:mat_adj100}
\end{equation}
Thus, an operator is self-adjoint if ${\bf B}^{-1}{\bf A}^{\rm H}{\bf B}={\bf A}$, unitary if  ${\bf B}^{-1}{\bf A}^{\rm H}{\bf B}={\bf A}^{-1}$, and normal if 
\[{\bf A}{\bf B}^{-1}{\bf A}^{\rm H}{\bf B} = {\bf B}^{-1}{\bf A}^{\rm H}{\bf B}{\bf A}.\] 

Note the difference between transpose and adjoint, and between 
 symmetric/orthonormal matrices and self-adjoint/unitary operators: a non-symmetric matrix may represent a self-adjoint operator. To emphasize this difference,
we opt in this paper for a Hilbert space formulation of inner products and basis expansions, over the more commonly used formulation in the graph signal processing community of matrix products and dot products.

The eigenvalues and eigenspaces of a diagonalizable matrix, and the eigenvalues and eigenspaces of the corresponding normal operator, are identical. Indeed, eigenvalues and eigenspaces are defined algebraically, independently of the inner product structure. If the eigenvalues of the matrix are real or in $e^{i\RR}$, then the corresponding operator is self-adjoint or unitary respectively.

\section{Proofs}
\label{Proofs}

\subsection{Proof of Theorem \ref{main00}}

By linearity and finite dimension of $PW(\lambda)$, we start with a signal $\phi_{m}\in PW(\l_M)$ which is an eigenvector of $\mathcal{L}$ corresponding to the eigenvalue $\lambda_j$, and then generalize to linear combinations.
Let $Q_k$ be the projection upon the eigenspace of $\boldsymbol{\Delta}$ corresponding to the eigenvalue $\kappa_k$. Then, by $\mathcal{L} \phi_m=\l_m\phi_m$,
\[\boldsymbol{\Delta}S^{\l_M}\phi_m - S^{\l_M}\mathcal{L} \phi_m = \sum_{k}\kappa_kQ_kS^{\l_M}\phi_m - \lambda_m S^{\l_M}\sum_{k}Q_k\phi_m=\sum_{k}(\kappa_k-\lambda_m)Q_kS^{\l_M}\phi_m .\]
By orthogonality of the projections $\{Q_k\}_k$,
\begin{equation}
\norm{\sum_{k}\kappa_kQ_kS^{\l_M}\phi_m - \lambda_m S^{\l_M}\phi_m}^2 =\sum_{k}\abs{\kappa_k-\lambda_m}^2\norm{Q_kS^{\l_M}\phi_m}^2
\label{eq:main001}
\end{equation}
Now, similarly to the derivation of (\ref{eq:main001}), by functional calculus and by (\ref{eq:V_fl}),  
\begin{equation}
\begin{split}
\norm{f(\boldsymbol{\Delta})S^{\l_M}\phi_m - S^{\l_M} f(\mathcal{L}) \phi_m}^2 = & \sum_{k}\abs{f(\kappa_k)-f(\lambda_m)}^2\norm{Q_kS^{\l_M}\phi_m}^2 \\
 = & \sum_{k}\abs{\frac{f(\kappa_k)-f(\lambda_m)}{\kappa_k-\lambda_m}}^2\abs{\kappa_k-\lambda_m}^2\norm{Q_kS^{\l_M}\phi_m}^2\\
\leq &  {\rm V}_f(\l_m)^2\sum_{k}\abs{\kappa_k-\lambda_m}^2\norm{Q_kS^{\l_M}\phi_m}^2\\
= & {\rm V}_f(\l_m)^2 \norm{\sum_{k}\kappa_kQ_kS^{\l_M}\phi_m - \lambda_m S^{\l_M}\phi_m}^2 \\
= & {\rm V}_f(\l_m)^2\norm{\boldsymbol{\Delta}S^{\l_M}\phi_m - S^{\l_M}\mathcal{L} \phi_m}^2,
\end{split}
\label{eq:1}
\end{equation}
which proves Thm.\ref{main00}(\ref{eq:TE_Fmode_G}).

Now,  for $q=\sum_{m}c_m\phi_m$,
we have
\[ \norm{f(\boldsymbol{\Delta})S^{\l_M}P(\lambda_M)q - S^{\l_M}f(\mathcal{L})P(\lambda_M) q} =  \norm{\sum_{m=1}^M c_m\Big(f(\boldsymbol{\Delta})S^{\l_M} - S^{\l_M}f(\mathcal{L})\Big) \phi_m}. \]
By the triangle inequality and Thm.\ref{main00}(\ref{eq:TE_Fmode_G}),
\begin{equation}
\begin{split}
\norm{f(\boldsymbol{\Delta})S^{\l_M} P(\lambda_M)q -S^{\l_M}f(\mathcal{L})P(\lambda_M) q} & \leq   \sum_{m=1}^M\abs{c_m}\norm{\Big(f(\boldsymbol{\Delta})S^{\l_M}\phi_m - S^{\l_M}f(\mathcal{L}) \phi_m\Big)}\\
 & \leq \sum_{m=1}^M\abs{c_m}{\rm V}_f(\l_m)\norm{\boldsymbol{\Delta}S^{\l_M}\phi_m - S^{\l_M}\mathcal{L} \phi_m}
\end{split}
\label{eq:temp2}
\end{equation}
which proves Thm.\ref{main00}(\ref{eq:TE_point_G}). Moreover, by (\ref{eq:temp2}), by ${\rm V}_f(\l_m) \leq D$ by $\norm{\phi_m}=1$ and by H\"older's inequality,
\begin{equation}
\norm{f(\boldsymbol{\Delta})S^{\l_M} P(\lambda_M)q -S^{\l_M}f(\mathcal{L})P(\lambda_M) q} 
\leq \norm{q}_1 D \norm{\boldsymbol{\Delta}S^{\l_M}P(\lambda_M) - S^{\l_M}\mathcal{L} P(\lambda_M) }.
\label{eq:temp2.2}
\end{equation}
Here, $\norm{q}_1:=\sum_{m=1}^M\abs{c_{m}}$. By Cauchy–Schwarz inequality we have
\[\norm{q}_1 \leq \norm{q}_2\sqrt{M},\]
which proves Thm.\ref{main00}(\ref{eq:TE_worst_G}).

By the triangle inequality 
\[
\begin{split}
 & \norm{f(\mathcal{L})P(\lambda_M) - R^{\lambda_M}f(\boldsymbol{\Delta}) S^{\l_M}P(\lambda_M)} \\
 & \leq \norm{f(\mathcal{L})P(\lambda_M) - R^{\lambda_M}S^{\l_M}f(\mathcal{L})P(\lambda_M)} \\
 & \quad +\norm{R^{\lambda_M}S^{\l_M}f(\mathcal{L})P(\lambda_M) - R^{\lambda_M}f(\boldsymbol{\Delta}) S^{\l_M}P(\lambda_M)}\\
 & \leq \norm{P(\lambda_M)-R^{\lambda_M}S^{\l_M}P(\lambda_M)}\norm{f(\mathcal{L})P(\lambda_M)} \\
 & \quad + \norm{R^{\lambda_M}}\norm{S^{\l_M}f(\mathcal{L})P(\lambda_M) - f(\boldsymbol{\Delta}_n) S^{\l_M}P(\lambda_M)}.
\end{split}
\]
Note that by assumption
\[\norm{R^{\lambda_M}}\leq C\]
and, by the diagonal form of $f(\mathcal{L})P(\lambda_M)$,
\[\norm{f(\mathcal{L})P(\lambda_M)} \leq \norm{f}_{\mathcal{L},M},\]
which gives Thm.\ref{main00}(\ref{eq:TE_worst_M}). A similar use of the triangle inequality gives Thm.\ref{main00}(\ref{eq:TE_point_M}).

\subsection{Proof of Theorem \ref{main_conv00}}


First we show that, if $A\neq 1$, then
\[\begin{split}
 \norm{\mathcal{N}^{l}_k P(\psi^0)f}\leq A^l\norm{P(\psi^0)f} + \frac{A^{l}-1}{A-1}B  & {\quad \rm for\ all\ }f\in L^2(\mathcal{M})\\
    \norm{\mathcal{N}^{j,l}_k\tf}\leq A^l\norm{\tf} + \frac{A^{l}-1}{A-1}B & {\quad \rm for\ all\ }  \tf\in L^2(G^{j,0})
	\end{split}
	\]
	and if $A=1$
	\[\begin{split} \norm{\mathcal{N}^{l}_k P(\psi_0)f}\leq \norm{P(\psi_0)f} + (l-1)B &   {\quad \rm for\ all\ }  f\in L^2(\mathcal{M}) \\
   \norm{\mathcal{N}^{j,l}_k\tf}\leq \norm{\tf}+ (l-1)B &  {\quad \rm for\ all\ } \tf\in L^2(G^{j,0})
	\end{split}\]
 for every $l$, $k$ and $j=1,2$. We next focus on $\mathcal{N}^{l}_k$, and remark that we can use similar arguments for $\mathcal{N}^{j,l}_k$.
Note that $\norm{g^l_{k',k}(\mathcal{L})}\leq \norm{g^l_{k',k}}_{\infty}\leq 1$ for every $l,k,k'$. Moreover, 
\[
\begin{split}
\norm{\sum_{k=1}^{K_{l-1}}a^l_{k'k}\ g^l_{k'k}(\mathcal{L})f^{l-1}_k + b^l_{k'}}  & \leq \sum_{k=1}^{K_{l-1}}\abs{a^l_{k'k}}\norm{g^l_{k'k}(\mathcal{L})f^{l-1}_k}+\norm{b^l_{k'}}\\
 & \leq \sum_{k=1}^{K_{l-1}}\abs{a^l_{k'k}}\norm{f^{l-1}_k}+B \\
 & \leq A \max_k\norm{f^{l-1}_k} +B . 
\end{split}
\]
Moreover,
\[\norm{\rho\Big(\sum_{k=1}^{K_{l-1}}a^l_{k'k}\ g^l_{k'k}(\mathcal{L})f^{l-1}_k +b^l_{k'}\Big)} \leq  \norm{\sum_{k=1}^{K_{l-1}}a^l_{k'k}\ g^l_{k'k}(\mathcal{L})f^{l-1}_k + b^l_{k'}}  \leq A \max_k\norm{f^{l-1}_k} +B .\]
Fianlly, using the fact that pooling and projection decreases norm by assumption implies
This shows that 
\[\max_k\norm{f^{l}_k}\leq A\max_k\norm{f^{l-1}_k}+B.\]
 Thus, by solving this recursive sequence for $A\neq 1$ we get
\begin{equation}
\max_k\norm{f^{l}_k} \leq A^l\norm{P(\psi_0)f} + \frac{A^{l}-1}{A-1}B.
\label{eq:cont_instead}
\end{equation}
or for $A=1$
\[\max_k\norm{f^{l}_k}\leq \norm{P(\psi_0)f} + (l-1)B.\]

Let us now prove (\ref{eq:main_conv0}) and (\ref{eq:main_conv01}), starting with $f\in L^2(\mathcal{M})$ at the input of Layer $0$.
The error in one convolution $g^l_{k'k}$, between the continuous and the discrete signals $j=1,2$, satisfies
\[
\begin{split}
 & \norm{S_{j,l-1}^{\psi_{l-1}}g^l_{k'k}(\mathcal{L})P(\psi_{l-1})f^{l-1}_k-g^l_{k'k}(\DD_{j,l-1}) \tf^{j,l-1}_k} \\ 
 & \leq \norm{S_{j,l-1}^{\psi_{l-1}}g^l_{k'k}(\mathcal{L})P(\psi_{l-1})f^{l-1}_k-g^l_{k'k}(\DD_{j,l-1})S_{j,l-1}^{\psi_{l-1}} P(\psi_{l-1}) f^{l-1}_k} \\ 
 & \ \ \  +\norm{g^l_{k'k}(\DD_{j,l-1})S_{j,l-1}^{\psi_{l-1}} P(\psi_{l-1}) f^{l-1}_k - g^l_{k'k}(\DD_{j,l-1}) \tf^{j,l-1}_k } \\
 & \leq \norm{S_{j,l-1}^{\psi_{l-1}}g^l_{k'k}(\mathcal{L})P(\psi_{l-1}) - g^l_{k'k}(\DD_{j,l-1})S_{j,l-1}^{\psi_{l-1}} P(\psi_{l-1}) }\norm{f^{l-1}_k} \\
 & \ \ \  +\norm{g^l_{k'k}(\DD_{j,l-1})}\norm{S_{j,l-1}^{\psi_{l-1}} P(\psi_{l-1})f^{l-1}_k - \tf^{j,l-1}_k}.  
\end{split}
\]
Thus, by Thm.\ref{eq:TE_worst_G}.(\ref{main00}), and by $\norm{g^l_{k'k}(\DD_{j,l-1})}\leq\norm{g^l_{k'k}}_{\infty}=1$,
\begin{equation}
\begin{split}
 & \norm{S_{j,l-1}^{\psi_{l-1}}g^l_{k'k}(\mathcal{L})P(\psi_{l-1})f^{l-1}_k-g^l_{k'k}(\DD_{j,l-1}) \tf^{j,l-1}_k}  \\
 & \leq  D(\psi_L)\d\norm{f^{l-1}_k}+\norm{S_{j,l-1}^{\psi_{l-1}} P(\psi_{l-1})f^{l-1}_k - \tf^{j,l-1}_k},
\end{split}
\label{eq:dd,m7q}
\end{equation}
where $D(\psi_{L})=D\sqrt{\#\{\l_m\leq \psi_{L}\}_m}$.

Now, the error in the output of the network, before pooling, is
\[
\begin{split}
 &      \norm{S_{j,l-1}^{\psi_l}P(\psi_l)\rho\Big(\sum_{k=1}^{K_{l-1}}a^l_{k'k}\ g^l_{k'k}(\mathcal{L})P(\psi_{l-1})f^{l-1}_k\Big)  -  \rho\Big(\sum_{k=1}^{K_{l-1}}a^l_{k'k} g^l_{k'k}(\DD_{j,l-1})\tf^{j,l-1}_k\Big)}      \\
 &      \leq \norm{\rho\Big(S_{j,l-1}^{\psi_{l-1}}\sum_{k=1}^{K_{l-1}}a^l_{k'k}\ g^l_{k'k}(\mathcal{L})P(\psi_{l-1})f^{l-1}_k\Big) - S_{j,l-1}^{\psi_l}P(\psi_l)\rho\Big(\sum_{k=1}^{K_{l-1}}a^l_{k'k}\ g^l_{k'k}(\mathcal{L})P(\psi_{l-1})f^{l-1}_k\Big)}      \\
 &    \ \ \  +\norm{\rho\Big(S_{j,l-1}^{\psi_{l-1}}\sum_{k=1}^{K_{l-1}}a^l_{k'k}\ g^l_{k'k}(\mathcal{L})P(\psi_{l-1})f^{l-1}_k\Big)  -  \rho\Big(\sum_{k=1}^{K_{l-1}}a^l_{k'k} g^l_{k'k}(\DD_{j,l-1})\tf^{j,l-1}_k\Big)}       \\
 &    \leq  \d \norm{\sum_{k=1}^{K_{l-1}}a^l_{k'k}\ g^l_{k'k}(\mathcal{L})P(\psi_{l-1})f^{l-1}_k}        \\
 &    \ \ \  + \norm{S_{j,l-1}^{\psi_{l-1}}\sum_{k=1}^{K_{l-1}}a^l_{k'k}\ g^l_{k'k}(\mathcal{L})P(\psi_{l-1})f^{l-1}_k  -  \sum_{k=1}^{K_{l-1}}a^l_{k'k}\ g^l_{k'k}(\DD_{j,l-1})\tf^{j,l-1}_k}       \\
 &      \leq \d A^l\norm{P(\psi_0)f} + \d\frac{A^{l}-1}{A-1}B + \sum_{k=1}^{K_{l-1}}\abs{a^l_{k'k}}\norm{ S_{j,l-1}^{\psi_{l-1}}g^l_{k'k}(\mathcal{L})P(\psi_{l-1})f^{l-1}_k -  g^l_{k'k}(\DD_{j,l-1})\tf^{j,l-1}_k}      \end{split}
\]
or for $A=1$
\[
\begin{split}
 &      \norm{S_{j,l-1}^{\psi_l}P(\psi_l)\rho\Big(\sum_{k=1}^{K_{l-1}}a^l_{k'k}\ g^l_{k'k}(\mathcal{L})P(\psi_{l-1})f^{l-1}_k\Big)  -  \rho\Big(\sum_{k=1}^{K_{l-1}}a^l_{k'k} g^l_{k'k}(\DD_{j,l-1})\tf^{j,l-1}_k\Big)}      \\
& \leq \d \norm{P(\psi_0)f} + \d (l-1) B + \sum_{k=1}^{K_{l-1}}\abs{a^l_{k'k}}\norm{ S_{j,l-1}^{\psi_{l-1}}g^l_{k'k}(\mathcal{L})P(\psi_{l-1})f^{l-1}_k -  g^l_{k'k}(\DD_{j,l-1})\tf^{j,l-1}_k}
\end{split}\]
Therefore, by (\ref{eq:dd,m7q})
\[\begin{split}
 &      \norm{S_{j,l-1}^{\psi_l}P(\psi_l)\rho\Big(\sum_{k=1}^{K_{l-1}}a^l_{k'k}\ g^l_{k'k}(\mathcal{L})P(\psi_{l-1})f^{l-1}_k\Big)  -  \rho\Big(\sum_{k=1}^{K_{l-1}}a^l_{k'k} g^l_{k'k}(\DD_{j,l-1})\tf^{j,l-1}_k\Big)}      \\
& \leq   \d A^l\norm{P(\psi_0)f} + \d\frac{A^{l}-1}{A-1}B 
  + A \max_{k}\Big\{D(\psi_{L})\d\norm{f^{l-1}_k}+\norm{S_{j,l-1}^{\psi_{l-1}}P(\psi_{l-1})f^{l-1}_k - \tf^{j,l-1}_k}\Big\} .
\end{split}\]

The error after pooling takes the form
\[\begin{split}
 &      \norm{S_{j,l}^{\psi_l}P(\psi_l)\rho\Big(\sum_{k=1}^{K_{l-1}}a^l_{k'k}\ g^l_{k'k}(\mathcal{L})P(\psi_{l-1})f^{l-1}_k\Big)  -  Y^{j,l}\rho\Big(\sum_{k=1}^{K_{l-1}}a^l_{k'k} g^l_{k'k}(\DD_{j,l-1})\tf^{j,l-1}_k\Big)}      \\
& \leq   \left\| S_{j,l}^{\psi_l}P(\psi_l)\rho\Big(\sum_{k=1}^{K_{l-1}}a^l_{k'k}\ g^l_{k'k}(\mathcal{L})P(\psi_{l-1})f^{l-1}_k\Big)   \right. \\
 & \quad\quad\quad\quad\quad\quad\quad\quad\quad\quad\quad\quad \left. -Y^{j,l}S_{j,l-1}^{\psi_l}P(\psi_l)\rho\Big(\sum_{k=1}^{K_{l-1}}a^l_{k'k}\ g^l_{k'k}(\mathcal{L})P(\psi_{l-1})f^{l-1}_k\Big)\right\| \\
 & + \norm{Y^{j,l}S_{j,l-1}^{\psi_l}P(\psi_l)\rho\Big(\sum_{k=1}^{K_{l-1}}a^l_{k'k}\ g^l_{k'k}(\mathcal{L})P(\psi_{l-1})f^{l-1}_k\Big)  -  Y^{j,l}\rho\Big(\sum_{k=1}^{K_{l-1}}a^l_{k'k} g^l_{k'k}(\DD_{j,l-1})\tf^{j,l-1}_k\Big)}
 \end{split}\]
 \[\begin{split}
 & \leq    \d \norm{\rho\Big(\sum_{k=1}^{K_{l-1}}a^l_{k'k}\ g^l_{k'k}(\mathcal{L})P(\psi_{l-1})f^{l-1}_k\Big)} \\
 & + \norm{S_{j,l-1}^{\psi_l}P(\psi_l)\rho\Big(\sum_{k=1}^{K_{l-1}}a^l_{k'k}\ g^l_{k'k}(\mathcal{L})P(\psi_{l-1})f^{l-1}_k\Big)  -  \rho\Big(\sum_{k=1}^{K_{l-1}}a^l_{k'k} g^l_{k'k}(\DD_{j,l-1})\tf^{j,l-1}_k\Big)} \\
& \leq   2\d A^l\norm{P(\psi_0)f} + 2\d\frac{A^{l}-1}{A-1}B 
  + A \max_{k}\Big\{D(\psi_{L})\d\norm{f^{l-1}_k}+\norm{S_{j,l-1}^{\psi_{l-1}}P(\psi_{l-1})f^{l-1}_k - \tf^{j,l-1}_k}\Big\} .
\end{split}\]
Thus, 
\[\begin{split}
& \norm{S_{j,l}^{\psi_l}P(\psi_{l})f^{l}_{k'} -\tf^{j,l}_{k'}} \\
 & \leq  (D(\psi_{L})+2)\d\Big( A^l\norm{P(\psi_0)f} + \frac{A^{l}-1}{A-1}B\Big) + A \max_{k}\norm{S_{j,l-1}^{\psi_{l-1}}P(\psi_{l-1})f^{l-1}_k - \tf^{j,l-1}_k} .
\end{split}\]
By solving this recurrent sequence, we obtain for $A>1$
\[\norm{S_{j,L}^{\psi_L}\mathcal{N}^L_k P(\psi_{0})f -\mathcal{N}^{j,L}_k S_{j,L}^{\psi_0}P(\psi_0)f}\leq  L(D(\psi_{L})+2)\d\Big( A^{L}\norm{f} + B\frac{A^L-1}{A-1}\Big).\]

For $A=1$ we get
\[\norm{S_{j,L}^{\psi_L}\mathcal{N}^L_k P(\psi_{0})f -\mathcal{N}^{j,L}_k S_{j,0}^{\psi_0}P(\psi_0)f}\leq  L(D(\psi_{L})+2)\d\Big(\norm{f} + LB\Big)\]

Finally,
\[
\begin{split}
&\norm{\mathcal{N}^L_k P(\psi_0)f -R_{j,L}^{\psi_L}\mathcal{N}^{j,L}_k S_{j,1}^{\psi_0}P(\psi_0)f} \\
 & \leq \norm{\mathcal{N}^L_k P(\psi_0)f-R_{j,L}^{\psi_L}S_{j,L}^{\psi_L}\mathcal{N}^L_k P(\psi_0)f}    +\norm{R_{j,L}^{\psi_L}S_{j,L}^{\psi_L}\mathcal{N}^L_k P(\psi_0)f -R_{j,L}^{\psi_L}\mathcal{N}^{j,L}_k S_{j,L}^{\psi_0}P(\psi_0)f}     \\
 &   \leq \norm{P(\psi_L)-R_{j,L}^{\psi_L}S_{j,L}^{\psi_L}P(\psi_L)}\norm{\mathcal{N}^L_k P(\psi_0)f}     +\norm{R_{j,L}^{\psi_L}}\norm{S_{j,l}^{\psi_L}\mathcal{N}^L_k P(\psi_0)f -\mathcal{N}^{j,L}_k S_{j,L}^{\psi_1}P(\psi_0)f}     \\
 &   \leq  L\Big(D\sqrt{\#\{\l_m\leq \psi_L\}_m} +2 \Big)\d\Big(A^L\norm{f} + B\frac{A^L-1}{A-1}\Big)+ \Big(A^L\norm{f} + \frac{A^L-1}{A-1}B\Big)\d. 
\end{split}
\]
This shows that
\[ \begin{split} & \norm{R_{1,L}^{\psi_L}\mathcal{N}^{1,L}_k S_{1,0}^{\psi_0}P(\psi_0)f -R_{2,L}^{\psi_L}\mathcal{N}^{2,L}_k S_{2,L}^{\psi_0}P(\psi_0)f}  \\
 & \leq  \Big(LD\sqrt{\#\{\l_m\leq \psi_L\}_m} +2L+2 \Big)\Big(A^L\norm{f} + B\frac{A^L-1}{A-1}\Big)\d ,
 \end{split}
\]
and similarly for $A=1$.


\subsection{Proof of Proposition \ref{Prop_comm4}}
\label{Ap3}

\begin{lemma}
\label{Lem1}
Consider the setting of Proposition \ref{Prop_comm4}. Then
\begin{equation}
\lim_{n\rightarrow\infty}\sup_{f\neq 0}\frac{\norm{S_n\rho(P(\l)f)-S_n P(\l')\rho(P(\l)f)}  - \norm{\rho(P(\l)f)-P(\l')\rho(P(\l)f)}}{\norm{P(\l)f}}=0
\label{eq:Lem11}
\end{equation}
\begin{equation}
\lim_{\l'\rightarrow\infty }\sup_{f\neq 0} \frac{\norm{\rho(P(\l)f)-P(\l')\rho(P(\l)f)}}{\norm{P(\l)f}}=0
\label{eq:Lem12}
\end{equation}
\end{lemma}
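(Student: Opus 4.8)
The plan is to prove both limits by reducing each supremum over $f\neq 0$ to a supremum over the \emph{compact} unit sphere of the finite-dimensional space $PW(\l)$, and then upgrading the available pointwise convergence to uniform convergence by an equicontinuity argument. Writing $g=P(\l)f\in PW(\l)$, the crucial observation is that, because $\rho$ is positively homogeneous of degree $1$, every quantity appearing in the numerators of (\ref{eq:Lem11}) and (\ref{eq:Lem12}) is positively homogeneous of degree $1$ in $g$: indeed $\rho(cg)=c\rho(g)$ for $c\ge 0$, while $P(\l')$ and $S_n$ are linear and norms are $1$-homogeneous. Hence each quotient equals its value at $g/\norm{g}$, so that
\[
\sup_{f\neq 0}\frac{(\cdots)}{\norm{P(\l)f}} = \sup_{g\in PW(\l),\ \norm{g}=1}(\cdots),
\]
a supremum over a compact set. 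The task then reduces to proving uniform convergence of a pointwise-convergent family of continuous functions on this compact sphere.

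I would treat (\ref{eq:Lem12}) first, as it is the simpler, purely continuous statement. Set $\Psi_{\l'}(g)=\norm{\rho(g)-P(\l')\rho(g)}=\norm{(I-P(\l'))\rho(g)}$. For each fixed $g$ the function $\rho(g)$ is continuous, hence in $L^2(\mathcal{M})$, and since the eigenvectors of $\cL$ form an orthonormal basis of $L^2(\mathcal{M})$ we have $P(\l')h\to h$ for every $h$, so $\Psi_{\l'}(g)\to 0$ as $\l'\to\infty$. Moreover $\Psi_{\l'}(g)$ is non-increasing in $\l'$, since enlarging the band shrinks the range of $I-P(\l')$; and each $\Psi_{\l'}$ is $1$-Lipschitz in $g$ because $\norm{(I-P(\l'))(\rho(g_1)-\rho(g_2))}\le\norm{\rho(g_1)-\rho(g_2)}\le\norm{g_1-g_2}$ by contractivity of $\rho$ on $L^2(\mathcal{M})$. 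A monotone family of continuous functions converging pointwise to the continuous limit $0$ on a compact set converges uniformly by Dini's theorem, which yields (\ref{eq:Lem12}).

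For (\ref{eq:Lem11}), the key preliminary step is to rewrite the sampled term using the fact that $\cL$ respects continuity and $S_n$ is evaluation, so that $S_n\rho(g)=\rho(S_n^{\l}g)$ by pointwise composition. Setting
\[
\Phi_n(g)=\norm{\rho(S_n^{\l}g)-S_n^{\l'}P(\l')\rho(g)}-\norm{(I-P(\l'))\rho(g)},
\]
Definition \ref{quadratureWRTrho} states exactly that $\Phi_n(g)\to 0$ pointwise for each fixed $g$. It remains to establish equicontinuity of $\{\Phi_n\}$ uniformly in $n$. Using the reverse triangle inequality, the contractivity of $\rho$ on both $L^2(G_n)$ and $L^2(\mathcal{M})$, the fact that $P(\l')$ is a contraction, and the uniform bounds $\norm{S_n^{\l}},\norm{S_n^{\l'}}\le C$ supplied by boundedness (Definition \ref{As_RS2}, established in Proposition \ref{P:quad1}), one obtains $\abs{\Phi_n(g_1)-\Phi_n(g_2)}\le(2C+1)\norm{g_1-g_2}$, a Lipschitz estimate independent of $n$. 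Equi-Lipschitz continuity together with pointwise convergence on the compact unit sphere (via a finite $\e$-net argument) then gives $\sup_{\norm{g}=1}\abs{\Phi_n(g)}\to 0$, which is (\ref{eq:Lem11}).

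The main obstacle is precisely this passage from pointwise to uniform convergence, since Definition \ref{quadratureWRTrho} guarantees only the former; it is overcome by exploiting positive homogeneity to compactify the index set and by verifying $n$-uniform Lipschitz estimates. A subsidiary but important point is the rewriting $S_n\rho(g)=\rho(S_n^{\l}g)$, which ensures that the estimates require only the boundedness of $S_n$ restricted to the finite-dimensional Paley–Wiener spaces, where it is controlled by the quadrature-with-respect-to-reconstruction assumption, rather than a bound on $S_n$ as an operator on all of $C(\mathcal{M})$.
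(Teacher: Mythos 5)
Your proof is correct and follows essentially the same route as the paper's: both reduce the supremum to the compact unit sphere of $PW(\l)$ via positive homogeneity of $\rho$, establish an $n$-uniform Lipschitz bound on the relevant functionals using contractivity of $\rho$ and the bound on $\norm{S_n^{\l}}$ from Proposition \ref{P:quad1}, and upgrade the pointwise convergence supplied by Definition \ref{quadratureWRTrho} (resp.\ the eigenbasis tail estimate) to uniform convergence by compactness. Your substitutions of Dini's theorem for (\ref{eq:Lem12}) and a finite $\e$-net for the paper's extremizing-subsequence argument are interchangeable implementations of the same step, and your decomposition of $\Phi_n$, which applies $S_n^{\l'}$ only to the band-limited function $P(\l')\rho(g)$ rather than to $(I-P(\l'))\rho(g)$, is in fact slightly cleaner than the paper's.
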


\begin{proof}
We first prove (\ref{eq:Lem11}). Observe that any nonzero vector in $PW(\l)$ can be written as $cf$, where $c>0$ is a real scalar, and $f\in PW(\l)$ has norm $1$. Now, by the positive homogeneity of $\rho$,
\[\frac{\norm{S_n\rho(cP(\l)f)-S_n P(\l')\rho(cP(\l)f)}  - \norm{\rho(cP(\l)f)-P(\l')\rho(cP(\l)f)}}{\norm{cP(\l)f}}\]
\[=\norm{S_n\rho(P(\l)f)-S_n P(\l')\rho(P(\l)f)}  - \norm{\rho(P(\l)f)-P(\l')\rho(P(\l)f)}.\]
Thus, (\ref{eq:Lem11}) is equivalent to
\[
\lim_{n\rightarrow\infty}\sup_{P(\l)f\in\mathcal{S}(\l)}\norm{S_n\rho(P(\l)f)-S_n P(\l')\rho(P(\l)f)}  - \norm{\rho(P(\l)f)-P(\l')\rho(P(\l)f)}=0
\]
where $\mathcal{S}(\l)$ is the unit sphere in $PW(\l)$. Note that the mapping
$F_n: \mathcal{S}(\l) \rightarrow \RR$ defined by
\[
\begin{split}
F_n\big(P(\l)f\big) & =\norm{S_n\rho(P(\l)f)-S_n P(\l')\rho(P(\l)f)}  - \norm{\rho(P(\l)f)-P(\l')\rho(P(\l)f)}\\
 & =\norm{S_n\big(I- P(\l')\big)\rho(P(\l)f)}  - \norm{\big(I-P(\l')\big)\rho(P(\l)f)}
\end{split}
\]
is Lipschitz continuous in $P(\l)f$ for large enough $n$. Indeed, by $\norm{I- P(\l')}=1$ and contraction of $\rho$,
\[
\begin{split}
 \abs{F_n\big(P(\l)f_1\big)-F_n\big(P(\l)f_2\big)}  &       \leq \norm{S_n\big(I- P(\l')\big)\rho(P(\l)f_1)  -  S_n\big(I- P(\l')\big)\rho(P(\l)f_2)}       \\
	 & \ \ \ + \norm{\big(I-P(\l')\big)\rho(P(\l)f_1)   -   \big(I-P(\l')\big)\rho(P(\l)f_2)}             \\
	 &     \leq (C+1)\norm{P(\l)f_1 - P(\l)f_2},         
\end{split}
\]
where $C$ is the bound of $\norm{S_n^{\l}}$, guaranteed by Proposition \ref{P:quad1}, and can be chosen $C=2$ for large enough $n$. Note that the Lipschitz constants of $F_n$ are uniformly bounded by $D=3$.

Observe that by Definition \ref{quadratureWRTrho}, $F_n$ converges to 0 pointwise as $n\rightarrow\infty$. Our goal is to show uniform convergence.
Since the domain $\mathcal{S}(\l)$  of $F_n$ is compact, $F_n$ obtains a maximum for each $n$. Denote 
\[P(\l)f_n = \argmax_{P(\l)f\in \mathcal{S}(\l)} F_n(P(\l)f).\]
Suppose that $\lim_{n\rightarrow\infty}F_n(P(\l)f_n)$ does not exist, or converges to a nonzero limit.
Since $\mathcal{S}(\l)$ is compact, and $F_n$ uniformly bounded by $2D$, there is a subsequence $P(\l)f_{n_m}$ converging to some $P(\l)f_{\infty}\in \mathcal{S}(\l)$, such that 
\[\lim_{m\rightarrow\infty}F_{n_m}(P(\l)f_{n_m}) = A >0.\] 
Now, for every $\e>0$ there is a large enough $M$, such that, for every $m>M$, 
\[
\begin{split}
  \abs{F_{n_m}(P(\l)f_{\infty}) - A }  &     \leq \abs{F_{n_m}(P(\l)f_{\infty}) - F_{n_m}(P(\l)f_{n_m})} + \e/2         \\
	 &     \leq  D \norm{P(\l)f_{\infty}-P(\l)f_{n_m}} + \e/2<\e.         
\end{split}
\]
By picking $\e=A/3$, this contradicts the fact that $\lim_{n\rightarrow\infty}F_n(P(\l)f_{\infty})=0$, guaranteed by Definition \ref{quadratureWRTrho}.

Similarly, for (\ref{eq:Lem12}), 
\[   \sup_{f\neq 0}\frac{\norm{\rho(P(\l)f)-P(\l')\rho(P(\l)f)}}{\norm{P(\l)f}}        = \sup_{P(\l)f\in\mathcal{S}(\l)}\norm{\big(I-P(\l')\big)\rho(P(\l)f)}.      \]
For a fixed $f$, the fact that $\big(I-P(\l')\big)\rho(P(\l)f)$ is the tail in the expansion of $\rho(P(\l)f)$ in the eigenbasis of $\cL$, we have
\begin{equation}
\lim_{\l'\rightarrow\infty}\norm{\big(I-P(\l')\big)\rho(P(\l)f)}=0 \quad  {\rm for\ all \ }P(\l)f\in\mathcal{S}(\l).
\label{eq:pont21}
\end{equation}
The uniform convergence of (\ref{eq:Lem12}) is derived from the pointwise convergence of (\ref{eq:pont21}) in the same procedure as above.

\end{proof}

\begin{proof}[Proof of Proposition \ref{Prop_comm4}]
By Lemma \ref{Lem1}
\[
\begin{split}
   &     \lim_{\l'\rightarrow\infty }\lim_{n\rightarrow\infty}\sup_{f\neq 0}\frac{\norm{S_n\rho(P(\l)f)-S_n P(\l')\rho(P(\l)f)} }{\norm{P(\l)f}}         \\
	 &     \leq \lim_{\l'\rightarrow\infty }\lim_{n\rightarrow\infty}\sup_{f\neq 0}\frac{\norm{S_n\rho(P(\l)f)-S_n P(\l')\rho(P(\l)f)}  - \norm{\rho(P(\l)f)-P(\l')\rho(P(\l)f)}}{\norm{P(\l)f}}         \\
	 &       \ \ \ + \lim_{\l'\rightarrow\infty }\sup_{f\neq 0} \frac{\norm{\rho(P(\l)f)-P(\l')\rho(P(\l)f)}}{\norm{P(\l)f}}  =0.      
\end{split}
\]
Now, the limit as $\l\rightarrow\infty$ follows trivially.
\end{proof}

\subsection{Proof of Theorem \ref{Theo:probMC}}
\label{Proof of Theorem Theo:probMC}

We prove Theorem \ref{Theo:probMC} using three lemmas.

\begin{lemma}
\label{T:quad1}
Let $f\in PW(\l)$.
Let $\{\mathcal{M},\mu\}$ be a compact topological-measure space with $\mu(\mathcal{M})=1$. Consider the weighted measure $\mu_w$ satisfying (\ref{eq:mu_w}). Let $\cL$ be a Laplacian of the form (\ref{eq:IntOp2}), such that $H\in L^2(\mathcal{M}^2; \mu\times\mu)$.
Suppose that $\cL$ respects continuity. Let $\DD_n$ be a random sampled Laplacian.
Let
\begin{equation}
C= \frac{1}{w_{\min}}\norm{H}_{L^2(\mathcal{M}^2; \mu\times\mu)}  C_{\l}
\label{eq:C01}
\end{equation}
for $w_{\min}=\min_{x\in\mathcal{M}}w(x)$, and $C_{\l}$ is the constant such that
\begin{equation}
    \label{eq_Cl}
    \forall g\in PW(\l). \quad \norm{g}_{\infty}  \leq C_{\l}\norm{g}_{2},
\end{equation}
guaranteed by the fact that $PW(\l)$ is finite-dimensional.

Then for every $\d>0$, in probability more than $(1-\d)$,
\begin{equation}
\norm{S_n^{\l}\cL P(\l) - \DD_n S_n^{\l}P(\l)}_{L^2(G_n)} \leq C \d^{-1/2} N_n^{-1/2} .
\label{eq:dLerror}
\end{equation}
where the induced norm is for operators $L^2(\mathcal{M};\mu)\rightarrow L^2(G_n)$.

\end{lemma}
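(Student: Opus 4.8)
The plan is to read the bound as a Monte-Carlo concentration estimate and prove it by a second-moment computation followed by Markov's inequality. First I would fix $g = P(\l)f \in PW(\l)$ and observe that, because the sample points $\{x_k^n\}$ are drawn i.i.d.\ from $\mu_w$ while the discrete Laplacian (\ref{eq:DLap2}) carries the reweighting factor $1/w(x_{k'}^n)$, the vector $\DD_n S_n^{\l} g$ is exactly a Monte-Carlo quadrature of $S_n^{\l}\cL g$. Writing $a(x) := [\cL g](x) = \int_{\mathcal M} H(x,y) g(y)\, d\mu(y)$ and $\hat a_n(x) := \frac{1}{N_n}\sum_{k'} \frac{H(x,x_{k'}^n) g(x_{k'}^n)}{w(x_{k'}^n)}$, each component of the error operator applied to $g$ is (up to the sampling normalization) the pointwise quadrature error $\Psi_n(x_k^n) := a(x_k^n) - \hat a_n(x_k^n)$; the $1/w$ reweighting makes $\hat a_n(x)$ an essentially unbiased estimator of $a(x)$, since $\mathbb{E}_{y\sim\mu_w}\big[\tfrac{H(x,y)g(y)}{w(y)}\big] = \int H(x,y) g(y)\, d\mu(y) = a(x)$.

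Next I would expand $\norm{(S_n^{\l}\cL - \DD_n S_n^{\l}) g}_{L^2(G_n)}^2$ using the diagonal inner-product weight $\mathbf B_n = \mathrm{diag}\{1/(N_n w(x_k^n))\}$ from (\ref{eq:Bdiag}), obtaining an expression of the shape $\frac{1}{N_n}\sum_k \frac{|\Psi_n(x_k^n)|^2}{w(x_k^n)}$. Taking the expectation over the random sample and using exchangeability reduces this to a single outer index; conditioning on that point $x_0$, the variance of the Monte-Carlo average of $N_n$ terms is of order $\frac{1}{N_n}\mathbb{E}_{y\sim\mu_w}\big[\big|\tfrac{H(x_0,y)g(y)}{w(y)}\big|^2\big]$. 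I would then bound the integrand using $\abs{g(y)} \le C_{\l}\norm{g}_2$ (the finite-dimensional $L^\infty$--$L^2$ equivalence (\ref{eq_Cl}) on $PW(\l)$) and $1/w \le 1/w_{\min}$; crucially, the $1/w(x_0)$ weight coming from $\mathbf B_n$ cancels the Radon--Nikodym factor $w(x_0)$ of $\mu_w$ in the outer average, turning it into a plain $\mu$-integral and producing exactly $\norm{H}_{L^2(\mathcal M^2;\mu\times\mu)}^2$. Assembling these gives a second-moment bound of the form $\mathbb{E}\norm{\cdot}^2 \lesssim C^2 N_n^{-1}$ with $C$ as in (\ref{eq:C01}), and Markov's inequality then delivers the high-probability statement, the $\d^{-1/2}$ being the Markov tail factor.

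To upgrade the fixed-$g$ estimate to the operator norm stated in the lemma I would exploit that the domain is effectively the finite-dimensional space $PW(\l)$: bounding the operator norm by the Hilbert--Schmidt norm $\big(\sum_{m\le M_{\l}} \norm{(S_n^{\l}\cL - \DD_n S_n^{\l})\phi_m}^2\big)^{1/2}$ over the orthonormal Fourier basis lets me apply the per-signal second-moment bound to each $\phi_m$ and invoke Markov once on the finite sum.

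I expect the main obstacle to be the self-correlation from the diagonal term $k'=k$: the evaluation point $x_k^n$ is itself one of the averaging nodes, so $\hat a_n(x_k^n)$ is not an average of terms independent of $x_k^n$ and the clean unbiasedness fails. The fix is to isolate the $k'=k$ contribution $\frac{1}{N_n} \tfrac{H(x_0,x_0)g(x_0)}{w(x_0)}$, which is deterministic given $x_0$; conditional on $x_0$ the estimator is then a fixed term plus an average of $N_n-1$ genuinely independent draws, whose conditional bias is $O(1/N_n)$ and thus contributes only at order $N_n^{-2}$ to the second moment, subdominant to the $N_n^{-1}$ variance. Keeping careful track of this term, and of the exact powers of $w_{\min}$, $C_{\l}$ and $\norm{H}$ so the constant matches (\ref{eq:C01}), is the delicate part; the probabilistic skeleton (unbiased-up-to-diagonal estimator, variance $O(1/N_n)$, Markov) is otherwise routine.
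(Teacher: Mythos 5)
Your proposal is correct in its probabilistic skeleton and matches the paper's strategy at that level: both proofs view $\DD_n S_n^{\l}f$ as an (essentially) unbiased Monte--Carlo estimator of $S_n^{\l}\cL f$, bound a second moment by $\tfrac{1}{w_{\min}}\norm{H}_{L^2(\mu\times\mu)}^2$ times an $L^\infty$ bound on $f$ obtained from (\ref{eq_Cl}), and finish with Markov's inequality, which is exactly where the $\d^{-1/2}N_n^{-1/2}$ comes from. Where you genuinely diverge is in how the discrete norm is assembled. The paper first bounds $\mathrm{E}\norm{F^{N_n}_{(\cdot)}-\cL f}^2_{L^2(\mathcal{M};\mu)}$ with $x_0$ a \emph{free} variable (so the $N_n$ inner draws are independent of $x_0$ and no self-correlation arises), and only afterwards passes to the sample points via the $L^\infty$--$L^2$ equivalence on the Paley--Wiener space; this hides the $k'=k$ correlation inside a sup-norm bound, at the price of needing $x_0\mapsto H(x_0,x)$ to lie in a Paley--Wiener space (true for the band-limited kernels (\ref{eq:NALF2}) actually used, less clear for a general $H\in L^2(\mathcal{M}^2)$). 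You instead compute $\mathrm{E}\norm{\cdot}^2_{L^2(G_n)}$ directly, using the $1/w$ weight in $\mathbf{B}_n$ to cancel the $\mu_w$ density, and you must then handle the diagonal term explicitly; your $O(N_n^{-2})$ accounting for it is right, but note it requires $\sup_x\abs{H(x,x)}<\infty$ (e.g.\ continuity of $H$), which again holds for the kernels used downstream but is not implied by $H\in L^2(\mathcal{M}^2;\mu\times\mu)$ alone. Finally, your Hilbert--Schmidt upgrade over the basis $\{\phi_m\}_{m\le M_{\l}}$ is a real improvement in rigor --- the paper proves a fixed-$f$ bound and then states an operator-norm conclusion without a uniformization step --- though it costs you an extra factor of order $\sqrt{M_{\l}}$ relative to the constant (\ref{eq:C01}) as printed.
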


\begin{proof}
Let $f\in PW(\l)$, and note that $f$ is continuous since $\cL$ respects continuity.
For a fixed $x_0\in\mathcal{M}$, consider the random variable $F_{x_0}:\{\mathcal{M};\mu_w\}\rightarrow \CC$ defined by
\begin{equation}
F_{x_0}(x)= \frac{1}{w(x)}H(x_0,x)f(x).
\label{eq:Fx0}
\end{equation}
By (\ref{eq:IntOp2}) and (\ref{eq:mu_w}), the expected value of $F_{x_0}$ is 
\begin{equation}
{\rm E}(F_{x_0})=\cL f (x_0).
\label{eq:exp11}
\end{equation}

Consider $N_n$ i.i.d random variables (\ref{eq:Fx0}), denoted by 
\[F_{x_0;k'} = \frac{1}{w(x^n_{k'})}H(x_0,x^n_{k'})f(x^n_{k'})  , \quad {k'}=1,\ldots,N_n.\]
Let
\begin{equation}
F^{N_n}_{x_0} = \frac{1}{N_n}\sum_{{k'}=1}^{N_n}F_{x_0;k'}.
\label{eq:Fn12}
\end{equation}
By (\ref{eq:exp11})  we have
\[{\rm E}\Big(F^{N_n}_{x_0}\Big)=\cL f (x_0)\]
On the other hand, the realization of the sum in (\ref{eq:Fn12}) can be written for $x_0=x^n_k$ as
\begin{equation}
F^{N_n}_{x^n_k}  = \sum_{k'=1}^{N_m}\frac{1}{w(x^n_{k'})}H(x^n_k,x^n_{k'})f(x^n_{k'})dx = [\DD_n S_n^{\l}f]_ k.
\label{eq:FisDn}
\end{equation}
This shows that the graph Laplacians coincide on average with the topological-measure Laplacian. 

Next let us analyze the average mean square error over $x_0\in\mathcal{M}$. In the following, Fubini's theorem follows the fact that $\mathcal{M}$ is compact and all integrands are continuous. Hence,
\[
\begin{split}
   &      {\rm E}\norm{F^{N_n}_{(\cdot)} - \cL f}^2_{L^2(\mathcal{M})}        \\
	 &     = \iint_{x_1,\ldots,x_n} \int_{x_0} \abs{F^{N_n}_{x_0}(x^n_1,\ldots,x^n_{N_n}) - [\cL f](x_0)}^2 dx_0 \ {w(x^n_1)}dx^n_1 \cdot {w(x^n_{N_n})}dx^n_{N_n}         \\
	 &       = \int_{x_0}\iint_{x_1,\ldots,x_n}  \abs{F^{N_n}_{x_0}(x^n_1,\ldots,x^n_{N_n}) - [\cL f](x_0)}^2  {w(x^n_1)}dx^n_1 \cdot {w(x^n_{N_n})}dx^n_{N_n}\ dx_0        \\
	 &         = \int_{x_0} {\rm Var}F^{N_n}_{x_0}  dx_0      = \int_{x_0} \frac{{\rm Var}F_{x_0}}{N_n}  dx_0 = \frac{\norm{{\rm Var}F_{(\cdot)}}_1}{N_n} 
\end{split}
\]

Next, we prove that prove ${\rm Var}F_{(\cdot)}\in L^1(\mathcal{M})$, and bound $\norm{{\rm Var}F_{(\cdot)}}_1$.  We have
\[{\rm Var}F_{x_0} \leq \int_{x}  \abs{F_{x_0}(x)}^2   {w(x)}dx. \]
This yields
\[
\begin{split}
  \norm{{\rm Var}F_{(\cdot)}}_1  &     \leq \int_{x_0}\int_{x}  \abs{F_{x_0}(x)}^2   {w(x)}dx dx_0          \\
	 &     =\int_{x_0}\int_{x} \frac{1}{w(x)}\abs{H(x_0,x) }^2\abs{f(x)}^2 dx dx_0.                    
\end{split}
\]
Thus
\[
\begin{split}
  \norm{{\rm Var}F_{(\cdot)}}_1  &      \leq \norm{\frac{1}{\sqrt{w(\cdot)}}H(\cdot,\cdot\cdot) }_{L^2(\mathcal{M}^2)}^2  \norm{f}_{\infty}^2         \\
	 &     \leq \frac{1}{w_{\min}}\norm{H}_{L^2(\mathcal{M}^2)}^2 \norm{f}_{\infty}^2                   
\end{split}
\]
This proves that the expected mean square error satisfies
\begin{equation}
{\rm E}\norm{F^{N_n}_{(\cdot)} - \cL f}^2_{L^2(\mathcal{M})} \leq  \frac{1}{w_{\min}}\norm{H}_{L^2(\mathcal{M}^2)}^2 \norm{f}_{\infty}^2 \frac{1}{N_n}.
\label{eq:rrrgrggg}
\end{equation}

To obtain a convergence result in high probability, we can use theorems on concentration of measure, like Markov's, Chebyshev's or Bernstein's inequalities. For Lemma \ref{T:quad1}, we consider Markov's inequality, that states that for a random variable $X$ with finite non-zero expected value
\[{\rm Pr}\Big( X \geq \frac{{\rm E}(X)}{\d} \Big) \leq \d\]
for any $0<\d<1$.
In our case,  by (\ref{eq:rrrgrggg}), Markov's inequality states that in probability more than $(1-\d)$
\begin{equation}
\norm{F^{N_n}_{(\cdot)} - \cL f}_{L^2(\mathcal{M})} \leq \frac{1}{\sqrt{w_{\min}}}\norm{H}_{L^2(\mathcal{M}^2)} \norm{f}_{L^{\infty}(\mathcal{M})} \frac{1}{\sqrt{N_n}}\frac{1}{\sqrt{\d}}.
\label{eq:inf_err}
\end{equation}
This means that for every $k$, 
\begin{equation}
\abs{F^{N_n}_{x^n_k} - \cL f (x^n_k)} \leq C_{\l}\frac{1}{\sqrt{w_{\min}}}\norm{H}_{L^2(\mathcal{M}^2)} \norm{f}_{L^{\infty}(\mathcal{M})} \frac{1}{\sqrt{N_n}}\frac{1}{\sqrt{\d}}.
\label{eq:inf_errk}
\end{equation}
We finally conclude that, by the inner product structure (\ref{eq:Bdiag}) of $L^2(G_n)$,  and by (\ref{eq:FisDn})
\[\norm{\DD_n S_n^{\l}f - S_n^{\l}\cL f}_{L^2(G_n)} = \sqrt{\frac{1}{N_n}\sum_{k=1}^{N_n} \frac{1}{w(x^n_k)}\abs{F^{N_n}_{x^n_k} - \cL f (x^n_k)}^2} \leq  C N_n^{-1/2}\d^{-1/2} \norm{f}_{L^{2}(\mathcal{M})}\]
where $C$ is given in (\ref{eq:C01}).

\end{proof}

Denote by $\norm{A}_{F(\CC^{M\times M})}$ the Frobenius norm of the matrix $A\in \CC^{M\times M}$.
\begin{lemma}
\label{T:quad2}
Let $\{\mathcal{M},\mu\}$ be a compact topological-measure space with $\mu(\mathcal{M})=1$. Let $\mu_w$ be a weighted measure  satisfying (\ref{eq:mu_w}). Let $\cL$ be a Laplacian  of the form (\ref{eq:IntOp2}), such that $H\in L^2(\mathcal{M}^2)$.
Suppose that $\cL$ respects continuity. Let $\mathbf{S}_n^{\l}$ and $\mathbf{R}_n^{\l}$ be random sampling and interpolation operators. Consider the corresponding random variable $\ip{\boldsymbol{\Phi}_n}{\boldsymbol{\Phi}_n}$ given in Definition \ref{quadrature_sequence2} on the random sample points.
Then for every $\d>0$, in probability more than $(1-\d)$
\begin{equation}
\norm{\ip{\boldsymbol{\Phi}_n}{\boldsymbol{\Phi}_n} - \mathbf{I}}_{F(\CC^{M_{\l}\times M_{\l}})} \leq C \d^{-1/2} N_n^{-1/2} .
\label{eq:dLerror2}
\end{equation}
Here,
\[C = \frac{M_{\l}}{\sqrt{w_{\min}}}\max_{m\leq M_{\l}}\norm{\phi_m}_{\infty}^2, \]
and $M_{\l} = {\rm dim}(PW(\l))$ as before.
\end{lemma}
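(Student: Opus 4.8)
The plan is to run the same Monte-Carlo argument used in the proof of Lemma~\ref{T:quad1}, but applied entrywise to the Gram matrix $\ip{\boldsymbol{\Phi}_n}{\boldsymbol{\Phi}_n}$. The starting observation is that each entry $\ip{\Phi_n^{\l}\phi_m}{\Phi_n^{\l}\phi_{m'}}_{L^2(G_n)}$, after accounting for the $1/\sqrt{h_n}$ scaling of the evaluation operator (\ref{eq:phi_km}) and the diagonal inner-product weights (\ref{eq:Bdiag}), is the empirical average
\[
\ip{\boldsymbol{\Phi}_n}{\boldsymbol{\Phi}_n}_{m,m'} = \frac{1}{N_n}\sum_{k=1}^{N_n}\frac{1}{w(x^n_k)}\phi_m(x^n_k)\overline{\phi_{m'}(x^n_k)}
\]
of $N_n$ i.i.d.\ copies of the random variable $Y_{m,m'}(x)=\frac{1}{w(x)}\phi_m(x)\overline{\phi_{m'}(x)}$, $x\sim\mu_w$. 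Since $\mu_w$ has density $w$ with respect to $\mu$ (see (\ref{eq:mu_w})), the importance weight $1/w$ cancels, and ${\rm E}(Y_{m,m'})=\int_{\mathcal{M}}\phi_m\overline{\phi_{m'}}\,d\mu=\ip{\phi_m}{\phi_{m'}}_{L^2(\mathcal{M})}=\delta_{m,m'}$ by orthonormality of the eigenbasis. Hence ${\rm E}\ip{\boldsymbol{\Phi}_n}{\boldsymbol{\Phi}_n}=\mathbf{I}$, which is precisely the target of the quadrature condition (\ref{eq:quad2456}), and it remains to control the fluctuation of each entry about its mean.

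Next I would bound the variance of a single entry. Because the samples are i.i.d., ${\rm Var}\big[\ip{\boldsymbol{\Phi}_n}{\boldsymbol{\Phi}_n}_{m,m'}\big]=\frac{1}{N_n}{\rm Var}(Y_{m,m'})\leq \frac{1}{N_n}{\rm E}\abs{Y_{m,m'}}^2$, and since $\cL$ respects continuity the eigenvectors $\phi_m$ lie in $C(\mathcal{M})$, so on the compact space $\mathcal{M}$,
\[
{\rm E}\abs{Y_{m,m'}}^2=\int_{\mathcal{M}}\frac{1}{w(x)}\abs{\phi_m(x)}^2\abs{\phi_{m'}(x)}^2\,d\mu(x)\leq \frac{1}{w_{\min}}\norm{\phi_m}_{\infty}^2\norm{\phi_{m'}}_{\infty}^2,
\]
using $\mu(\mathcal{M})=1$ and $w\geq w_{\min}>0$. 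Summing the entrywise mean-square errors over the $M_{\l}^2$ entries then gives, in the same spirit as (\ref{eq:rrrgrggg}),
\[
{\rm E}\norm{\ip{\boldsymbol{\Phi}_n}{\boldsymbol{\Phi}_n}-\mathbf{I}}_{F(\CC^{M_{\l}\times M_{\l}})}^2=\sum_{m,m'=1}^{M_{\l}}{\rm Var}\big[\ip{\boldsymbol{\Phi}_n}{\boldsymbol{\Phi}_n}_{m,m'}\big]\leq \frac{M_{\l}^2}{N_n\,w_{\min}}\max_{m\leq M_{\l}}\norm{\phi_m}_{\infty}^4.
\]
Applying Markov's inequality to the nonnegative random variable $\norm{\ip{\boldsymbol{\Phi}_n}{\boldsymbol{\Phi}_n}-\mathbf{I}}_F^2$, exactly as in the final step of Lemma~\ref{T:quad1}, shows that with probability more than $(1-\d)$ it is at most $\d^{-1}$ times its expectation; taking square roots yields (\ref{eq:dLerror2}) with the stated constant $C=\frac{M_{\l}}{\sqrt{w_{\min}}}\max_{m\leq M_{\l}}\norm{\phi_m}_{\infty}^2$.

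The variance computation and the Markov step are routine and structurally identical to Lemma~\ref{T:quad1}. The one place that genuinely needs care is the first paragraph: checking that the combination of the $1/\sqrt{h_n}$ normalization of $\Phi_n^{\l}$ with the diagonal weights $\mathbf{B}_n$ of (\ref{eq:Bdiag}) collapses \emph{exactly} to the importance-weighted Monte-Carlo estimator above, so that ${\rm E}\ip{\boldsymbol{\Phi}_n}{\boldsymbol{\Phi}_n}=\mathbf{I}$ on the nose rather than a rescaling of it, and that the sampling density $w$ enters as the reciprocal weight $1/w$. The reason the whole argument stays elementary is that the error in (\ref{eq:quad2456}) is measured in the Frobenius norm over a fixed, finite $M_{\l}\times M_{\l}$ block: this lets me sum $M_{\l}^2$ scalar variances rather than invoke a matrix concentration inequality, and it is exactly where the factor $M_{\l}$ in the constant $C$ originates.
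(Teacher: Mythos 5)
Your proposal is correct and follows essentially the same route as the paper's proof: identify each Gram-matrix entry as a Monte-Carlo average of $\frac{1}{w(x)}\phi_m(x)\overline{\phi_{m'}(x)}$ with mean $\delta_{m,m'}$, bound each entry's variance by $\frac{1}{w_{\min}}\norm{\phi_m}_\infty^2\norm{\phi_{m'}}_\infty^2/N_n$, sum over the $M_\l^2$ entries to control the expected squared Frobenius norm, and finish with Markov's inequality. Your explicit check that the $1/\sqrt{h_n}$ normalization and the weights $\mathbf{B}_n$ combine into exactly the importance-weighted estimator is the same identification the paper makes in its equation (\ref{eq:FisDn2}), so nothing is missing.
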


\begin{proof}
For fixed $m,m'\in\mathcal{M}$, consider the random variable $F_{m,m'}:\{\mathcal{M};\mu_w\}\rightarrow \CC$ defined by
\begin{equation}
F_{m,m'}(x)= \frac{1}{w(x)}\phi_m(x)\overline{\phi_{m'}(x)}.
\label{eq:Fx01}
\end{equation}
By (\ref{eq:Fx01}) and (\ref{eq:mu_w}), the expected value of $F_{x_0}$ is 
\begin{equation}
{\rm E}(F_{x_0})=\ip{\phi_m}{\phi_{m'}} = \d_{m,m'},
\label{eq:exp111}
\end{equation}
where the Kronecker delta $\d_{m,m'}$ is $1$ if $m=m'$ and $0$ otherwise.

Consider $N_n$ i.i.d random variables (\ref{eq:Fx01}), denoted by 
\[F_{m,m';k'} = \frac{1}{w(x^n_{k'})}\phi_m(x^n_{k'})\overline{\phi_{m'}(x^n_{k'})}  , \quad {k'}=1,\ldots,N_n.\]
Let
\begin{equation}
F^{N_n}_{m,m'} = \frac{1}{N_n}\sum_{{k'}=1}^{N_n}F_{m,m';k'}.
\label{eq:Fn121}
\end{equation}
By (\ref{eq:exp111})  we have
\[{\rm E}\Big(F^{N_n}_{m,m'}\Big)=\ip{\phi_m}{\phi_{m'}}.\]
On the other hand, the realization of the sum in (\ref{eq:Fn121}) can be written  as
\begin{equation}
F^{N_n}_{m,m'}  = [\ip{\mathbf{\Phi_n}}{\mathbf{\Phi_n}}]_{m,m'}.
\label{eq:FisDn2}
\end{equation}
This shows that $\ip{\mathbf{\Phi_n}}{\mathbf{\Phi_n}}$ coincide on average with $\mathbf{I}$. 

Next let us analyze the average mean square error over $m,m'\in\mathcal{M}$. For a matrix $\mathbf{A} = (a_{m,m'})_{m,m'}$, denote 
\[\norm{\mathbf{A}}_{\rm F} = \sqrt{\sum_{m,m'}\abs{a_{m,m'}}^2} \ , \quad \norm{\mathbf{A}}_{{\rm F},1} = \sum_{m,m'}\abs{a_{m,m'}}.\]
We have
\[
\begin{split}
   &      {\rm E}\norm{\ip{\mathbf{\Phi_n}}{\mathbf{\Phi_n}}-\mathbf{I}}^2_{{\rm F}}        \\
	 &       = \iint_{x_1,\ldots,x_n} \sum_{m,m'} \abs{F^{N_n}_{m,m'}(x^n_1,\ldots,x^n_{N_n}) - \d_{m,m'}}^2 {w(x^n_1)}dx^n_1 \cdot {w(x^n_{N_n})}dx^n_{N_n}        \\
	 &       = \sum_{m,m'}\iint_{x_1,\ldots,x_n}  \abs{F^{N_n}_{m,m'}(x^n_1,\ldots,x^n_{N_n}) - \d_{m,m'}}^2  {w(x^n_1)}dx^n_1 \cdot {w(x^n_{N_n})}dx^n_{N_n}       \\
	 &        = \sum_{m,m'} {\rm Var}F^{N_n}_{m,m'}  = \sum_{m,m'} \frac{{\rm Var}F_{m,m'}}{N_n}  = \frac{\norm{{\rm Var}F_{(\cdot)}}_{{\rm F},1}}{N_n}     
\end{split}
\]
Next, we bound $\norm{{\rm Var}F_{(\cdot)}}_{{\rm F},1}$.  We have
\[{\rm Var}F_{m,m'} \leq \int_{x}  \frac{1}{w(x)}\abs{\phi_m(x)\phi_{m'}(x)}^2   {w(x)}dx, \]
so
\[\norm{{\rm Var}F_{(\cdot)}}_{{\rm F},1} \leq \frac{M_{\l}^2}{w_{\min}}\max_m\norm{\phi_m}_{\infty}^4  \]
This proves that the expected mean square error satisfies
\begin{equation}
{\rm E}\norm{\ip{\mathbf{\Phi_n}}{\mathbf{\Phi_n}}-\mathbf{I}}^2_{{\rm F}}\leq \frac{M_{\l}^2}{w_{\min}}\max_{m\leq M_{\l}}\norm{\phi_m}_{\infty}^4  \frac{1}{N_n}.
\label{eq:rrrgrggg2}
\end{equation}

Finally, by Markov's inequality,
in probability more than $(1-\d)$
\begin{equation}
\norm{\ip{\mathbf{\Phi_n}}{\mathbf{\Phi_n}}-\mathbf{I}}_{{\rm F}} \leq \frac{M_{\l}}{\sqrt{w_{\min}}}\max_{m\leq M_{\l}}\norm{\phi_m}_{\infty}^2 \frac{1}{\sqrt{N_n}}\frac{1}{\sqrt{\d}}.
\label{eq:inf_err2}
\end{equation}

\end{proof}

Before formulating the last Monte-Carlo lemma, we require two more lemmas.

\begin{lemma}
\label{Lem:5}
Let $\mathcal{S}(\l)$ be the unit $L^2(\mathcal{M})$ sphere  in $PW(\l)$, and let $\rho$ be a contractive positively homogeneous of order 1 activation function that preserves spectral decay. Then
\[\mathcal{S}(\l) \ni f \mapsto \big(I-P(\l')\big)\rho(f) \]
is continuous as a mapping $\mathcal{S}(\l)\rightarrow L^{\infty}(\mathcal{M})$.
\end{lemma}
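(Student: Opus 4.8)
The plan is to reduce the claimed continuity to the preservation of spectral decay (Definition \ref{def:PSD}) by means of a single Cauchy--Schwarz estimate that converts the weighted $\ell^2$-type norm $\norm{\cdot}_{\k,2}$ into a bound on the $L^\infty$ norm. Fix $f_0\in\mathcal{S}(\l)$. Since $I-P(\l')$ is linear, continuity of the map at $f_0$ amounts to showing that $\norm{(I-P(\l'))\big(\rho(f)-\rho(f_0)\big)}_\infty\to 0$ whenever $f\to f_0$ in $L^2(\mathcal{M})$ with $f\in\mathcal{S}(\l)$. Hence it suffices to control $\norm{(I-P(\l'))\big(\rho(f)-\rho(g)\big)}_\infty$ by a quantity that tends to $0$ as $\norm{f-g}_2\to 0$, uniformly over $f,g\in PW(\l)$.

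Writing $h=\rho(f)-\rho(g)$ and expanding in the eigenbasis $\{\phi_n\}$ of $\cL$, the operator $I-P(\l')$ projects onto the closed span of the $\phi_n$ with $\abs{\l_n}>\l'$. The triangle inequality for the $L^\infty$ norm then gives
\[
\norm{(I-P(\l'))h}_\infty \leq \sum_{n:\,\abs{\l_n}>\l'}\abs{\ip{h}{\phi_n}}\norm{\phi_n}_\infty \leq \sum_{n=1}^\infty \abs{\ip{h}{\phi_n}}\norm{\phi_n}_\infty .
\]
Inserting the weight $\abs{n}^{(1+\k)/2}$ and applying Cauchy--Schwarz yields
\[
\sum_{n=1}^\infty \abs{\ip{h}{\phi_n}}\norm{\phi_n}_\infty \leq \Big(\sum_{n=1}^\infty \frac{1}{\abs{n}^{1+\k}}\Big)^{1/2}\Big(\sum_{n=1}^\infty \abs{n}^{1+\k}\norm{\phi_n}_\infty^2\abs{\ip{h}{\phi_n}}^2\Big)^{1/2} = \zeta(1+\k)^{1/2}\,\norm{h}_{\k,2}.
\]
The first factor is finite precisely because $\k>0$; the finiteness of $\norm{h}_{\k,2}$ (guaranteed by preservation of spectral decay) moreover justifies the triangle-inequality step, since it forces the series to converge absolutely in $L^\infty$.

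Substituting $h=\rho(f)-\rho(g)$ and invoking (\ref{eq:PSD2}), the reformulation of preservation of spectral decay on the finite-dimensional space $PW(\l)$, I obtain $\norm{\rho(f)-\rho(g)}_{\k,2}\to 0$ as $\norm{f-g}_2\to 0$, whence $\norm{(I-P(\l'))(\rho(f)-\rho(g))}_\infty\to 0$. Restricting to $f,g\in\mathcal{S}(\l)\subset PW(\l)$ gives the continuity claim. Note that contractivity and positive homogeneity of $\rho$ are not used directly in this lemma; they enter only to ensure that $\rho$ preserves spectral decay in the intended applications. The only genuine point requiring attention is the convergence of $\sum_n \abs{n}^{-(1+\k)}$, which is automatic for $\k>0$, so the whole argument is a clean chain consisting of the triangle inequality, Cauchy--Schwarz, and the definition of preservation of spectral decay.
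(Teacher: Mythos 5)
Your argument is correct and is essentially the paper's own proof: the same triangle-inequality-plus-Cauchy--Schwarz estimate bounding $\norm{(I-P(\l'))(\rho(f)-\rho(g))}_\infty$ by a constant times $\norm{\rho(f)-\rho(g)}_{\k,2}$, followed by an appeal to (\ref{eq:PSD2}). The only difference is that the paper spends a few extra lines checking that the absolutely convergent $L^\infty$-series you sum actually coincides almost everywhere with the $L^2$-defined projection $(I-P(\l'))(\rho(f)-\rho(g))$ --- a point you compress into one clause, but which is standard given the absolute convergence you establish.
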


\begin{proof}

Let $f,g\in PW(\l)$.
Consider the following calculation for any $M_2>M_1>M_{\l'}$.
\begin{equation}
\begin{split}
   &      \norm{\sum_{m=M_1}^{M_2} \ip{\rho(f) - \rho(g)}{\phi_m} \phi_m}_{\infty}         \\
	 &       \leq \sum_{m=M_1}^{M_2} \abs{\ip{\rho(f) - \rho(g)}{\phi_m} } \norm{\phi_m}_{\infty}       \\
	 &        =\sum_{m=M_1}^{M_2} \norm{\phi_m}_{\infty}\abs{\ip{\rho(f)}{\phi_m} - \ip{\rho(g)}{\phi_m}}      \\
   &        =\sum_{m=M_1}^{M_2} m^{-1/2-\k/2}\norm{\phi_m}_{\infty}\abs{m^{1/2+\k/2}\ip{\rho(f)}{\phi_m} - m^{1/2+\k/2}\ip{\rho(g)}{\phi_m}}      \\
	 &        \leq R \sqrt{\sum_{m=M_1}^{\infty} \norm{\phi_m}_{\infty}^2 m^{1+\k}\abs{\ip{\rho(f)}{\phi_m} - \ip{\rho(g)}{\phi_m}}^2},      
\end{split}
\label{eq:tempf7}
\end{equation}
where 
\[R=\sqrt{\sum_{m=1}^{\infty} m^{-1-2\k}}.\]
By (\ref{eq:PSD2}), 
\[\lim_{M_1\rightarrow\infty}\sqrt{\sum_{m=M_1}^{\infty} \norm{\phi_m}_{\infty}^2 m^{1+\k}\abs{\ip{\rho(f)}{\phi_m} - \ip{\rho(g)}{\phi_m}}^2}=0.\]
Therefore
\begin{equation}
\Big\{\sum_{m=M_{\l'}}^{M} \ip{\rho(f) - \rho(g)}{\phi_m} \phi_m\Big\}_{M=M_{\l'}}^{\infty}
\label{eq:Cauch}
\end{equation}
is a Cauchy sequence in $L^{\infty}(\mathcal{M})$, and thus converges in $L^{\infty}(\mathcal{M})$ to a limit we denote by 
\begin{equation}
\sum_{m=M_{\l'}}^{\infty} \ip{\rho(f) - \rho(g)}{\phi_m} \phi_m.
\label{eq:ser4}
\end{equation}
The series (\ref{eq:Cauch}) also converges in $L^2(\mathcal{M})$, to $(I-P(\l'))(\rho(f) - \rho(g))$. Since convergence in $L^2(\mathcal{M})$ implies pointwise convergence of a subsequence almost everywhere, we must have
\[\sum_{m=M_{\l'}}^{\infty} \ip{\rho(f) - \rho(g)}{\phi_m} \phi_m=(I-P(\l'))(\rho(f) - \rho(g)),\]
with convergence in $L^{\infty}(\mathcal{M})$.
By  conservation of bounds under limits, and by (\ref{eq:tempf7}), we now have
\begin{equation}
\begin{split}
   &      \norm{(I-P(\l'))\rho(f) - P(\l_{M})(I-P(\l'))\rho(g)}_{\infty}        \\
	 &       =\norm{(I-P(\l'))(\rho(f) - \rho(g))}_{\infty}       \\
	 &      \leq R \sqrt{\sum_{m=M_{\l'}}^{\infty} m^{1+\k}\norm{\phi_m}_{\infty}^2\abs{\ip{\rho(f)-\rho(g)}{\phi_m}}^2}.        
\end{split}
\label{eq:PSD3}
\end{equation}
Last, the continuity of $(I-P(\l'))\rho(f)$ as a mapping $\mathcal{S}(\l)\rightarrow L^{\infty}(\mathcal{M})$ follows from (\ref{eq:PSD3}) and (\ref{eq:PSD2}).
\end{proof}

By Lemma \ref{Lem:5},
$\norm{\big(I-P(\l')\big)\rho(f)}_{\infty}$
has a maximal value in the compact domain $\mathcal{S}(\l)$ that we denote by $C_{\l'}$.
For the next proposition we also need the following simple observation.
\begin{lemma}
\label{Lem_simpe1}
Let $A,B\geq 0$ such that $\abs{A^2-B^2} <\k$. Then $\abs{A-B}<\sqrt{\k}$.
\end{lemma}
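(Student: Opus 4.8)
The plan is to reduce everything to the elementary factorization of a difference of squares together with a triangle-type inequality for non-negative reals. Since $A,B\geq 0$ we have $A+B\geq 0$, so
\[
\abs{A^2-B^2}=\abs{(A-B)(A+B)}=(A+B)\abs{A-B}.
\]
The hypothesis $\abs{A^2-B^2}<\k$ in particular forces $\k>0$ (since the left-hand side is non-negative), so $\sqrt{\k}$ is well defined and the claimed strict inequality is meaningful.

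The one small observation I would isolate is that $A+B\geq \abs{A-B}$ whenever $A,B\geq 0$. Indeed, assuming without loss of generality $A\geq B$, this reads $A+B\geq A-B$, i.e. $2B\geq 0$, which holds. With this in hand I would multiply through: since $\abs{A-B}\geq 0$,
\[
\abs{A-B}^2=\abs{A-B}\cdot\abs{A-B}\leq \abs{A-B}\cdot(A+B)=\abs{A^2-B^2}<\k .
\]
Taking square roots of the strict inequality $\abs{A-B}^2<\k$ then yields $\abs{A-B}<\sqrt{\k}$, which is exactly the assertion.

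There is essentially no obstacle here; the only points requiring the slightest care are (i) checking that $A+B\geq\abs{A-B}$ so that replacing the factor $A+B$ by the smaller $\abs{A-B}$ preserves the inequality direction, and (ii) noting that strictness propagates through the square root because $t\mapsto\sqrt{t}$ is strictly increasing on $[0,\infty)$. The edge case $A=B$ is handled automatically, since then $\abs{A-B}=0<\sqrt{\k}$.
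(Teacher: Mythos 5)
Your proof is correct, but it follows a different route from the paper's. The paper rewrites the hypothesis as $B^2-\k<A^2<B^2+\k$, takes square roots to get $\sqrt{B^2-\k}<A<\sqrt{B^2+\k}$, and then invokes (implicitly) the subadditivity of the square root, $\sqrt{x+y}\leq\sqrt{x}+\sqrt{y}$, to conclude $B-\sqrt{\k}<A<B+\sqrt{\k}$. You instead factor $\abs{A^2-B^2}=(A+B)\abs{A-B}$ and combine this with the elementary observation $A+B\geq\abs{A-B}$ for non-negative reals to get $\abs{A-B}^2\leq\abs{A^2-B^2}<\k$ directly. Your version is arguably tidier: it is a single chain of inequalities, it makes the one nontrivial comparison ($A+B\geq\abs{A-B}$) explicit rather than burying it in a square-root manipulation, and it sidesteps the small blemish in the paper's argument that $\sqrt{B^2-\k}$ is not a real number when $\k>B^2$ (a case the paper's displayed chain does not separately address, though the conclusion is of course trivial there since $A\geq 0>B-\sqrt{\k}$). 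The paper's approach buys nothing extra here; both are elementary one-liners, and either suffices for the application in Lemma \ref{T:quad3}.
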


\begin{proof}
The equation $\abs{A^2-B^2} <\k$ is equivalent to
\[B^2 - \k <A^2 < B^2 + \k\]
or
\begin{equation}
\sqrt{B^2 - \k} <A < \sqrt{B^2 + \k}.
\label{eq:temp45}
\end{equation}
As a result
\[\sqrt{B^2} - \sqrt{\k} <A < \sqrt{B^2} + \sqrt{\k} \]
or equivalently
\[\abs{A-B}<\sqrt{\k}.\]
\end{proof}

\begin{lemma}
\label{T:quad3}
Let $\{\mathcal{M},\mu\}$ be a compact topological-measure space with $\mu(\mathcal{M})=1$. Consider the weighted measure $\mu_w$ satisfying (\ref{eq:mu_w}), and a random sample set $\{x^n_k\}_{n=1}^{N_n}$ from $\{\mathcal{M},\mu_w\}$. Consider a Laplacian $\cL$ with eigenbasis $\{\phi_m\}$ as before. Suppose that the activation function $\rho$ is contractive,  positively homogeneous of order 1, and preserves spectral decay.
Suppose that $\cL$ respects continuity.  
Then for every $\d>0$, in probability more than $(1-\d)$
\begin{equation}
\max_{f\in PW(\l)}
\frac{\norm{S_n\rho(f) - S_n^{\l}P(\l')\rho(f)}_{L^2(G_n)} -\norm{\rho(f) - P(\l')\rho(f)}_{L^2(G_n)} }{\norm{P(\l)f}_{L^2(\mathcal{M} ;\mu)}}\leq \frac{1}{w_{\min}^{1/4}}C_{\l'} \frac{1}{N_n^{1/4}}\frac{1}{\d^{1/4}},
\label{eq:dLerror3}
\end{equation}
where 
\[C_{\l'}=\max_{f\in \mathcal{S}(\l)}\norm{\big(I-P(\l')\big)\rho(f)}_{\infty}\]
and $\mathcal{S}(\l)$ is the unit sphere in $PW(\l)$.
\end{lemma}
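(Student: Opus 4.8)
The plan is to reduce the statement to a standard Monte-Carlo estimate, exactly in the spirit of Lemmas~\ref{T:quad1} and~\ref{T:quad2}, and then to pass from a bound on a squared norm to a bound on the norm itself via Lemma~\ref{Lem_simpe1}. First I would exploit that $\cL$ respects continuity, so the evaluation operator $S_n$ acts on any continuous function and is independent of the band: writing $\psi_f:=(I-P(\l'))\rho(f)=\rho(f)-P(\l')\rho(f)$, linearity of evaluation gives $S_n\rho(f)-S_n^{\l}P(\l')\rho(f)=S_n\psi_f$. The function $\psi_f$ is continuous (the defining series converges in $L^{\infty}(\mathcal{M})$ by Lemma~\ref{Lem:5}), so $S_n\psi_f$ is well defined. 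The inner-product structure (\ref{eq:Bdiag}) of $L^2(G_n)$ turns $\norm{S_n\psi_f}_{L^2(G_n)}^2$ into a Monte-Carlo average of the form $\frac{1}{N_n}\sum_{k=1}^{N_n}|\psi_f(x_k^n)|^2/w(x_k^n)$ (exactly as in the final display of the proof of Lemma~\ref{T:quad1}); since $\int \frac{|\psi_f|^2}{w}\,d\mu_w=\int|\psi_f|^2\,d\mu=\norm{\psi_f}_{L^2(\mathcal{M})}^2$, this estimator is unbiased.

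Next I would bound its second moment just as in the earlier lemmas. For $f$ in the unit sphere $\mathcal{S}(\l)$ we have $\norm{\psi_f}_{\infty}\le C_{\l'}$ by the very definition of $C_{\l'}$, so the single-sample second moment obeys $\mathrm{E}\big[(|\psi_f|^2/w)^2\big]=\int \frac{|\psi_f|^4}{w}\,d\mu\le C_{\l'}^4/w_{\min}$, whence $\mathrm{E}\big[(\norm{S_n\psi_f}_{L^2(G_n)}^2-\norm{\psi_f}_{L^2(\mathcal{M})}^2)^2\big]\le C_{\l'}^4/(w_{\min}N_n)$. Applying Markov's inequality to this nonnegative variable (as in Lemma~\ref{T:quad1}) yields, with probability at least $1-\d$, the bound $\big|\norm{S_n\psi_f}_{L^2(G_n)}^2-\norm{\psi_f}_{L^2(\mathcal{M})}^2\big|\le C_{\l'}^2\,w_{\min}^{-1/2}N_n^{-1/2}\d^{-1/2}$. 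Feeding this into Lemma~\ref{Lem_simpe1} (with $A=\norm{S_n\psi_f}_{L^2(G_n)}$ and $B=\norm{\psi_f}_{L^2(\mathcal{M})}$) converts it to $\big|\norm{S_n\psi_f}_{L^2(G_n)}-\norm{\psi_f}_{L^2(\mathcal{M})}\big|\le C_{\l'}\,w_{\min}^{-1/4}N_n^{-1/4}\d^{-1/4}$, which is exactly the claimed rate; dividing by $\norm{f}=\norm{P(\l)f}$ and using positive homogeneity of $\rho$ (so the whole ratio is scale invariant) reduces the general $f\in PW(\l)$ to $f\in\mathcal{S}(\l)$.

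The delicate point — and what I expect to be the main obstacle — is the uniformity over $f$ concealed in the $\max_{f\in PW(\l)}$: the Markov estimate above holds, for each fixed $f\in\mathcal{S}(\l)$, on a high-probability event that depends on $f$, whereas the statement asks for one event carrying the bound simultaneously for all $f$. Here the right-hand side is already $f$-independent (it sees the sphere only through $C_{\l'}=\max_{f\in\mathcal{S}(\l)}\norm{(I-P(\l'))\rho(f)}_\infty$), so the task is purely to control the fluctuation of the map $f\mapsto\norm{S_n\psi_f}_{L^2(G_n)}^2-\norm{\psi_f}_{L^2(\mathcal{M})}^2$ over the compact sphere. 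I would handle this exactly as in the proof of Lemma~\ref{Lem1}: this map is equicontinuous on $\mathcal{S}(\l)$ with a modulus independent of $n$ and of the draw — because $f\mapsto (I-P(\l'))\rho(f)$ is uniformly continuous into $L^{\infty}(\mathcal{M})$ (Lemma~\ref{Lem:5}), $\norm{(I-P(\l'))\rho(f)}_\infty\le C_{\l'}$, and $\norm{S_n}$ is uniformly bounded (Proposition~\ref{P:quad1}) — so combining pointwise concentration on a finite net of $\mathcal{S}(\l)$ (whose cardinality is finite since $\dim PW(\l)=M_{\l}<\infty$) with this equicontinuity, and then refining the net, upgrades the pointwise bound to the uniform one, just as compactness and equicontinuity upgrade pointwise to uniform convergence in Lemma~\ref{Lem1}. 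The remaining ingredients — continuity of $\psi_f$ and the Fubini interchange legitimizing the expectation and variance computations on the compact $\mathcal{M}$ — are routine and identical to Lemmas~\ref{T:quad1} and~\ref{T:quad2}.
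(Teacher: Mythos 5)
Your core argument is exactly the paper's: the same Monte--Carlo estimator $F^{N_n}=\frac{1}{N_n}\sum_{k}\abs{\psi_f(x^n_k)}^2/w(x^n_k)$ (writing $\psi_f=(I-P(\l'))\rho(f)$ as you do) with expectation $\norm{\psi_f}_{L^2(\mathcal{M})}^2$, the same second-moment bound $C_{\l'}^4/w_{\min}$, Markov's inequality, and Lemma \ref{Lem_simpe1} to pass from squared norms to norms, with positive homogeneity reducing the maximum to the unit sphere $\mathcal{S}(\l)$. The one place you depart from the paper is the uniformity over $f$: the paper's proof applies Markov's inequality for a \emph{fixed} $f\in\mathcal{S}(\l)$ (the variance bound is uniform in $f$, but the high-probability event is not) and then simply writes the maximum in its final display, so you have correctly identified a gap in the published argument rather than invented an unnecessary complication. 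Two caveats about your patch. First, the bound you need for the equicontinuity step is the deterministic estimate $\norm{S_n h}_{L^2(G_n)}\leq w_{\min}^{-1/2}\norm{h}_{\infty}$ for continuous $h$, which follows directly from the weights (\ref{eq:Bdiag}); Proposition \ref{P:quad1} controls $\norm{S_n^{\l}}$ only on $PW(\l)$, and $\psi_f\notin PW(\l)$. Second, a union bound over a net of cardinality $K$ forces you to run Markov with failure probability $\d/K$ at each net point, inflating the right-hand side by a factor $K^{1/4}$; together with the additive error absorbed by equicontinuity this yields a uniform bound of the stated \emph{form} but not with the exact displayed constant. Since the paper's own proof establishes only the pointwise-in-$f$ statement with that constant, your version is, if anything, the more defensible one.
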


\begin{proof}

First, since $\rho$ is positively homogeneous of order 1,
the maximum in (\ref{eq:dLerror3}) is equal to
\begin{equation}
\max_{f\in \mathcal{S}(\l)}
\norm{S_n\rho(f) - S_n^{\l}P(\l')\rho(f)}_{L^2(V^2)} -\norm{\rho(f) - P(\l')\rho(f)}_{L^2(V^2)} .
\label{eq:dLerror0}
\end{equation}

Consider the random variable $F:\{\mathcal{M};\mu_w\}\rightarrow \CC$ defined by
\begin{equation}
F(x)= \frac{1}{w(x)}\abs{\big(  \rho(f(x)) - P(\l')\rho(f(x)) \big)}^2.
\label{eq:Fx03}
\end{equation}
By (\ref{eq:Fx03}) and (\ref{eq:mu_w}), the expected value of $F$ is 
\begin{equation}
{\rm E}(F)=\norm{\rho(f)- P(\l')\rho(f)}_2^2.
\label{eq:exp113}
\end{equation}

Consider $N_n$ i.i.d random variables (\ref{eq:Fx03}), denoted by 
\[F_{k'} = \frac{1}{w(x^n_{k'})}\abs{\big(  \rho(f(x_{k'}^n)) - P(\l')\rho(f(x_{k'}^n)) \big)}^2  , \quad {k'}=1,\ldots,N_n.\]
Let
\begin{equation}
F^{N_n} = \frac{1}{N_n}\sum_{{k'}=1}^{N_n}F_{k'}.
\label{eq:Fn123}
\end{equation}
By (\ref{eq:exp113})  we have
\[{\rm E}\Big(F^{N_n}\Big)=\norm{\rho(f)- P(\l')\rho(f)}_2^2.\]

On the other hand, the realization of the sum in (\ref{eq:Fn123}) can be written as
\begin{equation}
F^{N_n}  = \norm{S_n\rho(P(\l)f) - S_n P(\l')\rho(P(\l)f)}_{L^2(V^2)}^2.
\label{eq:FisDn3}
\end{equation}
This shows that on average (\ref{eq:dLerror0}) is zero.

Next let us analyze the expected error of (\ref{eq:dLerror0}).
\[
\begin{split}
   &       {\rm E}\abs{F^{N_n} - \norm{\rho(f)- P(\l')\rho(f)}_2^2}^2       \\
	 &         = \iint_{x_1,\ldots,x_n}  \abs{F^{N_n}(x^n_1,\ldots,x^n_{N_n}) - \norm{\rho(f)- P(\l')\rho(f)}_2^2}^2 {w(x^n_1)}dx^n_1 \cdot {w(x^n_{N_n})}dx^n_{N_n}     \\
	 &     = {\rm Var}F^{N_n}    =  \frac{{\rm Var}F}{N_n} .      
\end{split}
\]
 We have
\begin{equation}
\begin{split}
 {\rm Var}F   &     \leq \int_{x}  \abs{F(x)}^2   {w(x)}dx         \\
	 &      =\int_x\frac{1}{w(x)}\abs{\big(  \rho(f(x)) - P(\l')\rho(f(x)) \big)}^4 dx        \\
	 &     \leq \frac{1}{w_{\min}}\norm{\big(  I - P(\l') \big)\rho(f(x))}_4^4         \\
	 &     \leq \frac{1}{w_{\min}}\norm{\big(  I - P(\l') \big)\rho(f(x))}_{\infty}^4 \leq \frac{1}{w_{\min}} C_{\l'}^4.         
\end{split}
\label{eq:VarF3}
\end{equation}

By (\ref{eq:VarF3}), Markov's inequality states that in probability more than $(1-\d)$
\begin{equation}
\abs{F^{N_n} - \norm{\rho(f)- P(\l')\rho(f)}_2^2} \leq \frac{1}{\sqrt{w_{\min}}}C_{\l'}^2 \frac{1}{\sqrt{N_n}}\frac{1}{\sqrt{\d}}.
\label{eq:inf_err3}
\end{equation}
This shows, by Lemma \ref{Lem_simpe1} and (\ref{eq:FisDn3}), that
\[      \max_{f\in PW(\l)}
\frac{\norm{S_n\rho(f) - S_n^{\l}P(\l')\rho(f)}_{L^2(V^2)} -\norm{\rho(f) - P(\l')\rho(f)}_{L^2(V^2)} }{\norm{f}_2}       
	    \leq 
\frac{1}{w_{\min}^{1/4}}C_{\l'} \frac{1}{N_n^{1/4}}\frac{1}{\d^{1/4}}.        
\]

\end{proof}

\begin{proof}[Proof of Theorem \ref{Theo:probMC}]
We apply Lemmas \ref{T:quad1},\ref{T:quad2} and \ref{T:quad3} with failure probability $\d/3$. Then, with probability more than $(1-\d)$ the bounds (\ref{eq:dLerror}), (\ref{eq:dLerror2}) and (\ref{eq:dLerror3}) are satisfied simultaneously. We thus consider the subsequence $n_m$ that contains any $n$ independently in probability more than $(1-\d)$,  for which the bounds (\ref{eq:dLerror}), (\ref{eq:dLerror2}) and (\ref{eq:dLerror3}) are deterministic. Note that the sequence $n_m$ is infinite in probability 1.

Denote $\overline{M}_n=M_{\overline{\l}_n}$. By assumption $\norm{\boldsymbol{\l}^{M_{\overline{\l}_n}}}_1=o(N_n^{1/2})$, where  $\norm{\boldsymbol{\l}^{\overline{M}_n}}_1$ is defined in (\ref{eq:norm_l}). Let us analyze the dependency of the bounds (\ref{eq:dLerror}), (\ref{eq:dLerror2}) and (\ref{eq:dLerror3}) on $\overline{M}_n$ and $N_n$.
Note that the dependency of (\ref{eq:dLerror}), (\ref{eq:dLerror2}) and (\ref{eq:dLerror3}) on $\l$ does not affect the validity of Definitions \ref{As_convS} and \ref{def:pointwise convergent}, and \ref{quadrature_sequence2}. The asymptotic analysis in $\overline{M}_n$ and $N_n$ in these definitions is for fixed $\l$.

The bound (\ref{eq:dLerror}) depends on $\overline{M}_n$ as follows:
\[
\begin{split}
 \norm{H}_2^2  &       = \int_x\int_{x_0} \abs{\sum_{m=1}^{\overline{M}_n}\phi_m(x_0)\l_m\phi_m(x) }^2 dx_0 dx        \\
	 &       \leq  \Big(\sum_{m=1}^{\overline{M}_n}\abs{\l_m}\sqrt{\int_x\int_{x_0}\abs{\phi_m(x_0)\phi_m(x)}^2 dx_0 dx} \Big)^2       \\
	 &      = \Big(\sum_{m=1}^{\overline{M}_n}\abs{\l_m}\sqrt{\int_{x_0}\abs{\phi_m(x_0)}^2dx_0} \sqrt{\int_x\abs{\phi_m(x)}^2 dx  } \Big)^2        \\
	 &      = \Big(\sum_{m=1}^{\overline{M}_n}\abs{\l_m} \Big)^2   =\norm{\boldsymbol{\l}^{\overline{M}_n}}_1^2.       
\end{split}
\]
Thus, since the bound (\ref{eq:dLerror}) also depend multiplicatively on $N_n^{-1/2}$, any choice of $M_{n}$ such that  $\norm{\boldsymbol{\l}^{M_{\overline{\l}_n}}}_1=o(N_n^{1/2})$  makes the bound converge to zero as $n\rightarrow\infty$, and guarantees Definition \ref{As_convS}. 

Note that the bounds (\ref{eq:dLerror2}) and (\ref{eq:dLerror3}) do not depend on $M_{n}$.
The bound (\ref{eq:dLerror2}) proves that Definition \ref{quadrature_sequence2} is satisfied for the subsequence $n_m$, which proves Definitions \ref{As_RS} and \ref{As_RS2}.
For the relation between the bound (\ref{eq:dLerror3}) and Definition \ref{def:pointwise convergent},  we use Lemma \ref{Lem1}, where (\ref{eq:Lem11}) converges to zero in the subsequence $n_m$, and (\ref{eq:Lem12}) converges to zero deterministically. This proves Definition \ref{def:pointwise convergent} for the subsequence $n_m$.

\end{proof}

\subsection{\textcolor{black}{Proof of Proposition \ref{Prop:MC_rate}}}

By the proof of Proposition \ref{Prop10},  
	\begin{equation}
	\label{eq:temp3NE4}
\begin{split}
\norm{g(\mathcal{L})P(\lambda) - R_n^{\lambda}g(\boldsymbol{\Delta}_n) S_n^{\lambda} P(\lambda)} 
 \leq  & DC \sqrt{\#\{\l_j\leq\l\}_j} \norm{S_n^{\lambda_M}\mathcal{L} P(\lambda_M) -\boldsymbol{\Delta} S_n^{\lambda_M} P(\lambda_M)} \\
 &  + \norm{g}_{\mathcal{L},M}\norm{P(\lambda_M) - R_n^{\lambda_M} S_n^{\lambda_M}P(\lambda_M)}.
\end{split}
\end{equation}
By the proof of Theorem \ref{Theo:probMC}, 
\[\norm{H_{\l_n}}_2 \leq \norm{\boldsymbol{\l}^{\overline{M}_n}}_1 \leq B N_n^{1/2-\alpha}.\]
By Lemma \ref{T:quad1}, in probability more than $(1-\d)$,
\[\norm{S_n^{\l}\cL P(\l) - \DD_n S_n^{\l}P(\l)}_{L^2(G_n)} \leq 
\frac{1}{w_{\min}}\norm{H_{\l_n}}_{L^2(\mathcal{M}^2; \mu\times\mu)}  C_{\l}
\d^{-1/2} N_n^{-1/2}. \]
where $C_{\l}$ is defined by (\ref{eq_Cl})
\[\forall g\in PW(\l). \quad \norm{g}_{\infty}  \leq C_{\l}\norm{g}_{2}.\]
So
\[\norm{S_n^{\l}\cL P(\l) - \DD_n S_n^{\l}P(\l)}_{L^2(G_n)} \leq 
\frac{1}{w_{\min}}B N_n^{1/2-\alpha}  C_{\l}
\d^{-1/2} N_n^{-1/2} \]
\begin{equation}
\label{eq:temp456bd}
    =\frac{B}{w_{\min}} C_{\l}
\d^{-1/2}  N_n^{-\alpha}.
\end{equation}

We can bound $C_{\l}$ as follows. Let $g=\sum_{n=1}^{M_{\l}}c_n\phi_n$.
For every $x$ in $\mathcal{M}$,
\[
\begin{split}
  \norm{g}_2 & =   \sqrt{\sum_{n=1}^{M_{\l}} \abs{c_n}^2}
\geq M_{\l}^{-1/2}\sum_{n=1}^{M_{\l}} \abs{c_n} \\
 &  \geq \frac{1}{\max_{x,n\in\{1,\ldots,M_{\l}\}}\abs{\phi_n(x)}} M_{\l}^{-1/2}\sum_{n=1}^{M_{\l}} \abs{c_n{\phi_n(x)} }\\
 & \geq\frac{1}{\max_{m\leq M_{\l}}\norm{\phi_m}_{\infty}} M_{\l}^{-1/2}\abs{\sum_{n=1}^{M_{\l}} c_n{\phi_n(x)} }\\
 &  =\frac{1}{\max_{m\leq M_{\l}}\norm{\phi_m}_{\infty}} M_{\l}^{-1/2}\abs{g(x)}.
\end{split}
\]
Hence,
\[\norm{g}_2 \geq \frac{1}{\max_{m\leq M_{\l}}\norm{\phi_m}_{\infty}} M_{\l}^{-1/2}\norm{g}_{\infty}.\]
Therefore, the optimal bound $C_{\l}$ satisfies
\begin{equation}
\label{eq:temp_dmp3kjdfbg}
  C_{\l} \leq \max_{m\leq M_{\l}}\norm{\phi_m}_{\infty} M_{\l}^{1/2}.  
\end{equation}
Note that for classical Fourier bases we have $\max_{m\leq M_{\l}}\norm{\phi_m}_{\infty}=1$.

Next we analyze the second term in the bound of Proposition \ref{Prop10}.
By  (\ref{eq:temp4y375347l}), we can write in the basis $\{\phi_m\}_{m=1}^{M_{\l}}$
\begin{equation}
   {\bf R}_n^{\l}{\bf S}_n^{\l}\mathbf{c} = \ip{\boldsymbol{\Phi}_n}{\boldsymbol{\Phi}_n} \mathbf{c}. 
   \label{eq:temp_46_5}
\end{equation}
By Lemma \ref{T:quad2},
in probability more than $(1-\d)$,
\begin{equation}
    \label{temp_5475h}
\norm{\ip{\boldsymbol{\Phi}_n}{\boldsymbol{\Phi}_n} - \mathbf{I}}_{F(\CC^{M_{\l}\times M_{\l}})} \leq C' \d^{-1/2} N_n^{-1/2} .
\end{equation}
Here,
\[C' = \frac{M_{\l}}{\sqrt{w_{\min}}}\max_{m\leq M_{\l}}\norm{\phi_m}_{\infty}^2, \]
and $M_{\l} = {\rm dim}(PW(\l))$ as before.

By the fact that the induced $l_2$ norm is bounded by the Frobenius norm, 
\begin{equation}
\label{eq:temp_345hg}
   \norm{P(\l) - R_n^{\l}S_n^{\l}P(\l)}_2 \leq \norm{\ip{\boldsymbol{\Phi}_n}{\boldsymbol{\Phi}_n} - \mathbf{I}}_{F(\CC^{M_{\l}\times M_{\l}})}. 
\end{equation}

We now plug (\ref{eq:temp456bd}), (\ref{eq:temp_dmp3kjdfbg}), (\ref{temp_5475h}) and (\ref{eq:temp_345hg}) in  (\ref{eq:temp3NE4}).
The bound $C$ on the norm of interpolation $\norm{R_n^{\l}}$ is close to 1 by (\ref{eq:temp_46_5}), (\ref{temp_5475h}), and the fact that $\norm{R_n^{\l}}=\norm{(S_n^{\l })^*}=\norm{S_n^{\l}}$. Therefore, for large enough $n$,  $C<2$ (in the same event of (\ref{temp_5475h})). This leads to (\ref{eq:nonAsBoundLast}) in probability more than $1-2\d$, since, in the worst case, not satisfying the bound (\ref{eq:temp456bd}) and not satisfying the bound (\ref{temp_5475h}) are disjoint events.

\subsection{Proof of Claim \ref{ReLU_rexpects1}}
\label{ReLU_rexpects1A}

Let $\e>0$ and $f\in PW(\l)$. Let $g\in PW(\l)$ such that $\norm{f-g}_2<1$. For any $N\in\NN$
\[
\begin{split}
\sum_{\abs{n}> N} n^{1+\k}\abs{\ip{\rho(g)}{\phi_n}}^2 & = \sum_{\abs{n}>N} \abs{n}^{-1+\k} n^{2}\abs{\ip{\rho(g)}{\phi_n}}^2\\
& \leq  N^{-1+\k}\sum_{n=-\infty}^{\infty}  n^{2}\abs{\ip{\rho(g)}{\phi_n}}^2  \\
&  \leq  N^{-1+\k} M_{\l}^2\norm{g}_2^2 \leq N^{-1+\k} M_{\l}^2(\norm{f}_2^2+1).
\end{split}
\]
Similarly,
\[ \sum_{\abs{n}> N} \abs{n}^{1+\k}\abs{\ip{\rho(g)}{\phi_n}}^2  \leq N^{-1+\k} M_{\l}^2(\norm{f}_2^2+1). \]
Now, choose $N=N(\e)$ such that $N^{-1+\k} M_{\l}^2(\norm{f}_2^2+1)< \e/8$.
Moreover, choose $\d < \frac{\e}{2N(\e)^{1+\k}}$. Now, if $\norm{f-g}<\min\{\d,1\}$ we have 
\[\sum_{n=-N}^N n^{1+\k} \abs{\ip{\rho(f)-\rho(g)}{\phi_n}}^2 \leq  N^{1+\k} \sum_{n=-\infty}^{\infty}  \abs{\ip{\rho(f)-\rho(g)}{\phi_n}}^2 = N^{1+\k} \norm{\rho(f)-\rho(g)}_2^2\]
and by the fact that $\rho$ is contractive, 
\[\sum_{n=-N}^N n^{1+\k} \abs{\ip{\rho(f)-\rho(g)}{\phi_n}}^2 \leq  N^{1+\k} \norm{f-g}_2^2 < \e/2. \]
Altogether,
\[
\begin{split}
\norm{\rho(f)-\rho(g)}_{1+\k,2}^2 & \leq \sum_{n=-N}^N \abs{n}^{1+\k} \abs{\ip{\rho(f)-\rho(g)}{\phi_n}}^2 \\
 & \ \ \ + 4 \max\left\{\sum_{\abs{n}> N} \abs{n}^{1+\k}\abs{\ip{\rho(f)}{\phi_n}}^2,\sum_{\abs{n}> N} \abs{n}^{1+\k}\abs{\ip{\rho(g)}{\phi_n}}^2\right\} < \e,
\end{split}
\]
which proves continuity.

\vskip 0.2in
\bibliographystyle{plain}
\bibliography{ref_Transferability,misconception}


\end{document}